\newtheorem{thm}{Theorem}[section]
\newtheorem{prop}{Proposition}[section]
\newtheorem{rem}{Remark}[section]
\newtheorem{lem}{Lemma}[section]
\newtheorem{pro}{Property}
\newtheorem{examp}{Example}
\newtheorem{definition}{Definition}
\renewenvironment{proof}{{\bfseries \noindent Proof} }{ \qed \\}
\begin{document}

\def\M{\mathcal{M}}
\def\R{\mathbb{R}}                   
\def\Z{\mathbb{Z}}                   
\def\Q{\mathbb{Q}}                   
\def\C{\mathbb{C}}                   
\def\N{\mathbb{N}}                   
\def\uhp{{\mathbb H}}                
\def\A{\mathbb{A}}
\def\vol{\textbf{vol}}

\newcommand{\cuater}{\mathbf{\mathbb{H}}}
\newcommand{\complex}{\mathbf{\mathbb{C}}}
\newcommand{\reales}{\mathbf{\mathbb{R}}}
\newcommand{\racio}{\mathbf{\mathbb{Q}}}
\newcommand{\ente}{\mathbf{\mathbb{Z}}}
\newcommand{\natu}{\mathbf{\mathbb{N}}}
\newcommand{\llave}[1]{\left\{ #1\right\}}
\newcommand{\norma}[1]{\left \| #1 \right \|}
\newcommand{\vabs}[1]{\left| #1\right|}
\newcommand{\Sum}[2]{\sum\limits_{#1}^{#2}}
\newcommand{\Lim}[1]{\lim\limits_{#1}}
\newcommand{\Liminf}[1]{\liminf\limits_{#1}}
\newcommand{\Limsup}[1]{\limsup\limits_{#1}}
\newcommand{\Inf}[1]{\inf\limits_{#1}}
\newcommand{\Sup}[1]{\sup\limits_{#1}}
\newcommand{\corch}[1]{\left[ #1\right]}
\newcommand{\paren}[1]{\left( #1\right)}
\newcommand{\QED}{\hfill \ensuremath{\Box}}
\renewcommand{\baselinestretch}{1}
\providecommand{\keywords}[1]{\textbf{Keywords: } #1}
\providecommand{\classifications}[1]{\textbf{Mathematics Subject Classification:} #1}


\begin{center}
{\LARGE\bf  Hodge Laplacians and Hodge Diffusion Maps
}

\vspace{.25in} {\large {{\sc  Alvaro Almeida Gomez}}\footnote{
Universidad de Chile, Centro de Modelamiento Matemático (CNRS IRL2807), Beaucheff 851, Santiago, Chile,
{\tt alvaroalmeidagomez182@gmail.com}}
\large{{\sc and Jorge Duque Franco}}\footnote{ Universidad de Chile, Departamento de Matemáticas, Campus Juan Gómez Millas, Las Palmeras 3425, Santiago, Chile,
{\tt jorge.duque@algebraicgeometry.cl}}}
\end{center}


\begin{abstract}
We introduce Hodge Diffusion Maps, a novel manifold learning algorithm designed to analyze and extract topological information from high-dimensional data-sets. This method approximates the exterior derivative acting on differential forms, thereby providing an approximation of the Hodge Laplacian operator. Hodge Diffusion Maps extend existing non-linear dimensionality reduction techniques, including vector diffusion maps, as well as the theories behind diffusion maps and Laplacian Eigenmaps. Our approach captures higher-order topological features of the data-set by projecting it into lower-dimensional Euclidean spaces using the Hodge Laplacian. We develop a theoretical framework to estimate the approximation error of the exterior derivative, based on sample points distributed over a real manifold. Numerical experiments support and validate the proposed methodology. 

\end{abstract}

\keywords{Machine learning, Pattern recognition, Dimensionality reduction, Diffusion maps, Hodge Theory, Hodge Laplacians, Exterior derivative.}\\

\classifications {Primary: 68P05, 68T10, 68T45, 68W25; Secondary: 20G10, 35J05, 58J35, 58A14 .}

\section{Introduction}
\label{intro}

Dimensionality reduction is an essential technique for analyzing complex, high-dimensional datasets. It helps uncover important patterns and structures while overcoming the challenges of the \textbf{curse of dimensionality}. One popular nonlinear dimensionality reduction method is Diffusion Maps (DM) \cite{coifman2006diffusion,lafon2004diffusion}, a graph-based kernel method. Diffusion Maps captures the intrinsic geometry of data through a nonlinear embedding by using diffusion processes on a graph. This approach measures local connectivity between data points, revealing both local and global structures. The method is based on the manifold learning assumption, which assumes that the dataset consists of sample points distributed over a smooth manifold, and uses the Laplace-Beltrami operator to capture the topological information of the data through the diffusion process.

Vector Diffusion Maps (VDM) \cite{singer2012vector} extend the theory of Diffusion Maps by replacing real-valued function weights with vector-valued functions. This approach captures connectivity by considering linear orthogonal transformations that encode changes of basis between tangent spaces at different data points, while simultaneously approximating parallel transport. By incorporating these geometric relationships, VDM extract richer structural information from the dataset. The theory of VDM has been applied in various fields, including cryo-electron microscopy \cite{singer2011three,TOADER2023168020}. The methodology is rooted in the connection Laplacian, which operates on vector fields and is approximated using a discrete formulation of the connection Laplacian operator. This operator is related to the first-order Hodge Laplacian via the Weitzenböck identity.  

The $k$-th Hodge Laplacian generalizes the Laplace-Beltrami operator and plays a crucial role in capturing the topology of a manifold through the $k$-th De Rham cohomology group. Recent studies have extended the Hodge Laplacian to graphs and combinatorics, applying it to various fields such as ranking, game theory \cite{ribando2024combinatorial, jiang2011statistical, lim2020hodge}, and biomolecular structure analysis \cite{wei2022hodge}. In this paper, we aim to approximate the Hodge Laplacian, defined over a real manifold. 

While Vector Diffusion Maps leverage the first-order Hodge Laplacian to extract geometric information from data, an open question remains, as posed by \cite{singer2012vector}: How can higher-order geometric structures of a dataset be captured using the Hodge Laplacian? One of the main goals of this paper is to address this question. 

In this work, we introduce \textbf{Hodge Diffusion Maps (HDM)}, a novel extension of Vector Diffusion Maps that utilizes the $k$-th order Hodge Laplacian for any $k\geq1$. This approach overcomes the limitations of traditional methods such as Diffusion Maps and Vector Diffusion Maps, which primarily focus on low-order geometric features and often miss important higher-order structures. HDM provides a powerful framework for nonlinear dimensionality reduction that preserves the intrinsic geometry of the data. By projecting the dataset onto the dot product of the leading eigenforms of the Hodge Laplacian, our method captures meaningful geometric patterns and reveals the underlying structure of the data at multiple scales

In this paper, we first construct the  Hodge Laplacian over a Riemanian manifold  by inferring the exterior derivative operator on differential forms. The main technical contribution is the approximation of the exterior derivative, defined over differential forms, using sample points distributed on an unknown manifold with an unknown intrinsic geometry. We provide analytical estimates for this approximation, which in turn allow us to approximate the Hodge Laplacian operator. This approximation generalizes the gradient operator approximation presented in \cite{10227282}, which uses asymmetric kernels to infer the diffusion properties of the dataset \cite{gomez2021diffusion,he2023diffusion,he2023learning}. We summarize the key contributions of this work as follows:

\begin{itemize}
    \item We propose an approximation of the exterior derivative operator based on sample point distributions, with the construction of the approximation independent of the dataset's distribution. Error bound estimates for this approximation are provided in \cref{teoremaprincipal}.
    
    \item Based on this exterior derivative approximation, we construct a sample-based approximation of the Hodge Laplacian operator acting on differential forms defined over the manifold representing the dataset.
    
    \item Using the approximation of the Hodge Laplacian, we introduce the Hodge Diffusion Maps algorithm, which projects the dataset onto the dot products of the eigenforms of the Hodge Laplacian. This methodology extends the Vector Diffusion Maps algorithm, which is defined over first-order differential forms, to higher-order differential forms for \( k \geq 1 \).
\end{itemize}

The paper is structured as follows: In \cref{mapasdedifusion}, we  briefly review the theory of Vector Diffusion Maps and the Hodge theory for real manifolds. \cref{derivadaexterior} details how to approximate the exterior derivative operator using a sample collection of points, as presented in \cref{teoremaprincipal}. In \cref{matrixcomputationfromula}, we apply \cref{teoremaprincipal} to give a matrix-based approximation of the exterior derivative operator and provide the numerical implementation in \cref{algoHMH2}. \cref{laplacianos} builds upon the results in \cref{matrixcomputationfromula} to compute the Hodge Laplacian operator and define the Hodge Diffusion Maps and the Hodge Diffusion distance. \cref{experimentos} presents numerical experiments on synthetic data, comparing the proposed methodology against Diffusion Maps, PCA, and t-SNE algorithms. \cref{conclusiones} presents the conclusions of the paper, highlighting future directions and potential applications of the proposed methodology. Finally, \cref{aprendiceformas,apendicepruebaprincipal} provide the technical details related to the proof of the main result in \cref{teoremaprincipal}.

\section{Preliminaries}
\label{mapasdedifusion}
Throughout this paper, we denote by $\mathcal{M}$ a closed (i.e., compact without boundary) Riemannian manifold of dimension $d$, embedded in the ambient space $\mathbb{R}^n$. For a detailed exposition of diffusion map theory, we refer the reader to \cite{coifman2006diffusion,lafon2004diffusion}, and for an introduction to Hodge Laplacians, to \cite{warner1983foundations}. 
 
\subsection{Diffusion Maps}

We briefly explain the Diffusion Maps and Vector Diffusion Maps algorithms. Given a set of data points, \( X = \{x_1, x_2, \dots, x_N\} \subseteq \mathcal{M} \), the algorithm follows these steps:

First, we create a graph by measuring the similarity between pairs of data points. In classical diffusion maps, the weight \( g_{ij} \) of the edge between points \( x_i \) and \( x_j \) is calculated using the Gaussian Kernel:
    \[
    g_{ij} = \exp\left(-\frac{\|x_i - x_j\|^2}{\epsilon}\right)
    \]
    where \( \epsilon \) is a scaling parameter. In the case of vector diffusion maps, the weights \( \overline{g}_{ij} \) are calculated using an orthogonal transformation \( ORT_{ij} \) along with the Gaussian kernel:
    \[
    \overline{g}_{ij} = ORT_{ij}  g_{ij}
    \]
The next step is to normalize the similarity matrix. In classical diffusion maps, the normalization is given by:
    \[
    P_{ij} = \frac{g_{ij}}{\sum_k g_{ik}}
    \]
    In vector diffusion maps, the normalization is:
    \[
    P_{ij} = ORT_{ij} \frac{g_{ij}}{\sum_k g_{ik}}
    \]
    This matrix \( P \) represents the probabilities of moving from one point to another in the diffusion process. In the final step, the algorithm then computes the eigenvalues and eigenvectors of the matrix \( P \). Let \( \phi_1, \phi_2, \dots, \phi_k \) be the eigenvectors corresponding to the largest eigenvalues. The diffusion maps (and vector diffusion maps) project the data points into a lower-dimensional space based on the entries of these eigenvectors:
    \[
    \psi_i = \left(\lambda_1 \phi_1(i), \lambda_2 \phi_2(i), \dots, \lambda_k \phi_k(i)\right)
    \]
    where \( \lambda_i \) are the eigenvalues and \( \phi_i(i) \) is the \( i \)-th entry of the eigenvector \( \phi_i \).
This process reduces the data’s dimensionality while maintaining its intrinsic geometric structure. Additionally, the matrices \( P_{ij} \) approximate the Laplace-Beltrami and Connection Laplacian operators.
    
In comparison with classical diffusion maps and vector diffusion maps, our proposed method considers the spectral decomposition of matrices of the form:
\[
 P_{ij} = \frac{1}{\sum_k g_{ik}} \det\left[g_{ij} L_i (x_i - x_j)^T, L_{ij}\right]
\]
 where \( L_i \) and \( L_{ij} \) are linear transformations depending on the indices \( i \) and \( j \), respectively. We use this spectral decomposition to extract topological information from the dataset. Additionally, we show that the form of these matrices can approximate the exterior derivative operator, which is explained in more detail in Section \cref{matrixcomputationfromula}.

\subsection{Hodge Laplacians}
\label{sub:2Hodge}

The fundamental object in the Hodge theory for a real manifolds $\mathcal{M}$ are the Hodge Laplacians, sometimes also called Laplace–de Rham operator. Hodge Laplacians $\Delta^k$ are linear operators defined over the set $k$-differential forms $\Omega^k(\mathcal{M})$.
The set of Hodge Laplacians generalizes the Laplace-Beltrami operator $\Delta$ in the sense that for $k=0$, the two notions of Laplacian coincide $\Delta^0=\Delta,$ up to a sign. The importance of these operators lies in the fact that their kernels correspond to algebraic invariants that encode geometric and topological information about the manifold. Let us briefly recall the definition of the Hodge Laplacian. Let $(\M,g)$  be an oriented Riemannian manifold of dimension $d$. A Riemannian metric on a smooth manifold is a smooth assignment of an inner product to each tangent space. The Riemannian metric $g$ induces an isomorphism $T_x\M\simeq T_x^*\M$ for every $x\in\M,$ allowing the inner product in $T_x\M$ to be naturally transferred to $T_x^*\M$. This inner product in $T_x^*\M$ extends to the exterior algebra $\bigwedge^kT_x^*\M$ via the determinant:

\begin{equation}
\label{eq:2403}
\langle w_1\wedge\cdots\wedge w_k,v_1\wedge\cdots\wedge v_k \rangle:=\text{det}[\langle w_i,v_j \rangle], \;\;w_i,v_j\in T_x^*\M.
\end{equation}
Using the metric and the orientation, one defines the Hodge star operator 
$$
\star:\Omega^{k}(\M)\to \Omega^{d-k}(\M) 
$$
which, for a $k$-differential form $\omega$, is uniquely determined by the relation 
$$
\eta\wedge(\star\omega)=\langle \eta,\omega\rangle dVol
$$
for every $k$-differential form $\eta$, where $\langle \eta,\omega\rangle$ is the pointwise inner product defined by \cref{eq:2403} and $dVol$ is the volume form induced by the Riemannian metric $g$. The adjoint of the exterior derivative, $\mathbf{d}_k^{*}:\Omega^{k}(\M)\to\Omega^{k-1}(\M)$,  is given by

$$
\mathbf{d}_k^{*}:=(-1)^{d(k+1)+1}\star\mathbf{d}_{d-k}\star. 
$$
The Hodge Laplacian is then defined as 
$$
\Delta^k:=\mathbf{d}_{k+1}^{*}\mathbf{d}_k+\mathbf{d}_{k-1}\mathbf{d}_k^{*}
$$
which is an endomorphism of $\Omega^{k}(\M).$  The Hodge Laplacian provides important information about the cohomology elements, which intuitively correspond to $k$-dimensional holes. This follows from the Hodge theorem, which states that the space of $k$-harmonic forms  
$$\mathcal{H}^k(\mathcal{M}):=ker(\Delta^k)$$
is isomorphic to the $k$-th de Rham cohomology group:
\begin{equation*}
     H_{dR}^{k}(\mathcal{M}) \simeq\mathcal{H}^k(\mathcal{M}).
\end{equation*}
Thus, the Hodge Laplacian, which is constructed using the exterior derivative \(\mathbf{d}_{k}\), encodes topological information about the manifold. In this paper, we focus on inferring the exterior derivative from an observable set of sample points distributed over the manifold.

\section{Exterior derivative approximation}
\label{derivadaexterior}
In this section, we extend the Diffusion Maps method from smooth functions to \( k \)-differential forms, aiming to approximate the exterior derivative operator on a manifold \(\mathcal{M}\) using sample points from \(\mathcal{M}\).  Our approach builds upon and generalizes the gradient estimation introduced in \cite{10227282}. We assume that \(\mathcal{M}\) is a compact, \(d\)-dimensional Riemannian submanifold of \(\mathbb{R}^n\) without boundary, where the Riemannian metric on \(\mathcal{M}\) is induced by the ambient space.

\subsection{Differential forms and differential arrays}

We recall that a $k$-differential  form is a smooth section $\omega: \mathcal{M} \to \bigwedge^k T^*\mathcal{M} $  defined from the manifold $\mathcal{M}$ to the $k$-th exterior power of the cotagent bundle $T^*\mathcal{M}$, such that for every $x \in \mathcal{M}$, the value of $\omega$ at $x$ is an element $\omega_x \in \bigwedge^k T^*_{x}\mathcal{M}.$ In other words, at each point $x\in\mathcal{M},$ $\omega_x$ is a linear functional 

$$\omega_x:\bigwedge^k T_x\mathcal{M} \to  \R,$$
or equivalently, an $k$-alternating form on the tangent space $T_x\mathcal{M},$

$$\omega_x:\underbrace{T_x\mathcal{M}\times \cdots \times T_x\mathcal{M}}_{k-times} \to \mathbb{R}.$$
See \cref{def:form2811} for further details. The set of all $k$-differential forms on $\mathcal{M}$ is denoted by $\Omega^k( \mathcal{M})$. Now, consider a local coordinate system $(v_1,\cdots,v_d)$ on $\mathcal{M}.$ In these coordinates, any $k$-differential form $\omega\in\Omega^k(\mathcal{M})$ can be expressed as    

$$\omega= \sum_{I} a_I dv_{i_1} \wedge  \cdots \wedge dv_{i_k}.$$
The exterior derivative $\mathbf{d}_k: \Omega^k(\mathcal{M}) \to \Omega^{k+1}(\mathcal{M}),$ acts on differential forms, and in these coordinates, it is given by
\begin{equation*}
  \mathbf{d}_k \omega = \mathbf{d}_k\paren{\sum_{I} a_I dv_{i_1} \wedge  \cdots \wedge dv_{i_k}}= \sum_I \sum_j \frac{\partial a_I}{\partial v_{j}}  dv_{j} \wedge dv_{i_1} \wedge  \cdots \wedge dv_{i_k},
\end{equation*}
While differential forms are abstract mathematical objects, we emphasize that, for computational purposes, we will represent them using alternating arrays, see \cref{def:alternatingmulti} . From \cref{proposicion:universal}, we know that $\bigwedge^k(T_x^*\mathcal{M})\simeq \Theta^k(T_x\M)$, allowing us to introduce the following space: 

\begin{equation*}
    \Theta^{k}(T\mathcal{M})= \bigsqcup_{ x \in \mathcal{M}} \Theta^{k}( T_{x}\mathcal{M})
\end{equation*}
which admits a vector bundle structure such that $\bigwedge^k(T^*\M)\simeq \Theta^{k}( T\mathcal{M})$; see \cite[Lemma 10.6]{lee2012introduction} for technical details. A section $W: \mathcal{M} \to \Theta^{k}( T\mathcal{M}),$ called a $k$-differential array, is, via this isomorphism, simple a differential form on $\M$. We denote the space of $k$-differential arrays on $\M$ by $\Theta^{k}(\mathcal{M})$, which corresponds to the space of differential forms $\Omega^{k}(\M),$ where each $W_x$ is regarded as a $k$-alternating array in $T_x\M$. See \cref{appeA} for a detailed explanation of this identification.  

From now on, we will use differential arrays, but the reader should keep in mind that they are fundamentally differential forms, represented in a way that is more suitable for numerical computations.

\subsection{Approximation of the exterior derivative using sample points}

The goal of this section is to approximate the exterior derivative using sample points distributed on a manifold $\mathcal{M}$ according to a smooth density $q(x)$. To achieve this, we first consider the heat kernel on $\mathcal{M} \times \mathcal{M}$, given by
\begin{equation*}
   e^{\frac{-\| y-x \|^2}{2 t^2}},
\end{equation*}
where \(\| \cdot \|\) denotes the Euclidean norm. We then introduce the following normalization: 
\begin{equation}
\label{funcionnormalizacion}
    d_t(x)=\int_{\mathcal{M}} e^{\frac{-\| y-x \|^2}{2 t^2}} q(y) dVol(y),
\end{equation}
where $dVol$ is the volume form induced by the Riemannian metric. Using this, we define the asymmetric vector-valued kernel
\begin{equation}
\label{ecuacionkernelvec}
   K_{t}(x,y)=(y-x)\frac{e^{\frac{-\| y-x \|^2}{2 t^2}}}{d_t(x)}.
\end{equation}
For fixed $x,y,t,$ each vector $K_{t}(x,y)\in\R^n$  is a $1$-differential array and, under the respective identification, can also be regarded as a $1$-alternating array. Consequently, for any $W\in\Theta^{k}(\mathcal{M})$, it makes sense to consider
$$
K_t(x,y) \wedge W(x))(n_1, n_2, \cdots, n_{k+1})
$$
for fixed $x\in\mathcal{M}$, which defines a $(k+1)$-alternating arrays on $\R^n$. Now, for every differential array $W\in\Theta^{k}(\mathcal{M})$, we define a differential array $\mathbf{P_t}W\in\Theta^{k+1}(\R^n)$ by

\begin{equation*}
    \begin{array}{lrc}
\mathbf{P_t} W(x)(n_1, n_2, \dots, n_{k+1}) = \\
\int_{\mathcal{M}}\left(K_t(x,y) \wedge (W(y) - W(x))\right(n_1, n_2, \cdots, n_{k+1}) q(y) \, dVol(y), 
    \end{array}
\end{equation*}
where \(1 \leq n_i \leq n\). Here, the integral is well-defined since for fixed $x$ and $(n_1,\dots,n_{k+1}),$ 
$$\left(K_t(x,y)\wedge (W(y) - W(x))\right(n_1, n_2, \cdots, n_{k+1})$$
can be regarded as a real-valued function of $y$ defined on $\M.$ Note that in this integral, we interpret $W(y)$ as being defined in the same space as $W(x)$, namely on $T_x \M.$ Thus, $\mathbf{P_t}$ defines a linear operator on $\Theta^{k}( \mathcal{M})$. For any $W\in\Theta^k(\M)$ and $x\in\M$, we denote     
\begin{equation}
\label{operadorP}
    \mathbf{P_t} W (x)= \int_{\mathcal{M}} K_t(x,y) \wedge (W(y)-W(x)) q(y) dVol(y)
\end{equation}

\begin{rem}
     $\mathbf{P_t}W(x)$ defines a $(k+1)$-alternating array on $\R^n$ but not necessary a $(k+1)$-alternating array on $T_x \mathcal{M}$. However, according to \cref{observacionprojectionalternada}, its orthogonal projection onto $\bigwedge^{k+1} T_x\mathcal{M}$, denoted by 
     $$\mathcal{P}_{\bigwedge^{k+1} T_x \mathcal{M}} (\mathbf{P_t} W (x))$$
    defines a $(k+1)$-alternating form on $T_x \mathcal{M}.$  
\end{rem}

The following theorem establishes the relation between the operator $\mathbf{P_t}$ and the exterior derivative over $k$-differential arrays.

\begin{thm}
   \label{teoremaprincipal}
   Let $W \in \Theta^k(\mathcal{M})$ be a $k$-differential array, and let $x \in \mathcal{M}.$ For any $\delta$ satisfying

   \begin{equation}
   \label{deltacondicion}
    \frac{1}{2} < \delta < 1- \frac{d}{2(d+2)}<1.
\end{equation}
where $d$ is the dimension of $\M,$ the following estimate holds:
   \begin{equation*} 
       \mathcal{P}_{\bigwedge^{k+1} T_x \mathcal{M}} (\mathbf{P_t}W(x)) = t^2\left(\mathbf{d}_kW(x)\right) +O(t^f)
   \end{equation*}
where the exponent $f$ is given by  

$$\min \llave{4 \delta -2,2(1-\delta)(d+2)}.$$ 
In particular, taking the limit as $t\to 0^+,$ we obtain

 \begin{equation}
 \label{ecuacionprincipal}
      \lim_{t \to 0^{+}}\frac{1}{t^2} \mathcal{P}_{\bigwedge^{k+1} T_x \mathcal{M}} (\mathbf{P_t} W (x)) = \mathbf{d}_k W(x).
 \end{equation}
\end{thm}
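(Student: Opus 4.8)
The plan is to read the identity as a Laplace-type asymptotic expansion of the integral \cref{operadorP}, concentrated near the diagonal $y=x$, and to extract the exterior derivative from the second-order Gaussian moments. First I would fix $x\in\M$ and pass to geodesic normal coordinates centred at $x$, writing $y=\exp_x(u)$ with $u\in T_x\M\cong\R^d$. In these coordinates the standard expansions $\|y-x\|^2=\|u\|^2+O(\|u\|^4)$, $dVol(y)=(1+O(\|u\|^2))\,du$, and $y-x=\iota_*u+\tfrac12\,\mathrm{II}(u,u)+O(\|u\|^3)$ hold, where $\iota_*u$ is the tangential part of the ambient displacement and $\mathrm{II}$ is the (normal-valued) second fundamental form. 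Substituting these into \cref{operadorP} and rescaling $u=ts$ turns the Gaussian weight into $e^{-\|s\|^2/2}$ on a ball of radius $t^{\delta-1}\to\infty$, which is the setting for the moment computation.

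The heart of the argument is the leading order. I would Taylor expand the two genuinely local factors, $W(\exp_x(u))-W(x)=\sum_j u_j\,\partial_j W(x)+O(\|u\|^2)$ and $q(\exp_x(u))=q(x)+\nabla q(x)\cdot u+O(\|u\|^2)$, and collect powers of $t$. The first- and third-order (odd) moments vanish by symmetry of the Gaussian, so the first surviving contribution comes from $\int_{\R^d}s_is_j\,e^{-\|s\|^2/2}\,ds=\delta_{ij}(2\pi)^{d/2}$. This produces in the numerator the term $t^{2}(2\pi t^2)^{d/2}q(x)\sum_i e_i\wedge\partial_iW(x)$, while the same computation applied to \cref{funcionnormalizacion} gives $d_t(x)=(2\pi t^2)^{d/2}q(x)\,(1+O(t^2))$; thus the density $q(x)$ and the factor $(2\pi t^2)^{d/2}$ cancel in the ratio, leaving $t^2\sum_i e_i\wedge\partial_iW(x)$. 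Since the coordinates are orthonormal at $x$, the identification $e_i\leftrightarrow dv_i$ converts $\sum_i e_i\wedge\partial_iW(x)$ into exactly the coordinate expression $\sum_i\sum_I\partial_{v_i}a_I\,dv_i\wedge dv_{i_1}\wedge\cdots\wedge dv_{i_k}=\mathbf d_kW(x)$ recalled earlier. Finally, the normal contributions carried by $\tfrac12\mathrm{II}(u,u)$ (and by interpreting $W(y)$ in the fibre over $x$) lie outside $\bigwedge^{k+1}T_x\M$ and are removed by $\mathcal P_{\bigwedge^{k+1}T_x\M}$, so they do not disturb the leading term.

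For the error estimate I would split the domain at radius $t^{\delta}$. On the far region $\|u\|>t^\delta$ the form differences are merely bounded while the Gaussian is small; estimating this tail yields a contribution that improves as the ball grows (small $\delta$), accounting for the exponent $2(1-\delta)(d+2)$. On the near region the error comes from truncating the Taylor expansions of $W$, $q$ and the geometric factors, together with the curvature corrections to $\|y-x\|^2$ and $dVol(y)$; this contribution improves as the ball shrinks (large $\delta$) and accounts for the exponent $4\delta-2$. For $\delta$ in the stated window $\tfrac12<\delta<1-\tfrac{d}{2(d+2)}$ both exponents are positive, giving the remainder exponent $f=\min\{4\delta-2,\,2(1-\delta)(d+2)\}$ of the theorem; the same bookkeeping shows this remainder is small relative to the $t^2$ leading term, so that dividing by $t^2$ and letting $t\to0^+$ yields \cref{ecuacionprincipal}.

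I expect the main obstacle to be conceptual rather than computational: reconciling the extrinsic, ambient-space wedge $K_t(x,y)\wedge(W(y)-W(x))$ with the intrinsic operator $\mathbf d_k$. Concretely, one must (i) justify interpreting $W(y)$ as an array over $T_x\M$ and control the discrepancy this introduces, and (ii) prove that the tangential part of the second-moment term is precisely $\sum_i dv_i\wedge\partial_iW$ while every normal and second-fundamental-form contribution is annihilated by $\mathcal P_{\bigwedge^{k+1}T_x\M}$. Keeping the alternating-array bookkeeping consistent through the wedge, the rescaling and the projection, and verifying that the density terms cancel rather than contributing a spurious drift, is the delicate part; the Laplace asymptotics themselves are routine once this structure is in place.
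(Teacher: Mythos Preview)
Your proposal is correct and follows essentially the same route as the paper: pass to normal coordinates at $x$, split the integral at radius $t^\delta$, use the Gaussian second moments to extract $\sum_j e_j\wedge\partial_jW(x)=\mathbf d_kW(x)$, cancel $q(x)$ against the expansion of $d_t(x)$, and kill the normal/second-fundamental-form contributions with $\mathcal P_{\bigwedge^{k+1}T_x\M}$. The paper packages the localization, tail, and moment computations into a short chain of preparatory lemmas rather than rescaling $u=ts$ directly, but the content of the estimates and the origin of the two exponents $4\delta-2$ and $2(1-\delta)(d+2)$ are exactly as you describe.
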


First, we note that the set of values for $\delta$ satisfying \cref{deltacondicion} is nonempty. Indeed, since $0< \frac{d}{2(d+2)} < \frac{1}{2}$, it follows that

$$\frac{1}{2}< 1- \frac{d}{2(d+2)} <1.$$
\cref{ecuacionprincipal} provides a method for estimating the exterior derivative $\mathbf{d}_k W$ 
based on a set of sample points observed on the manifold. Specifically, for small $t>0$, we have

 \begin{equation*}
     \frac{1}{t^2}\mathcal{P}_{\bigwedge^{k+1} T_x \mathcal{M}} (\mathbf{P_t} W (x)) \approx  \mathbf{d}_k W(x).
 \end{equation*}
By the Law of Large Numbers (LLN), the operator \( \mathbf{P_t} \) can be approximated as:

\begin{equation*}
    \mathbf{P_t} W (x) \approx \sum_{i=1}^N \bar{K}_t(x,x_i) \wedge (W(x_i)-W(x)),
\end{equation*}
where

\begin{equation}
\label{ecuacionkernelvec1}
   \bar{K}_{t}(x,y)=(y-x)\frac{e^{\frac{-\| y-x \|^2}{2 t^2}}}{\overline{d_t}(x)},\;\text{ and }\overline{d_t}(x)=\sum_{i=1}^N e^{\frac{-\| x_i-x \|^2}{2 t^2}}.
\end{equation}
Consequently, the exterior derivative at \( x_j \) can be estimated as:
\begin{equation}
\label{approximacionderivadaexterior}
  \mathbf{d}_k  W(x_j) \approx   \mathcal{P}_{\bigwedge^{k+1}T_x \mathcal{M}}\paren{\frac{1}{t^2}\sum_{i=1}^N \bar{K}(x_j,x_i)\wedge \left(W(x_i)-W(x_j)\right) }
\end{equation}
Observe that, by the Law of Large Numbers, both terms \( \mathbf{P_t} \) and \( \overline{d_t}(x) \) should have an average factor of \( 1/N \), but the variable \( N \) cancels out in the division involved in approximating the exterior derivative \( \mathbf{d}_k W(x_j) \).

Notably, the right-hand side of \cref{approximacionderivadaexterior} does not depend on the distribution $q(x)$. Instead, it is computed purely from the dataset $x_1, x_2, \dots, x_N$. This result enables the estimation of the exterior derivative independently of the underlying distribution $q(x),$ making it a robust method for data-driven differential analysis.

\section{Matrix-based computations} \label{matrixcomputationfromula}
In this section, we derive a matrix representation for the approximation of the exterior derivative given in \cref{approximacionderivadaexterior}. To achieve this, we first express the space of $k$-differential arrays in matrix form. The section is 
structured as follows: in \cref{differentialmatrix}, we present the matrix representation of the set of $k$-differential arrays \( \Theta^k(\mathcal{M}) \). Then, in \cref{computationmatrixextderivative}, we present a matrix formulation for computing the exterior derivative of a $k$-differential array. This matrix-based approach provides a practical framework for numerical implementation and analysis.

\subsection{Matrix representation of differential arrays} 
\label{differentialmatrix}

A key question in the proposed approach is how to reconstruct the space of $k$-differential arrays, denoted by \(\Theta^k(\mathcal{M})\), from a finite set of \(N\) sample points \(X= \{ x_1,x_2, \dots, x_N \}\). These points are realizations of $N$ independent and identically distributed (i.i.d.) random variables \(X_1, X_2, \dots, X_N\) drawn from a smooth \(q(\cdot)\) over an \textbf{unknown} \(d\)-dimensional manifold \(\mathcal{M}\). To address this, we first describe the local construction of $k$- differential arrays. Given a \( k \)-differential array \( W \), its evaluation at \( x_i \) can be expressed as

\begin{equation}
\label{localW}
   W(x_i)= \sum_{J} f_{J}(x_i) O_{J}(x_i),
\end{equation}
where the sum is taken over all \(k\)-tuples \(J = (j_1, j_2, \dots, j_k)\) with \(1 \le j_1 < j_2 < \dots < j_k \le d\), Here, \(f_{J}(x_i)\) are real-value function, and $O_J(x_i)$ is the orthonormal basis of \(\Theta^k(T_{x_i}\mathcal{M}) \) defined as the wedge product
\begin{equation}
\label{definicionnuevaO}
    O_{J}(x_i)= \frac{1}{\sqrt{k!}} O_{j_1}(x_i) \wedge \cdots \wedge O_{j_k}(x_i), 
\end{equation}
where $\{O_{1}(x_i), \cdots , O_{d}(x_i)\}$ is a orthonormal basis for the tangent space $T_{x_i}\M$. This orthonormal basis is constructed using the Local PCA methodology described in  \cite{singer2012vector} and \cite{singer2011orientability}. The local PCA algorithm computes an orthonormal basis for the tangent space \(T_{x_i}\mathcal{M}\) and the dimension of the manifold $\M$ as follows: 
\begin{enumerate}
    \item \textbf{Neighborhood Selection:} Given a positive parameter $r$, consider the set of points \(\{ x_{i_1}, x_{i_2}, \dots, x_{i_l} \}\)
    that lie within the local neighborhood
    \begin{equation*}
    U(x_i, r)=\{ y\in\M | \|x_i-y\|_{\R^n}< r \}.
\end{equation*}

\item \textbf{Matrix construction:} Define the Matrix $M_{x_i}$ as

\[
   M_{x_i} = \left[ (x_{i_1} - x_i) e^{\frac{\|x_{i_1} - x_i\|^2}{2t^2}}, (x_{i_2} - x_i) e^{\frac{\|x_{i_2} - x_i\|^2}{2t^2}}, \dots, (x_{i_l} - x_i) e^{\frac{\|x_{i_l} - x_i\|^2}{2t^2}} \right].
\]
\item\textbf{Estimating the Intrinsic Dimension:} 
Let \(\sigma_1, \sigma_2, \dots, \sigma_{\min(i_l, n)}\) denote the singular values of \(M_{x_i}\). We introduce a threshold parameter \(\gamma \in (0,1)\), typically set to \(\gamma \approx 0.9\), and define the intrinsic dimension \(d_i\) at \(x_i\) as the largest integer satisfying  
\[
    \frac{\sum_{j=1}^{d_i} \| \sigma_j \|}{\sum_{j=1}^{i_l} \| \sigma_j \|} < \gamma.
\]  
The parameter \(d_i\) provides an estimate of the dimension of the local tangent space at \(x_i\). The intrinsic dimension \(d\) of the manifold \(\mathcal{M}\) is then obtained as the median of all local estimates:  
\[
    d = \operatorname{median}(d_1, d_2, \dots, d_N).
\]  

\item \textbf{Extracting the Tangent Space Basis:} Compute the singular value decomposition (SVD) of $M_{x_i},$ and take the first $d$ left-singular vectors $O_1(x_i), \dots, O_d(x_i) $ as an orthonormal basis for the tangent space \(T_{x_i}\mathcal{M}\). 
\end{enumerate}

The local PCA algorithm allows  to express a $k$-differential array $W$ in matrix form as 

\begin{equation*} 
    \textbf{O}_k*\textbf{f}
\end{equation*}
where $*$ denotes the standard matrix multiplication, and the matrices $\textbf{O}_k$ and $\textbf{f}$ are defined as follows:\\
\textbf{Definition of $\mathbf{f}$:} The matrix $\mathbf{f}$ consists of $N$ blocks, each of size $\binom{d}{k}\times 1.$ The $i$-th block is given by  
\begin{equation*}
    \textbf{f} (i)=\begin{bmatrix}
f_{J_1}(x_i)\\
f_{J_2}(x_i)\\
\vdots\\
f_{J_{\binom{d}{k}}}(x_i)
\end{bmatrix}
\end{equation*}
where the multi-indexes $J_1, J_2, \dots \ J_{\binom{d}{k}}$ correspond to all possible $k$-tuples
$$J_l=(j^l_1, \dots, j^l_k) \text{ with } 1 \le j^k_1 < \dots < j^l_k \le d.$$ 
Thus, the full matrix \( \textbf{f} \) has size \( \binom{d}{k} N\times 1 \).\\
\textbf{Definition of $\textbf{O}_k$:} The matrix \( \textbf{O}_k \) consists of $N\times N$ blocks, each of size $n^k\times \binom{d}{k}.$ The block at position $(i,j)$ is defined as

\begin{equation*}
    \textbf{O}_k(i,j)= \begin{cases}
			\overline{O}_{k}(i)  & \text{if $i=j$ }\\
            0_{ n^k\times \binom{d}{k}}, & \text{if $i \neq j$ }
		 \end{cases}
\end{equation*}
for $i,j\in\{1,\dots,N\}$. Here, $\overline{O}_{k}(i)$ is the matrix
\begin{equation*}
   \overline{O}_{k}(i)=\begin{bmatrix} O_{J_1}(x_i) & O_{J_2}(x_i) & \cdots & O_{J_{\binom{d}{k}}}(x_i)
   \end{bmatrix},
\end{equation*}
where each $O_{J_l}(x_i)$ is defined as in  \cref{definicionnuevaO} and is considered as a column vector embedded in $\R^{n^k}$. Overall, $\textbf{O}_k$ has size $n^k N\times \binom{d}{k} N.$ The values of the $k$-differential array $W(x_i)$ correspond to the $i$-th block of the product $\textbf{O}_k*\textbf{f}.$

\subsection{Matrix-based computation of the exterior derivative}
\label{computationmatrixextderivative}

In this section, we derive a matrix expression for the approximation of the exterior derivative in $k$-differential arrays, as given in \cref{approximacionderivadaexterior}, using the results from the previous section. According to  \cref{localW}, any  \( k \)-differential array \( W \) at the point \( x_j \) can be written as:
\[
W(x_j) = \sum_{J} f_{J}(x_j) O_{J}(x_j).
\]
We denote by $O(x_j)_{n\times d}$ the matrix whose columns form  a basis for the tangent space \( T_{x_j} \mathcal{M} \) given by
\[
O_1(x_j), O_2(x_j), \dots, O_d(x_j).
\]
Next, the projection of this basis onto the tangent space at the point \( x_i \), denoted \( T_{x_i} \mathcal{M} \), is given by the matrix product:
\[
O(x_i) O(x_i)^T  O(x_j).
\]
Let \( \mathcal{P}_V \) denote the orthogonal projection onto the space \( V \). Then, $O_J(x_j)$ decomposes as
\[
\begin{array}{rcl}
    O_J(x_j) &= & \frac{1}{\sqrt{k!}} O_{j_1}(x_j) \wedge \cdots \wedge O_{j_k}(x_j) \\
    & = & \frac{1}{\sqrt{k!}} \mathcal{P}_{T_{x_i} \mathcal{M}} O_{j_1}(x_j) \wedge \cdots \wedge \mathcal{P}_{T_{x_i} \mathcal{M}} O_{j_k}(x_j) + \xi,
\end{array}
\]
where $\xi$ consists of wedge product terms in which one factor of each wedge product belongs to the orthogonal complement $T_{x_i} \mathcal{M}^\perp$. Furthermore, the projected term
\[
\mathcal{P}_{T_{x_i} \mathcal{M}} O_{j_1}(x_j) \wedge \cdots \wedge \mathcal{P}_{T_{x_i} \mathcal{M}} O_{j_k}(x_j)
\]
can be written as:
\[
\sum_{L} \det(O_L^T(x_i) O^J(x_j)) O_{l_1}(x_i) \wedge O_{l_2}(x_i) \wedge \cdots \wedge O_{l_k}(x_i),
\]
where \( O_L^T(x_i) \) denotes the submatrix of \( O(x_i)^T \) formed by the rows indexed by $L=(l_1,\dots,l_k)$, while \( O^J(x_j) \) is the submatrix of \( O(x_j) \) consisting of the columns indexed by $J=(j_1,\dots,j_k)$. By Combining this with \cref{localW}, we obtain that the orthogonal projection onto $\bigwedge^k T_{x_i}\M$ is given by
\begin{equation}
\label{eqderext1}
\begin{split}
\mathcal{P}_{\bigwedge^k T_{x_i} \mathcal{M}} W(x_j)=\sum_{J} f_{J}(x_j) \sum_{L} \det(O^{T}_{L}(x_i) O^{J} (x_j)) O_{L}(x_i)
\end{split}
\end{equation}
The previous equation helps to implement  \cref{teoremaprincipal} as follows. According to \cref{approximacionderivadaexterior}, we can approximate the exterior derivative $ \mathbf{d}_k(W) (x_i)$ as
\begin{equation}
\label{eqderext2}
\begin{split}
    \frac{1}{t^2} \mathcal{P}_{\bigwedge^{k+1}T_{x_i} \mathcal{M}} &\paren{ \sum_{j=1}^N \bar{K}_t(x_i,x_j) \wedge \left(W(x_j)-W(x_i)\right)}= \\
   &\frac{1}{t^2} \sum_{j=1}^N \paren{ \mathcal{P}_{T_{x_i} \mathcal{M}} ( \bar{K}_t(x_i,x_j) ) \wedge  ( \mathcal{P}_{\bigwedge^{k}T_{x_i}\mathcal{M}}\left(W(x_j)- W(x_i)\right)}
   \end{split}
\end{equation}
Observe that by definition of the kernel  $\bar{K}_t(x_j,x_i)$ as in \cref{ecuacionkernelvec1}:
\begin{equation}\label{eqderext3}
\mathcal{P}_{T_{x_i} \mathcal{M}}( \bar{K}_{t}(x_i,x_j))=\frac{1}{\overline{d}_t(x_i)}  e^{\frac{-\| x_i-x_j \|^2}{2 t^2}} \sum_{s=1}^N \langle x_j-x_i , O_s(x_i)\rangle  O_s(x_i).
\end{equation}
Using the Laplace expansion of the determinant and the identity
\begin{equation*}
   O_s(x_i) \wedge O_{L}(x_i)=  \frac{1}{\sqrt{k!}}  O_s(x_i)  \wedge O_{l_1}(x_i) \wedge \cdots \wedge O_{l_k}(x_i) =\sqrt{k+1} \, \, O_{(s, l_1, \cdots ,l_k)}(x_i),
\end{equation*}
we obtain the following expression:
\begin{equation}
\label{eqderext4}
\begin{split}
    \sum_{s=1}^N \langle x_j-x_i , O_s(x_i)\rangle  O_s(x_i) &\wedge  \sum_{L} \det(O_{L}^T(x_i) O^J (x_j)) O_{L}(x_i)=\\
    & \sqrt{k+1}  \sum_{M} \det ([A_{M}(i,j) , O_{M}^{T}(x_i) O^J (x_j)]) O_{M}(x_i)
\end{split}
\end{equation}
where the sum runs over all \( k+1 \)-tuples \( M = (m_1, m_2, \dots, m_{k+1}) \) satisfying \( 1 \le m_1 < \dots < m_{k+1} \le d \). Here \( A(i,j) \) is the column vector defined by
\begin{equation*}
  A(i,j)=  e^{\frac{-\| x_i-x_j \|^2}{2 t^2}}O(x_i)^{T}(x_j-x_i)
\end{equation*}
and \( A_M(i,j) \) is the submatrix of \( A(i,j) \) consisting of the rows indexed by \( M \). Additionally,
\begin{equation}\label{matrixAalgo}
    [A_{M}(i,j) , O_{M}^{T}(x_i) O^L (x_j)]
\end{equation}
denotes the concatenated matrix whose first column is \( A_M(i,j) \). Therefore, combining \cref{eqderext1,eqderext3,eqderext4}, we obtain the following wedge product identity:
\begin{equation}
\label{ecuacion1bonita}
\begin{split}
    &\sum_{j=1}^N \paren{ \mathcal{P}_{T_{x_i} \mathcal{M}} ( \bar{K}_t(x_i,x_j) ) \wedge  ( \mathcal{P}_{\bigwedge^k T_{x_i} \mathcal{M}} W(x_j)}=\\
    &\sqrt{k+1} \frac{1}{\overline{d}_t(x_i)}  \sum_{j=1}^N  \sum_{J} f_{J}(x_j) \sum_{M} \det ([A_{M}(i,j) , O_{M}^{T}(x_i) O^J (x_j)]) O_{M}(x_i).
    \end{split}
\end{equation}
Similarly
\begin{equation}
\label{ecuacion1bonita2}
\begin{split}
    &\sum_{j=1}^N \paren{ \mathcal{P}_{T_{x_i} \mathcal{M}} ( \bar{K}_t(x_j,x_i) ) \wedge  W(x_i)}=\\
    &\sqrt{k+1} \frac{1}{\overline{d}_t(x_i)}  \sum_{j=1}^N  \sum_{J} f_{J}(x_i) \sum_{M} \det ([A_{M}(i,j) , O_{M}^{T}(x_i) O^J (x_i)]) O_{M}(x_i)
    \end{split}
\end{equation}
Recall that \cref{eqderext2} provides an approximation of the exterior derivative $\mathbf{d}_k (W)(x_i)$. Note that, up to factor of $\frac{1}{t^2},$ \cref{eqderext2} corresponds to the difference between \cref{ecuacion1bonita} and \cref{ecuacion1bonita2}. 
Furthermore, \cref{ecuacion1bonita} represents the \( i \)-th block of the following matrix multiplication:
\begin{equation}
\label{ecuacionmatricial1}
 \sqrt{k+1}    \textbf{O}_{k+1}* \textbf{ED}^1_k*\textbf{f}
\end{equation}
where $\textbf{ED}^1_k$ is the block matrix
\begin{equation} \label{matrixHnegrita}
\textbf{ED}^1_k(i,j)=
    \begin{bmatrix}
       ED^1(i,j,M_1,J_1) & ED^1(i,j,M_1,J_2) & \cdots & ED^1(i,j,M_1,J_{\binom{d}{k}})\\
        ED^1(i,j,M_2,J_1) & ED^1(i,j,M_2,J_2) & \cdots & ED^1(i,j,M_2,J_{\binom{d}{k}})\\
       \vdots & \vdots & \ddots & \vdots \\
       ED^1(i,j,M_{\binom{d}{k+1}},J_1) & ED^1(i,j,M_{\binom{d}{k+1}},J_2) & \cdots & ED^1(i,j,M_{\binom{d}{k+1}},J_{\binom{d}{k}})
    \end{bmatrix}
\end{equation}
with entries defined as
\begin{equation*}
     ED^1(i,j,M,J)= \frac{1}{\overline{d}_t(x_i)} \det ([A_{M}(i,j) , O_{M}^{T}(x_i) O^J (x_j)])
\end{equation*}
Similarly, \cref{ecuacion1bonita2} represents the \( i \)-th block of the matrix multiplication.
\begin{equation}
\label{ecuacionmatricial2}
   \sqrt{k+1}  \textbf{O}_{k+1}* \textbf{ED}^2_k*\textbf{f}
\end{equation}
where $\textbf{ED}^2_k$ is the diagonal block matrix
\begin{equation*}
  \textbf{ED}^2_k(i,j)= \begin{cases}
			\overline{ED}_k^{2}(i)  & \text{if $i=j$ }\\
            0_{ \binom{d}{k+1}\times \binom{d}{k}}, & \text{if $i \neq j$ }.
		 \end{cases}
\end{equation*}
The block matrix $\overline{ED}^2_k(i)$ is given by  
\begin{equation}
\label{ecuacionmatricialnegrita2} 
\overline{ED}_k^{2}(i)=
    \begin{bmatrix}
       ED^2(i,M_1,J_1) & ED^2(i,M_1,J_2) & \cdots & ED^2(i,M_1,J_{\binom{d}{k}})\\
        ED^2(i,M_2,J_1) & ED^2(i,M_2,J_2) & \cdots & ED^2(i,M_2,J_{\binom{d}{k}})\\
       \vdots & \vdots & \ddots & \vdots \\
       ED^2(i,M_{\binom{d}{k+1}},J_1) & ED^2(i,M_{\binom{d}{k+1}},J_2) & \cdots & ED^2(i,M_{\binom{d}{k+1}},J_{\binom{d}{k}})
    \end{bmatrix}
\end{equation}
where each block is defined by
\begin{equation}
\label{matrixblocoh2}
\begin{array}{rcl}
     ED^2(i,M,J) & = &\frac{1}{\overline{d}_t(x_i)} \sum_{l=1}^N\det ([A_{M}(i,l) , O_{M}^{T}(x_i) O^J (x_i)])\\
     &=& \frac{1}{\overline{d}_t(x_i)} \det (\sum_{l=1}^N A_{M}(i,l) , O_{M}^{T}(x_i) O^J (x_i)])
\end{array}
\end{equation}
Now, recall that the $k$-differential array $W$ can be express ass
$$W=\textbf{O}_k*\textbf{f},$$
which imples that $$\textbf{f}=\textbf{O}^T_k*W.$$ 
By combining Equations \cref{eqderext2,ecuacionmatricial1,ecuacionmatricial2}, we obtain that the approximation of exterior derivative \(  \mathbf{d}_k(W) \) at the point \( x_i \), is given by the \( i \)-th block of the matrix multiplication:
\begin{equation}
\frac{1}{t^2} \sqrt{k+1}    \textbf{O}_{k+1}* \textbf{ED}_k*\mathbf{f}=\frac{1}{t^2} \sqrt{k+1}    \textbf{O}_{k+1}* \textbf{ED}_k* \textbf{O}_{k}^{T}*W
\end{equation}
where $\textbf{ED}_k$ is defined as 
$$ \textbf{ED}_k=  \textbf{ED}^1_k- \textbf{ED}^2_k.$$ 
Thus, the matrix 
\begin{equation}
\label{repfinextderivative}
    \frac{1}{t^2} \sqrt{k+1} \textbf{O}_{k+1}* \textbf{ED}_k* \textbf{O}_{k} ^{T}
\end{equation}
represents the matrix approximation of the exterior derivative operator $\mathbf{d}_k$ acting on $k$-differential arrays.

Note that the matrix \( \textbf{O}_{k+1} \) has orthonormal columns and depends on the ambient space dimension \( n \), whereas \( \textbf{ED}_k \) encapsulates information about the manifold of dimension \( d \). Consequently, \( \textbf{ED}_k \) encodes more information about the intrinsic manifold through the exterior derivative \( \mathbf{d}_k \).

\begin{rem}
    An important observation is that if we choose a different orthonormal basis \( O^{'}_{j_1}(x_i), \dots, O^{'}_{j_k}(x_i) \), the associated matrix \( \textbf{ED}^{'}_k \), as given in \cref{repfinextderivative}, is equivalent to \( \textbf{ED}_k \), in the following sense:

\begin{equation}
\label{unicidadedequi}
    \textbf{ED}^{'}_k = ((\textbf{O}_{k+1}^{'})^{T}* \textbf{O}_{k+1})*\textbf{ED}_k * (\textbf{O}_k ^{T} * \textbf{O}_k^{'}).
\end{equation}
Here, the matrices \( (\textbf{O}_{k+1}^{'})^{T}* \textbf{O}_{k+1} \) and \( \textbf{O}^{T}_k* \textbf{O}_k^{'} \) are orthonormal, since the $(i,i)$ blocks of \( \textbf{O}_{k+1}^{'}, \textbf{O}_{k+1} \) and \( \textbf{O}_k^{'}, \textbf{O}_k \) form orthonormal bases for \( \Theta^{k+1} T_{x_i} \mathcal{M} \) and \( \Theta^{k} T_{x_i} \mathcal{M} \), respectively. Therefore, the matrix \( \textbf{ED}_k \) is unique, up to the change of basis induced by \(\left(\textbf{O}_{k+1}{'}\right)^{T}*\textbf{O}_{k+1} \) and \( \textbf{O}_k^{T}* \textbf{O}_k^{'} \).
\end{rem}

\subsection{Implementation of the Algorithm}

In this section, we summarize the results from the previous sections and outline a practical algorithm for analyzing data sets using the matrix representation of the exterior derivative, as defined in \cref{repfinextderivative} in \cref{matrixcomputationfromula}. The primary objective here is to compute the matrix \( \textbf{ED}_k=  \textbf{ED}^1_k- \textbf{ED}^2_k \) as specified in \cref{repfinextderivative}. 

We assume that \(X = \{ x_1, x_2, \dots, x_N \}\) are sampled points, representing \(N\) independent and identically distributed (i.i.d.) random variables \(X_1, X_2, \dots, X_N\), drawn from a smooth distribution \(q(\cdot)\) over an unknown \(d\)-dimensional manifold \(\mathcal{M}\).

The first step in the algorithm is to compute the tangent vectors \( O_1(x_j), \dots, O_d(x_j) \) using the Local PCA method described in \cite{singer2012vector} and \cite{singer2011orientability}. For this, we take as input the number \( K \), which represents the total number of points in the open neighborhood of a point \( x \), defined as:
\begin{equation}
\label{ecuacionvecindades}
    U(x, r) = \{ y \ | \ \| x - y \|_{\mathbb{R}^n} < r \}
\end{equation}
In the implemented algorithm, the number \( K \) is the same for all points and does not depend on the indices \( i \) or \( j \). \cref{algoPCA} summarizes the local PCA method, which is explained in \cref{differentialmatrix}.

\begin{algorithm}[H]
\begin{flushleft}
\textbf{input} Data-set \( X = \{ x_1, x_2, \dots, x_N \} \), and \( K \), the number of points in the neighborhood \( U(x_i, r) \) and scaling parameter $t$.
\begin{enumerate}
  \item \textbf{for} $i = 1 $ to $N$ \textbf{do}
   \begin{itemize}
    \item Find the \( K \)-closest points to \( x_i \), denoted \( x_{i_1}, x_{i_2}, \cdots, x_{i_K} \).
    \item Compute the matrix
    \begin{equation*}
           M_{x_i} = \left[ (x_{i_1} - x_i) e^{\frac{\|x_{i_1} - x_i\|^2}{2t^2}}, (x_{i_2} - x_i) e^{\frac{\|x_{i_2} - x_i\|^2}{2t^2}}, \dots, (x_{i_K} - x_i) e^{\frac{\|x_{i_K} - x_i\|^2}{2t^2}} \right].
    \end{equation*}
    \item Compute \( d_{x_i} \), the rank of the matrix \( M_{x_i} \).
     \end{itemize}
    \item \textbf{end for}
    \item Let $ d = \text{median}(d_1, d_2, \dots, d_N)$
    \item \textbf{for} $i = 1 $ to $N$ \textbf{do}
    \begin{itemize}
   \item  Let \( O_1(x_i), \dots, O_d(x_i) \) be the \( d \) left singular vectors from the singular value decomposition of \( M_{x_i} \).
   \item Compute the matrix $O(x_i)$ as
       \begin{equation*}
           O(x_i)=\left[ O_1(x_i), O_2(x_i), \cdots, O_K(x_i) \right].
    \end{equation*}
    \end{itemize}
    \item \textbf{end for}
\end{enumerate}
\textbf{return}  The orthonormal vectors \( O_1(x_i), O_2(x_i), \dots, O_K(x_i) \) of the tangent space \( T_{x_i}\mathcal{M} \), the matrix \( O(x_i) \) and the dimension $d$ of the manifold.
\end{flushleft}
 \caption{ Local PCA method}
 \label{algoPCA}
\end{algorithm}

The next step in the proposed method is to compute the matrix \( \textbf{ED}_k=  \textbf{ED}^1_k- \textbf{ED}^2_k\), as explained in \cref{computationmatrixextderivative}. Since the exterior derivative at point \( x_i \) depends only on information from the neighborhood \( U(x, r) \) (see \cref{ecuacionvecindades}), we can reduce the number of points required to construct the matrices \( \textbf{ED}^1_k \) and \( \textbf{ED}^2_k \), thereby lowering the computational complexity.

The key idea is that, for each index \( i \), we compute the block matrices \( \textbf{ED}_k(i,j) \) only if \( x_j \) is among the \( K \)-nearest points to \( x_i \). If \( x_j \) is not one of the \( K \)-nearest points, we set \( \textbf{ED}_k(i,j) = 0 \). Similarly, when computing \(  \textbf{ED}^2_k \), we calculate the \( i \)-th block (as shown in \cref{matrixblocoh2}) by summing over the \( K \)-nearest points to \( x_i \). Specifically, we compute \( ED^2(i,M,J) \) as:
\begin{equation}
\label{ecuacionh2algo}
    ED^2(i,M,J) = \frac{1}{\overline{d}_t(x_i)} \det \left( \sum_{l=1}^K A_M(i,i_l), O_M^T(x_i) O^J(x_i) \right),
\end{equation}
where \( i_1, \dots, i_K \) are the indices of the \( K \)-nearest points \( x_{i_1}, \dots, x_{i_K} \in X \) to \( x_i \). This simplification does not significantly affect the expression for \( \textbf{ED}_k=  \textbf{ED}^1_k- \textbf{ED}^2_k \), since the exponential term
\[
e^{-\frac{\|x_j - x_i\|^2}{2t^2}}
\]
vanishes  when \( x_j \) is far from \( x_i \). The computation of the matrix \( \textbf{ED}_k=  \textbf{ED}^1_k- \textbf{ED}^2_k \) is summarized in \cref{algoHMH2}.

\begin{algorithm}[H]
\begin{flushleft}
\textbf{input} Data-set \( X = \{ x_1, x_2, \dots, x_N \} \), and \( K \), the number of points in the neighborhood \( U(x_i, r) \), scaling parameter $t$.
\begin{enumerate}
  \item Apply Algorithm \ref{algoPCA} and assign $[O(x_i), d] \gets Local PCA (X, K)$
   \item Initialize the array $\textbf{ED}_k$ as a zero matrix with $N \times N$  blocks.
    \item \textbf{for} $i = 1 $ to $N$ \textbf{do}
    \begin{itemize}
     \item Find the \( K \)-closest points to \( x_i \), denoted \( x_{i_1}, x_{i_2}, \cdots, x_{i_K} \).
     \item For all \( 1 \le l \le K \), compute the column vector \( A(i, i_l) \) as in Eq \eqref{matrixAalgo}.
    \item \textbf{for} $l = 1 $ to $K$ \textbf{do}
    \begin{itemize}
    \item \textbf{If} \( i \neq i_l \)
    \item Assign the value  \( \textbf{ED}_k(i, i_l) \gets \textbf{ED}_k^1(i, i_l) \) as shown in Equation \eqref{matrixHnegrita}.
    \item \textbf{Else}
    \item Assign the value  \( \textbf{ED}_k(i, i) \gets \overline{ED}^{2}_k(i) \) based on Equations \eqref{ecuacionmatricialnegrita2} and \eqref{ecuacionh2algo}.
    \item \textbf{End If}
     \end{itemize}
     \item \textbf{end for}
    \end{itemize}
    \item \textbf{end for}
\end{enumerate}
\textbf{return}  The matrix $\textbf{ED}_k$, which contains the intrinsic information of the exterior derivative.
\end{flushleft}
 \caption{ Computation of \( \textbf{ED}_k=  \textbf{ED}^1_k- \textbf{ED}^2_k \)}
 \label{algoHMH2}
\end{algorithm}

\section{ Hodge Diffusion-Maps}
\label{laplacianos}
In this section, we use the approximation of the exterior derivative provided in \cref{derivadaexterior} to construct a matrix approximation based on observable sample points of the Hodge Laplacian operator. Additionally, we define the Hodge diffusion maps and the Hodge diffusion distance, which are generalizations of the Vector Diffusion Maps methodology \cite{singer2012vector}.

\subsection{Hodge Laplacians approximation}

Based on the results from \cref{derivadaexterior,matrixcomputationfromula}, and specifically from \cref{repfinextderivative}, we can approximate the exterior derivative $\mathbf{d}_k$ on  $k$-differential arrays using the matrix:
\begin{equation*}
   \frac{1}{t^2} \sqrt{k+1} \textbf{O}_{k+1}* \textbf{ED}_k* \textbf{O}_{k} ^{T}
\end{equation*}
With this approximation, we can also approximate the Hodge Laplacian $\Delta_k$, which is defined on $k$-differential arrays as:
\begin{equation*}
    \Delta_k=\mathbf{d}_{k+1}^{*} \circ \mathbf{d}_k + \mathbf{d}_{k-1} \circ \mathbf{d}_{k}^{*},
\end{equation*}
(see \cref{sub:2Hodge} for more details). It turns out that the matrix representation of the approximation of the adjoint $\mathbf{d}_k^*$ of the exterior derivative corresponds to the transpose of the matrix representation of $\mathbf{d}_{k-1}.$ In other words, this approximation is represented by the matrix: 
\begin{equation*}
   \frac{1}{t^2} \sqrt{k} \textbf{O}_{k-1}* \textbf{ED}_{k-1}^T* \textbf{O}_{k}^{T}
\end{equation*}
From this, we have that the matrix representation of the Hodge Laplacian $ \Delta_k (W)$ is given by
\begin{equation*}
\frac{1}{t^4}  \textbf{O}_{k}* ((k+1) \textbf{ED}_k ^{T} \textbf{ED}_{k}+ k \textbf{ED}_{k-1} \textbf{ED}_{k-1}^{T})*\textbf{O}_{k}^{T}*W
\end{equation*}
where $W$ is a  $k$-differential array. Therefore, the intrinsic information of the Hodge-Laplacian $\Delta_k$ can be captured through the matrix:
\begin{equation}
\label{generalizationdifmaps}
  \textbf{H}_{k,t}= \frac{1}{t^4} ((k+1) \textbf{ED}_k ^{T} \textbf{ED}_{k}+ k \textbf{ED}_{k-1} \textbf{ED}_{k-1}^{T})
\end{equation}
We define $\textbf{H}_{k,t}$ as the \textbf{Hodge-Laplacian} matrix of order $k$. 

Similarly to the case of the exterior derivative, where the exterior derivative matrix $\textbf{ED}_k$ is unique up to equivalence between matrices (as shown in \cref{unicidadedequi}), the Hodge-Laplacian matrix $\textbf{H}_{k,t}$ is also unique up to a similar equivalence. Specifically, for a different choice of the orthonormal basis  \( O^{'}_{j_1}(x_i), \dots, O^{'}_{j_k}(x_i) \), the corresponding \textbf{Hodge-Laplacian} matrix $\textbf{H}_{k,t}^{'}$, defined analogously to \cref{generalizationdifmaps} satisfies:
\begin{equation} \label{cambiodebasehodgelaplacian}
    \textbf{H}_{k,t}^{'}= ((\textbf{O}_{k}^{'})^{T}*\textbf{O}_{k})*\textbf{H}_{k,t}* ((\textbf{O}_{k}^{'})^{T}*\textbf{O}_{k})^{T}
\end{equation}
where $(\textbf{O}_{k}^{'})^{T}*\textbf{O}_{k}$ is an orthonormal matrix.

\subsection{Hodge Diffusion-Maps and Hodge Diffusion-Distance}

As in the definition of affinity in vector diffusion maps \cite[Page 1078]{singer2012vector}, we use the Hodge-Laplacian matrix, $\mathbf{H}_{k,t}$, to define an affinity between two points, $x_i$ and $x_j$. In this section, we describe the type of affinity that the Hodge-Laplacian captures within the dataset.

Consider the matrix $\mathbf{ED}_k$ as defined in \cref{repfinextderivative}. This matrix is constructed by incorporating terms of the form
\[
e^{\frac{-\| x_i - x_j \|^2}{2 t^2}} \det \left( O(x_i)^{T}(x_j - x_i), O_{M}^{T}(x_i) O^J(x_j) \right),
\]
where the exponential factor, $e^{\frac{-\| x_i - x_j \|^2}{2 t^2}}$, encodes the local proximity between points $x_i$ and $x_j$, reflecting their spatial relationship. The remaining factors in the determinant represent the area of the parallelogram spanned by the vectors $O(x_i)^T(x_j - x_i)$ and the matrix product $O_M^T(x_i) O^J(x_j)$, which accounts for the change of basis of the $k$ tangent vectors at both $x_i$ and $x_j$.

Thus, the block $(i,j)$ of the $\mathbf{ED}_k$ matrix quantifies both the proximity between $x_i$ and $x_j$ and the area of the parallelogram formed by these vectors. 

By construction, the $(i,j)$ block of the Hodge-Laplacian matrix measures the local connectivity between $x_i$ and $x_j$, along with the geometric structure defined by the $k$ and $k+1$-dimensional change of basis vectors at each point, in relation to other points in the dataset.

To be more specific, this affinity is defined as the squared Frobenius norm of the $\textbf{tm}$-power of the $(i,j)$-block of $\mathbf{H}_{k,t}$, i.e., 
\begin{equation*}
    \|\mathbf{H}_{k,t}^\textbf{tm}(i,j)\|_F^2= \mathrm{Tr}( \mathbf{H}^{\textbf{tm}}_{k,t}(i,j)^{T}*\mathbf{H}^{\textbf{tm}}_{k,t}(i,j))
\end{equation*}
This affinity quantifies how information from the Hodge-Laplacian matrix propagates from $x_i$ to $x_j$ along a path of length $\mathbf{tm}$. Additionally, it reflects how concentrated the information from the  $\textbf{tm}$-th power of the \textbf{Hodge-Laplacian} is when passing information from the $j$-th node to the $i$-th node. Specifically, for any  $k$-differential array $W$, the Hodge-Laplacian at the $i$-th point can be approximated as

\begin{equation*}
    \Delta^{\textbf{tm}}_{k} W(x_i)  \approx \sum_{j=1}^N  O_k(x_i) * \mathbf{H}^{\textbf{tm}}_{k,t}(i,j) * O_k^{T}(x_j) * W(x_j)
\end{equation*}
Thus, the norm $\|\mathbf{H}^{\textbf{tm}}_{k,t}(i,j)\|_F^2$ is large when the differential array $W$ at $x_j$ plays a significant role computing the $\textbf{tm}$-th power of the \textbf{Hodge-Laplacian} at the point $x_i$.

\begin{rem}
An important observation is that the affinity definition using the Frobenius norm \( \|\mathbf{H}^{\textbf{tm}}_{k,t}(i,j)\|_F^2 \) is independent of the choice of orthonormal basis for the tangent space \( T_{x_i}\mathcal{M} \)
\[
    O_1(x_j), O_2(x_j), \dots, O_d(x_j).
\]
Indeed, if \( O_1'(x_j), O_2'(x_j), \dots, O_d'(x_j) \) is another orthonormal basis, and \( \mathbf{H}_{k,t}' \) is the corresponding Hodge-Laplacian matrix, then, by \cref{cambiodebasehodgelaplacian}, the \((i,j)\)-th block matrix of the ${\textbf{tm}}$ power of \( \mathbf{H}_{k,t}\) and \( \mathbf{H}_{k,t}' \) satisfy  

\[
   ( \mathbf{H}_{k,t}')^{\textbf{tm}}(i,j) = A * \mathbf{H}^{\textbf{tm}}_{k,t}(i,j) * B^T,
\]
for some orthonormal matrices \( A \) and \( B \). This implies that
\[
    \left( (\mathbf{H}_{k,t}')^{\textbf{tm}}(i,j) \right)^T * (\mathbf{H}_{k,t}')^{\textbf{tm}}(i,j)
\]
and
\[
    \mathbf{H}^{\textbf{tm}}_{k,t}(i,j)^T * \mathbf{H}^{\textbf{tm}}_{k,t}(i,j)
\]
are similar matrices and therefore the same trace. Consequently, the Frobenius norms are equal, proving the claim.
\end{rem}

We now define Hodge Diffusion Maps. By construction, the \textbf{Hodge-Laplacian} matrix $\mathbf{H}_{k,t}$ is symmetric and non-negative definite. Thus, by the spectral theorem, it admits a complete set of eigenvectors $b_1, b_2, \dots, b_{N \binom{d}{k}}$ in $\mathbb{R}^{N \binom{d}{k}}$, with corresponding non-negative eigenvalues $\lambda_1 \geq \lambda_2 \geq \dots \geq \lambda_{N \binom{d}{k}}$. Each vector $b_j$ is considered as a block vector, where each block has size $N \times 1$ and consists of a column vector of dimension $\binom{d}{k} \times 1$. We denote the $i$-th block of $b_j$ by $b_j(i)$. Using this orthonormal eigenbasis, the $(i,j)$-th block of $\mathbf{H}^{\textbf{tm}}_{k,t}$ can be written as
\[
    \mathbf{H}^{\textbf{tm}}_{k,t}(i,j) = \sum_{l=1}^{N \binom{d}{k}} \lambda_l^{\textbf{tm}} b_l(i) \otimes b_l(j).
\]
Consequently, the affinity measure $\|\mathbf{H}^{\textbf{tm}}_{k,t}(i,j)\|_F^2$ takes the form
\begin{equation*}
    \|\mathbf{H}^{\textbf{tm}}_{k,t}(i,j)\|_F^2 = \sum_{l_1, l_2 = 1}^{N \binom{d}{k}} \lambda_{l_1}^{\textbf{tm}} \lambda_{l_2}^{\textbf{tm}} \langle b_{l_1}(i), b_{l_2}(i) \rangle \, \langle b_{l_1}(j), b_{l_2}(j) \rangle.
\end{equation*}
This representation allows us to define an embedding for the dataset. For $1 \leq m \leq N \binom{d}{k}$, we define the truncated $k$-th \textbf{Hodge diffusion map} at time $\textbf{tm}$ and truncation level $m$ , denoted by $\eta^{\textbf{tm}}_{k,m}$, as the embedding that maps the dataset $X = \{ x_1, x_2, \dots, x_N \} \subseteq \mathbb{R}^n$ into $\mathbb{R}^{m \times m}$, via the square matrix:
\begin{equation}
    \label{normaembedd}
    \eta^{\textbf{tm}}_{k,m}(x_i) = \begin{bmatrix}
    \sqrt{\lambda_{l_1}^{\textbf{tm}}} \sqrt{\lambda_{l_2}^{\textbf{tm}}} \langle b_{l_1}(i), b_{l_2}(i) \rangle_{\mathbb{R}^{\binom{d}{k}}}
    \end{bmatrix}_{1 \leq l_1, l_2 \leq m}.
    \end{equation}
Here, $\langle \cdot, \cdot \rangle_{\mathbb{R}^{\binom{d}{k}}}$ denotes the standard inner product in $\mathbb{R}^{\binom{d}{k}}$.  The affinity between two points two points $x_i$ and $x_j$ can then be approximated as
\[
    \|\mathbf{H}^{\textbf{tm}}_{k,t}(i,j)\|_F^2 \approx \langle \eta^{\textbf{tm}}_{k,m}(x_i), \eta^{\textbf{tm}}_{k,m}(x_j) \rangle_F.
\]
Based on the vector diffusion distance \cite{singer2012vector}, which measures the connectivity of points using the connected Laplacian, we use the Hodge Laplacian to define the \textbf{Hodge Diffusion-Distance} $d_{\textbf{Hodge}}$ between two points \( x_i \) and \( x_j \) as:

\[
    d_{\textbf{Hodge}}^2(x_i,x_j) = \| \eta^{\textbf{tm}}_{k,m}(x_i) \|_F^2 + \| \eta^{\textbf{tm}}_{k,m}(x_j) \|_F^2 - 2 \langle \eta^{\textbf{tm}}_{k,m}(x_i), \eta^{\textbf{tm}}_{k,m}(x_j) \rangle_F.
\]
Although the embedding $\eta^{\textbf{tm}}_{k,m}$ is computed using only the first $m$ eigenvalues, an important question is when it provides a good approximation for the affinity measure $\|\mathbf{H}^{\textbf{tm}}_{k,t}(i,j)\|_F^2$. Specifically, we are interested in when the error in the approximation, given by the absolute value of the difference
\[
\|\mathbf{H}^{\textbf{tm}}_{k,t}(i,j)\|_F^2 - \langle \eta^{\textbf{tm}}_{k,m}(x_i), \eta^{\textbf{tm}}_{k,m}(x_j) \rangle_F,
\]
is small enough. In practice, this is not guaranteed, since the eigenvalues $\lambda_l^{\textbf{tm}}$ for $l > m$ could still be large, especially if $\lambda_l > 1$. To address this issue, we normalize the affinity measure, the Hodge diffusion maps, and the \textbf{Hodge Diffusion-Distance} by the factor $1 / \lambda_1^{\textbf{tm}}$. Instead of considering $\|\mathbf{H}^{\textbf{tm}}_{k,t}(i,j)\|_F^2$, $\eta^{\textbf{tm}}_{k,m}(x_i)$, and $d_{\textbf{Hodge}}^2(x_i,x_j)$, we use their normalized counterparts: 

\begin{equation}
\label{afinidadnormalizada}
    \frac{1}{\lambda_1^{2 \, \textbf{tm}}}  \|\mathbf{H}^{\textbf{tm}}_{k,t}(i,j)\|_F^2,
\end{equation}
\begin{equation}
\label{embnormalizada}
    \frac{1}{\lambda_1^{\textbf{tm}}} \eta^{\textbf{tm}}_{k,m}(x_i),
\end{equation}
and
\[
    \frac{1}{\lambda_1^{2 \, \textbf{tm}}} d_{\textbf{Hodge}}^2(x_i,x_j).
\]
With these normalizations, the error in the normalized embedding and the normalized affinity measure is bounded by:

\[
    \frac{1}{\lambda_1^{2 \, \textbf{tm}}} \left| \|\mathbf{H}^{\textbf{tm}}_{k,t}(i,j)\|_F^2 - \langle \eta^{\textbf{tm}}_{k,m}(x_i), \eta^{\textbf{tm}}_{k,m}(x_j) \rangle_F \right| 
    \le \left( \frac{\lambda_{m+1}}{\lambda_1} \right)^{\textbf{tm}} \left( (N \binom{d}{k})^2 - m^2 \right).
\]
If $m$ is chosen is chosen so that the \((m+1)\)-th eigenvalue satisfies
\[
     \frac{\lambda_{m+1}}{\lambda_1} < 1,
\]
then as $\textbf{tm}\to\infty$, the error approaches zero. This ensures that the normalized embedding, given in \cref{embnormalizada}, provides a good approximation of the normalized affinity measure in \cref{afinidadnormalizada}. 

In \cref{experimentos}, we perform several numerical experiments using these normalized quantities to demonstrate that the \textbf{Hodge Diffusion-Map} accurately approximates the affinity measure with only a small number of terms, $m$.

\section{Numerical Experiments}
\label{experimentos}
In this section, we provide a numerical validation of the proposed methodology using sample points from the two-dimensional torus \(T^2\) and the two-dimensional sphere \(S^2\). Our focus is on the normalized versions of the affinity measure, the Hodge Diffusion Maps, and the Hodge Diffusion Distance, as defined in \cref{afinidadnormalizada,embnormalizada}, respectively.

We compare the proposed methodology against several established algorithms: Vector Diffusion Maps \cite{singer2012vector}, Diffusion Maps \cite{coifman2006diffusion}, t-distributed Stochastic Neighbor Embedding (t-SNE), and Principal Component Analysis (PCA). The implementation of Hodge Diffusion Maps follows the procedure described in \cref{algoHMH2}. As a preliminary step, we apply local PCA using \cref{algoPCA} to estimate the intrinsic dimensionality of the manifold structure underlying the dataset $X$.

In our experiments, we use the parameter settings specified in \cref{tablespecification} for the Hodge Diffusion-maps. The parameter $K$ denotes the number of sample points in the neighborhood used to run \cref{algoPCA,algoHMH2}. We set $K=30$ to ensure a reasonable number of points without significantly impacting the computational cost. The threshold parameter \(\gamma\), used in the Local PCA procedure described in \cref{differentialmatrix}, is set to \(\gamma = 0.9\) to estimate the intrinsic dimension \(d\) of the manifold.

The parameter $m$ represents the number of truncated terms used to compute the embedding $\eta^{\textbf{tm}}_{k,m}$ of the Hodge diffusion maps, as defined in \cref{normaembedd}. Since $\eta^{\textbf{tm}}_{k,m}$ is a symmetric matrix, we only consider the components in the form $(i,j)$ where $1 \leq i \leq j \leq m$. We use $m=3$ to visualize the results based on the first three terms.

The parameter $\textbf{tm}$ indicates the number of paths used to measure the connectivity between two points using the Hodge Laplacian Matrix to the power  $\textbf{tm}$. In our experiments, we set $\textbf{tm}=1$, though similar results were obtained with different values of $\textbf{tm}$. These results suggest some stable behavior on the parameter $\textbf{tm}$ and should be further investigated in future work.

For a dataset $X=\{ x_i \}_{i=1}^N$, the parameter $t$ is the diffusion scaling factor. It is set as the average of the minimum distances between each point $x_i$ and all other points $x_j$ in the dataset. The choice of $t$ is based on the need to select a small enough value to capture the data's structure, but not too small, as this could cause the term $e^{-\|x_i-x_j\|^2/2t^2}$ to vanish, losing important topological information.
\begin{table}[!ht]
\begin{center}
\begin{tabular}{|c | c | l |} 
 \hline
Parameter & Value & Description  \\ [0.5ex] 
 \hline
 $K$ & 30 & Number of points in the neighborhood \\ [0.5ex] 
  $\gamma$ & 0.9 & Threshold parameter to estimate $d$ \\ [0.5ex] 
 $m$ & 3 & Truncation level \\ [0.5ex] 
  $\textbf{tm}$ & 1 & Number of paths used to measure the \\ 
    & & connectivity between two points  \\[0.5ex] 
 $t$ & $\textbf{mean}_{i} \min_{j \neq i} \|x_i-x_j \| $ & Diffusion scaling parameter \\ 
 \hline
\end{tabular}
\caption{Parameters specification for the Hodge diffusion-Map}
\label{tablespecification}
\end{center}
\end{table}

Additionally, by applying the Cauchy–Schwarz inequality, we observe that the \((l_1, l_2)\)  component of both normalized Hodge diffusion maps and vector diffusion maps at a point \(x_i\) s dominated by the square root of the diagonal components \((l_1, l_1)\) and \((l_2, l_2)\). This suggests that the diagonal components \((l_k, l_k)\) encode information about the intensity of the diffusion of the embedding elements $\eta^{\textbf{tm}}_{k,m}$. 

In our numerical experiments, we plot the diagonal embedding of the normalized Hodge diffusion maps. Specifically, we plot the map:
\[
x_i \to \frac{1}{\lambda_1} \left( \eta^{\textbf{1}}_{k,3}(x_i)(l,l)  \right)_{1 \le l \le m}
\]
We refer to this representation as the diagonal of the normalized Hodge diffusion maps.

In the following experiments, we examine two-dimensional manifolds, namely the torus \( T^2 \) and the sphere \( S^2 \), each sampled with 2500 points distributed across them as described below. 

For the torus \( T^2 \), we use the parametrization:
\[
\Omega(u, v) = \left[ (2 + \cos(2 \pi v)) \cos( 2 \pi u), (2 + \cos( 2 \pi v)) \sin(2 \pi u), \sin(2 \pi v) \right]
\]
where \( -\frac{1}{2} \leq u, v \leq \frac{1}{2} \). To construct  the dataset, we define $50$ evenly spaced sample points \( u_1, u_2, \dots, u_{50} \) within the interval \( \left[-\frac{1}{2}, \frac{1}{2}\right) \) using:
\[
u_i = \frac{i - 1}{50} - \frac{1}{2} \quad \text{for} \quad 1 \leq i \leq 50.
\]
Using this grid, the dataset \( X \) is then:
\[
X = \left\{ \Omega(u_i, u_j) \right\}_{1 \leq i, j \leq 50},
\]
resulting in 2500 points distributed over \( T^2 \).

For the sphere \( S^2 \), we use the following parametrization:
\[
\Omega(u, v) = \left[ \cos(2 \pi u) \sin(\pi v), \sin(2 \pi u) \sin(\pi v), \cos(\pi v) \right]
\]
where \( 0 \leq u, v \leq 1 \). To create the dataset, we define $50$ evenly spaced sample points \( u_1, u_2, \dots, u_{50} \) within the interval \( \left[0, 1\right) \), given by:
\[
u_i = \frac{i - 1}{50} \quad \text{for} \quad 1 \leq i \leq 50.
\]
The resulting dataset \( X \) is defined as:
\[
X = \left\{ \Omega(u_i, u_j) \right\}_{1 \leq i, j \leq 50},
\]
yielding in 2500 points distributed over \( S^2 \).

In both parametrization systems, \( u_i \) and \( u_j \) correspond to the first and second coordinates, respectively. In \cref{figuratoro,figuraesfera}, we visualize the datasets sampled over the torus and sphere, respectively.  The colorbars indicate the ordering of the sample points, providing a reference for their distribution across the surfaces. 

\begin{figure}[htp]
\centering
\includegraphics[width=0.8\textwidth]{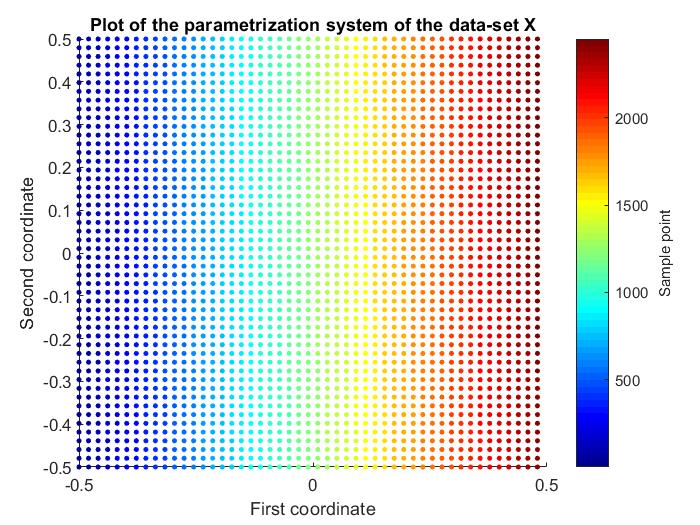}
\includegraphics[width=0.9\textwidth]{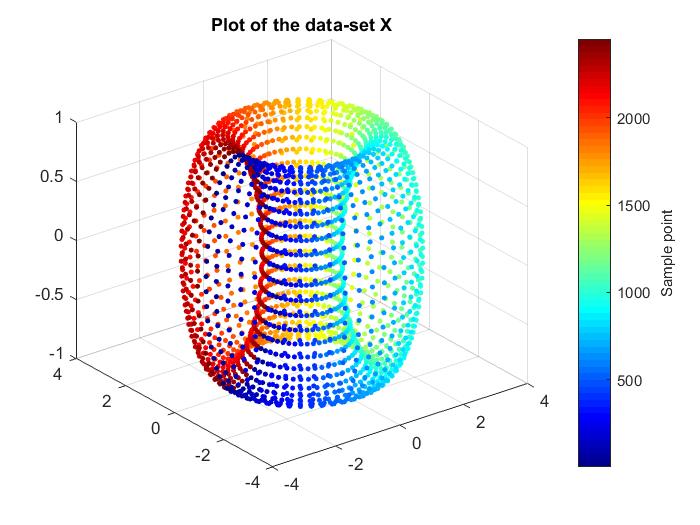}
\caption{Top: The first and second coordinates of the parametrization system for the torus. Bottom: The dataset \( X \) plotted on the torus \( T^2 \), with the colorbar indicating the order of the sample points.}
\label{figuratoro}
\end{figure}

\begin{figure}[htp]
\centering
\includegraphics[width=0.8\textwidth]{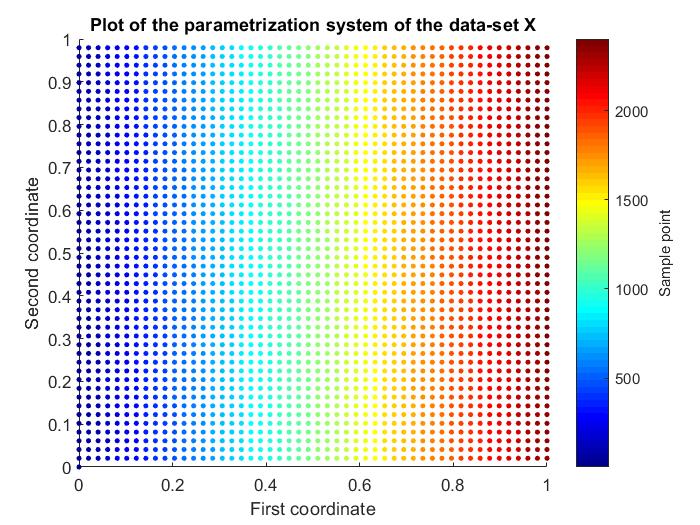}
\includegraphics[width=0.9\textwidth]{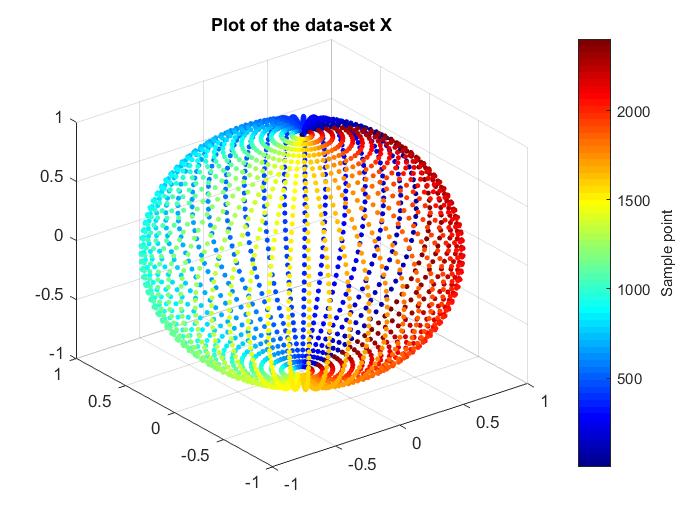}
\caption{Top: The first and second coordinates of the parametrization system for the sphere. Bottom: The dataset \( X \) plotted on the sphere \( S^2 \), with the colorbar indicating the order of the sample points.}
\label{figuraesfera}
\end{figure}

Although the dataset \( X \) consists of points sampled from each manifold, the number of points may not be large enough to fully capture the entire manifold. As a result, the dataset \( X \) could potentially represent a submanifold within the manifold or a totally different manifold from the one theoretically assumed.

The goal of this experiment is to explore how the Hodge Diffusion Maps method can be used to extract topological information from the sample dataset in \( X \). Using the local PCA algorithm (\cref{algoPCA}), we estimate the intrinsic dimension of both the torus and sphere datasets to be \( d = 2 \). Consequently, we can apply the Hodge diffusion-maps embedding up to the second order, that is, for $k \in \{1,2\}.$

In the next section, we present the results and analysis for each manifold. All experiments were performed using \textsc{Matlab} software on a laptop equipped with an Intel Core i5-1235U 1.30 GHz processor and 8 GB of RAM. The algorithms used in our implementation are available in the GitHub repository \cite{alvarorepo}.

\subsection{Results over two-dimensional torus}

The first-order normalized Hodge diffusion maps embedding (\( k=1 \)) is shown in \cref{figuratoro1}, while the second-order Hodge diffusion maps embedding (\( k=2 \)) is presented in \cref{figuratoro2}. Additionally, in \cref{vdmtoro}, we show the vector diffusion maps embedding. The computational time for running the Hodge diffusion maps was 98.67 seconds for the first order and 14.69 seconds for the second order.
  
The Hodge diffusion map embeddings, for both first and second orders, reveal two distinct regions with different features. One region is concentrated around points where \( u_2 \) is close to 0, while the other lies outside this area. Within each region, the values of the $(i,j)$ component exhibit similar characteristics, as indicated by distinct color patterns unique to each region. This shows that the Hodge diffusion map successfully identifies two regions with different structural characteristics.

Similarly, the vector diffusion map identifies two regions: one near points where \( u_2 \) is approximately -0.5 or 0.5, and another outside this area. Both algorithms, thus detect a partition of the dataset into two regions. While the specific regions identified by each method are not identical, they are closely related through the Weitzenböck identity, which connects the Hodge Laplacian with the Connection Laplacian operator.

\begin{figure}[htp]
\centering
\includegraphics[width=0.49\textwidth]{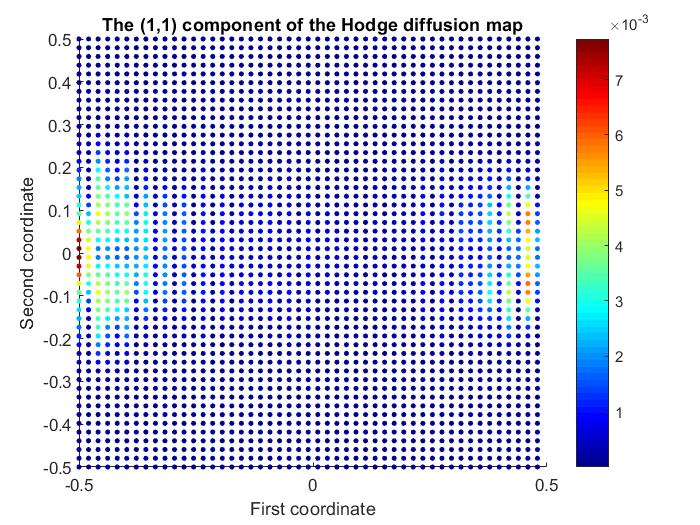}
\includegraphics[width=0.49\textwidth]{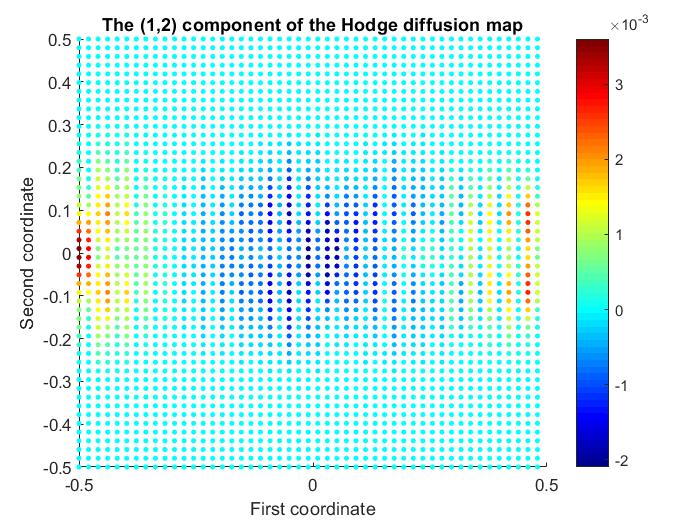}
\includegraphics[width=0.49\textwidth]{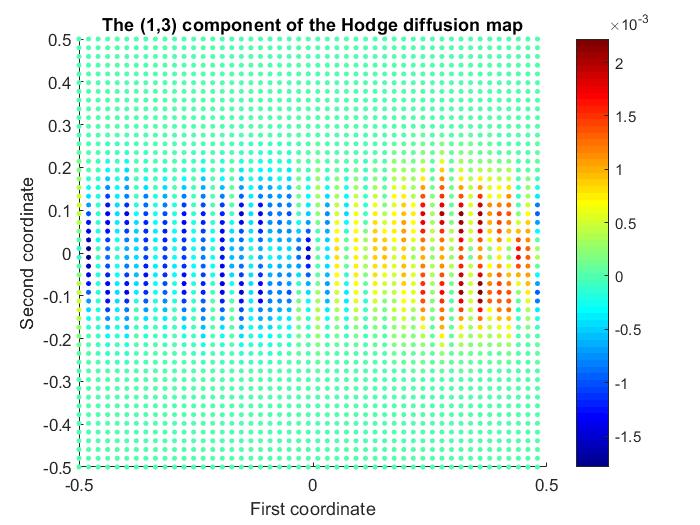}
\includegraphics[width=0.49\textwidth]{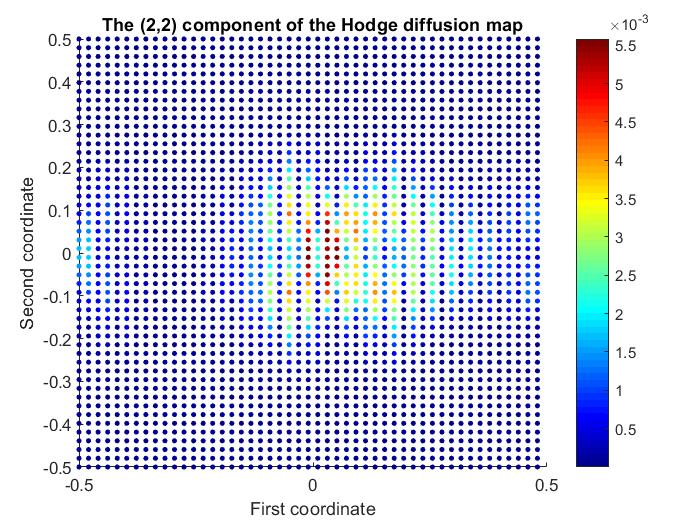}
\includegraphics[width=0.49\textwidth]{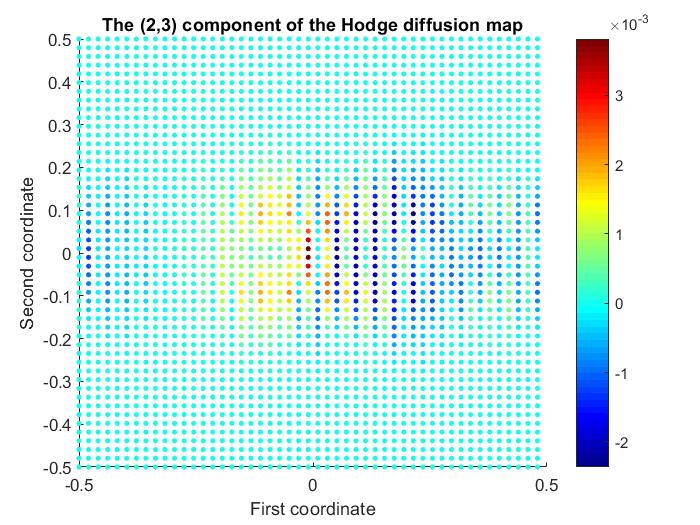}
\includegraphics[width=0.49\textwidth]{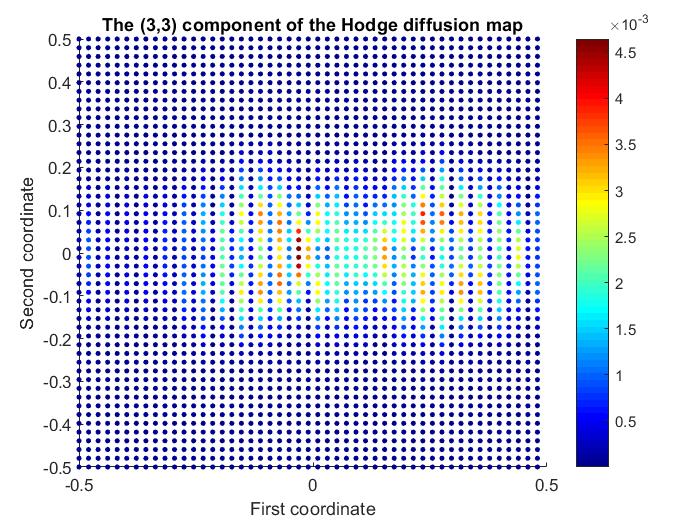}
\caption{Plot of the $(c_1, c_2)$ components, where \(1 \leq c_1 \leq c_2 \leq 3\), of the first order normalized Hodge Diffusion Maps $\eta^{\textbf{1}}_{1,3}$ as given in Eq. \eqref{embnormalizada}.}
\label{figuratoro1}
\end{figure}

\begin{figure}[htp]
\centering
\includegraphics[width=0.49\textwidth]{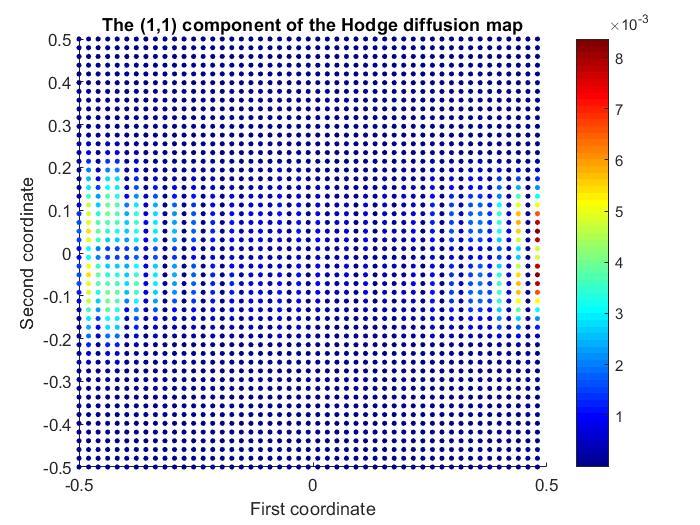}
\includegraphics[width=0.49\textwidth]{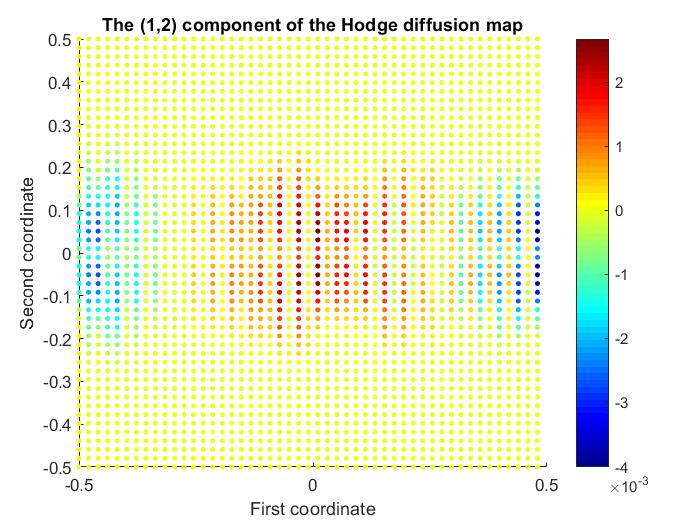}
\includegraphics[width=0.49\textwidth]{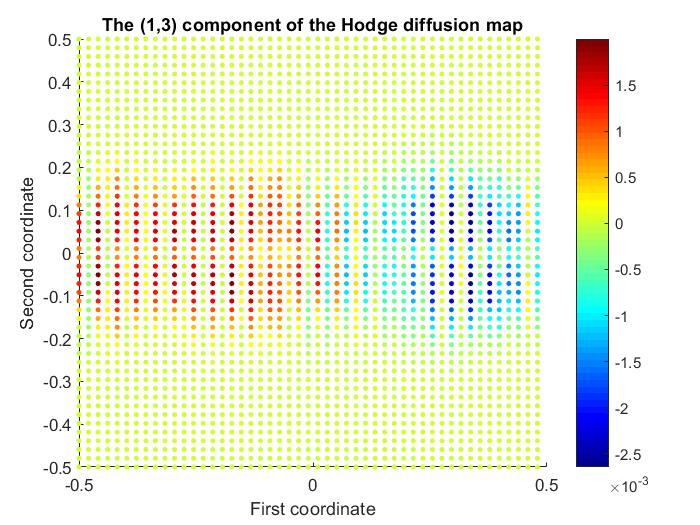}
\includegraphics[width=0.49\textwidth]{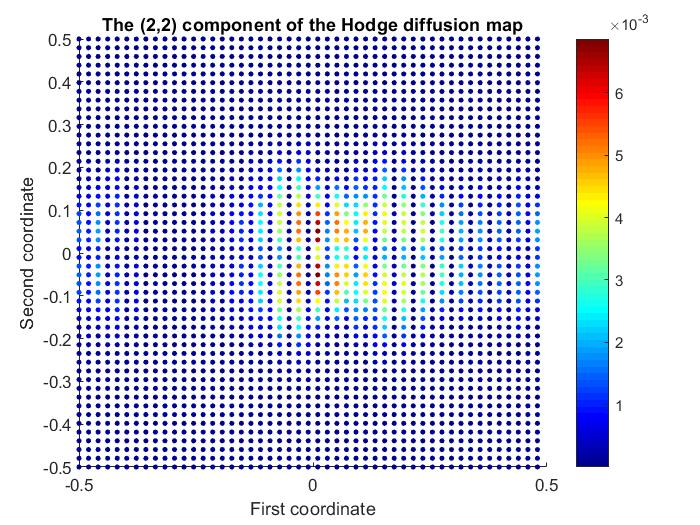}
\includegraphics[width=0.49\textwidth]{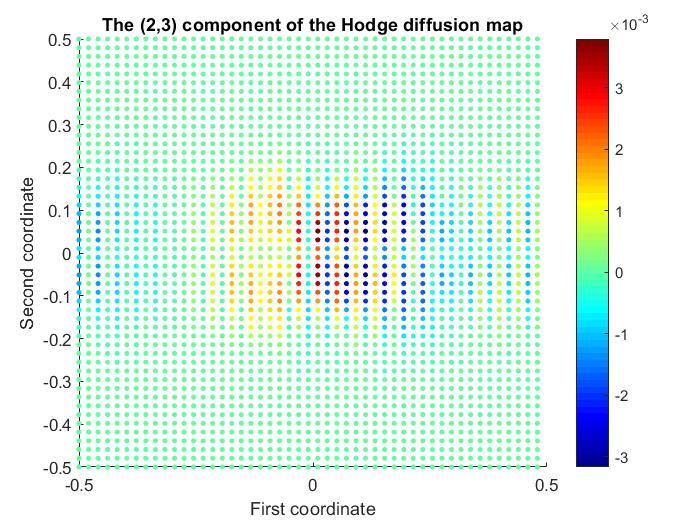}
\includegraphics[width=0.49\textwidth]{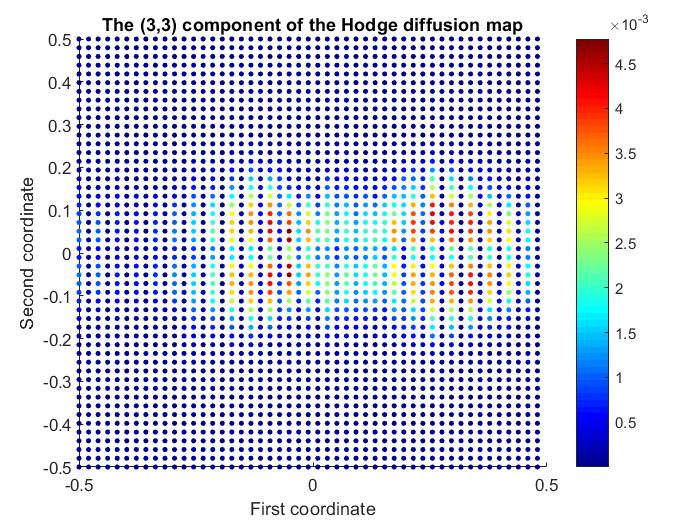}
\caption{Plot of the $(c_1, c_2)$ components, where \(1 \leq c_1 \leq c_2 \leq 3\), of the second order normalized Hodge Diffusion Maps $\eta^{\textbf{1}}_{2,3}$ as given in Eq. \eqref{embnormalizada}}
\label{figuratoro2}
\end{figure}

\begin{figure}[htp]
\centering
\includegraphics[width=0.49\textwidth]{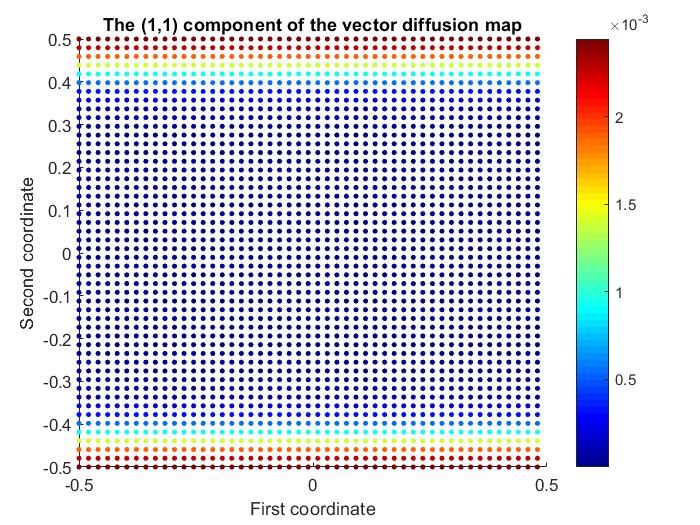}
\includegraphics[width=0.49\textwidth]{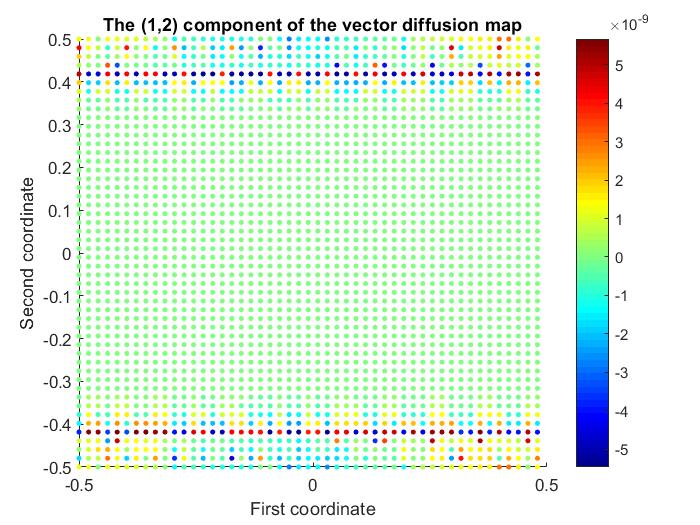}
\includegraphics[width=0.49\textwidth]{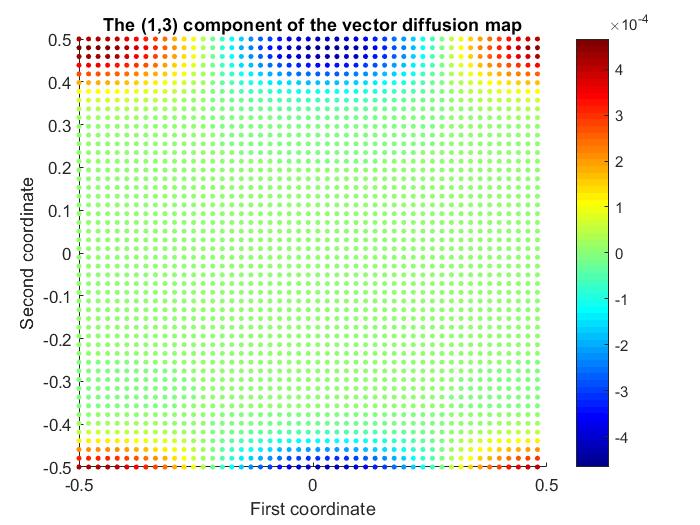}
\includegraphics[width=0.49\textwidth]{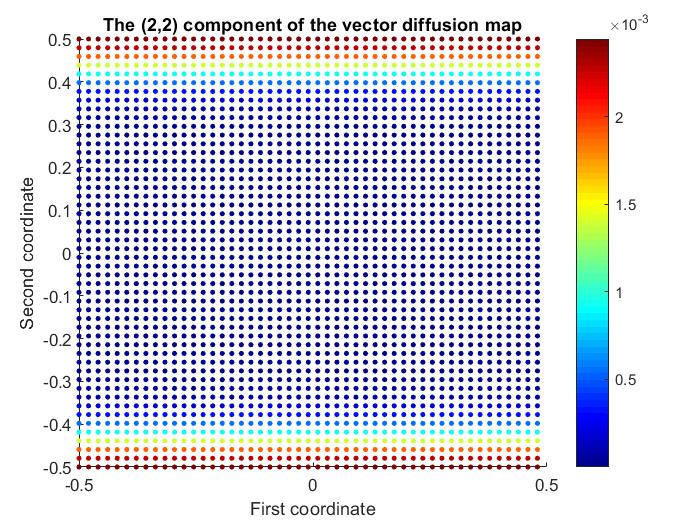}
\includegraphics[width=0.49\textwidth]{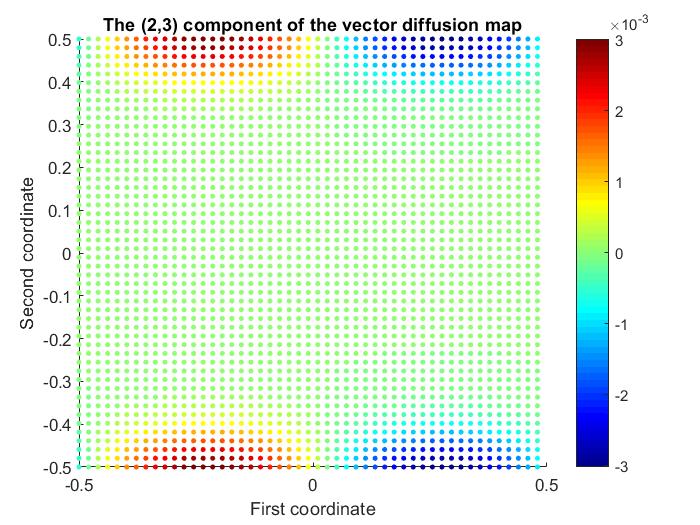}
\includegraphics[width=0.49\textwidth]{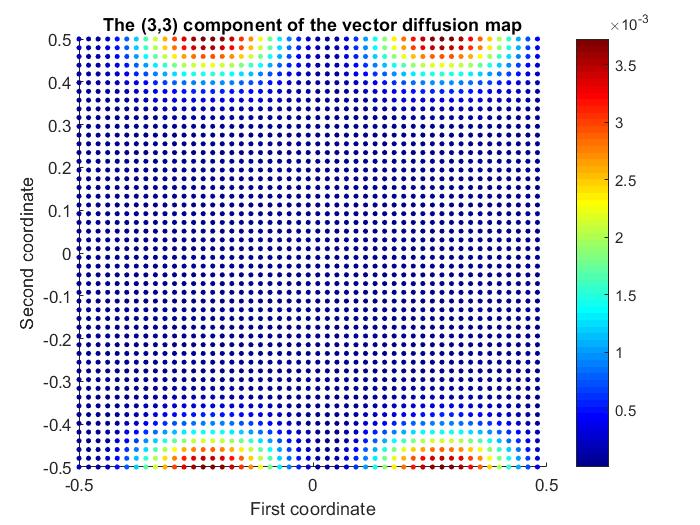}
\caption{Plot of the $(c_1, c_2)$ components, where \(1 \leq c_1 \leq c_2 \leq 3\), of vector diffusion maps.}
\label{vdmtoro}
\end{figure}

Additionally, in \cref{embedingdiagtoro}, we plot the diagonal of the normalized Hodge and vector diffusion maps. The first and second rows show the diagonals of the normalized Hodge diffusion maps for the first ($k=1$) and second ($k=2$) orders, respectively, while the third row shows the vector diffusion maps. The left column contains the first two components, and the right column contains the first three components of the respective diagonals.

We compare the proposed methodology with the t-SNE, PCA, and diffusion map algorithms applied to the dataset \( X \), as shown in \cref{otrostoro}. The colorbar and dataset organization are the same as in Figure \cref{figuratoro}.

The Hodge diffusion map, for both first and second order (\( k=1 \) and \( k=2 \)), map the vertical sections of the torus, corresponding where the first coordinate \( u_i \) is constant and assigned the same color, to approximations of straight lines in the two and three dimensional space. In contrast, vector diffusion maps represent the entire data set as several parallel straight lines, without distinguishing the vertical sections. The t-SNE algorithm transforms these vertical sections into nonlinear curves, while PCA algorith projects the torus onto a two-dimensional plane, where the vertical sections collapse into overlapping ellipses. Diffusion Maps, on the other hand, arrange the vertical sections into lines forming a circular pattern.

The results show that both diffusion maps and Hodge diffusion maps are capable of identifying and classifying vertical sections by mapping them to approximate straight lines in two- or three-dimensional spaces. This suggests that these algorithms extract topological features by mapping points from the same vertical section onto a straight line in the embedded space. As a result, linear classifiers can be used as a postprocessing step to efficiently classify data points based on their topological features.

\begin{figure}[htp]
\centering
\includegraphics[width=0.49\textwidth]{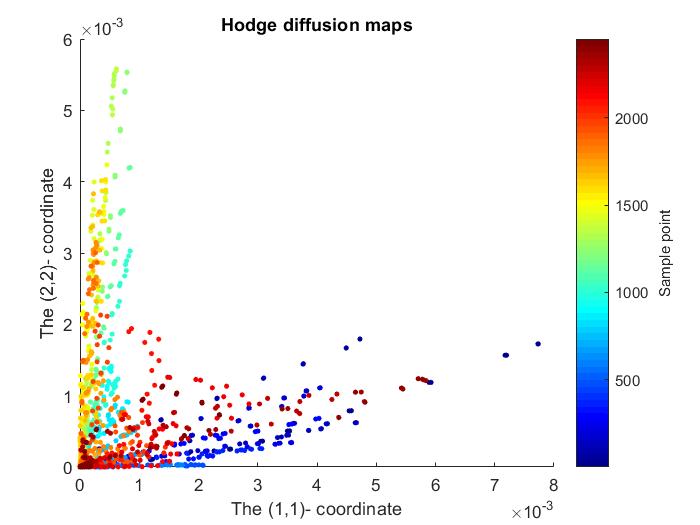}
\includegraphics[width=0.49\textwidth]{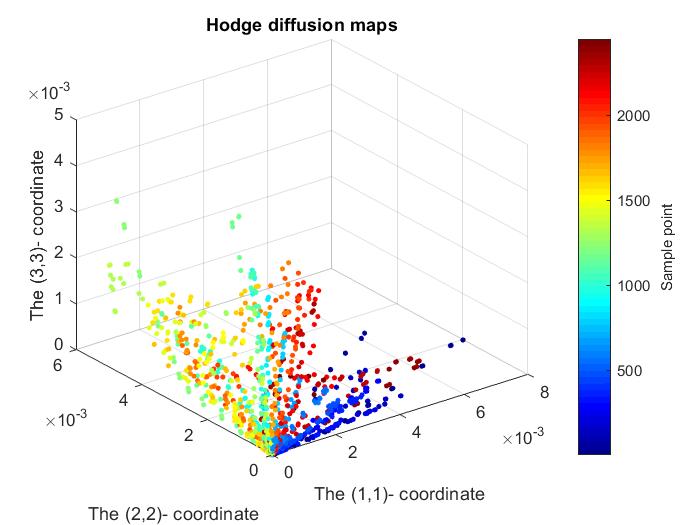}
\includegraphics[width=0.49\textwidth]{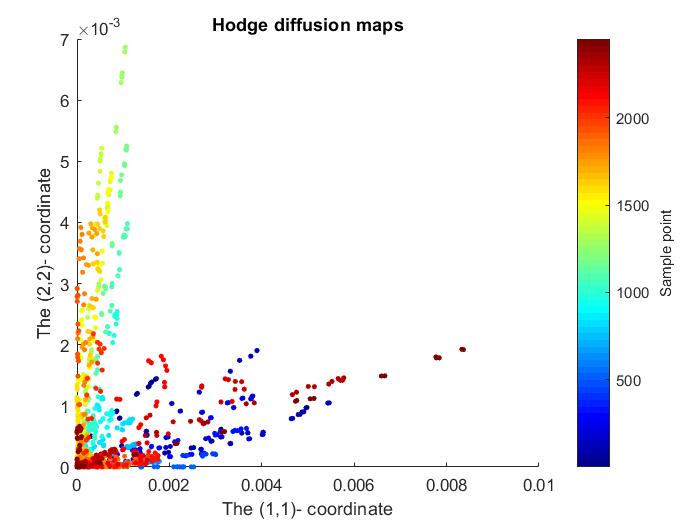}
\includegraphics[width=0.49\textwidth]{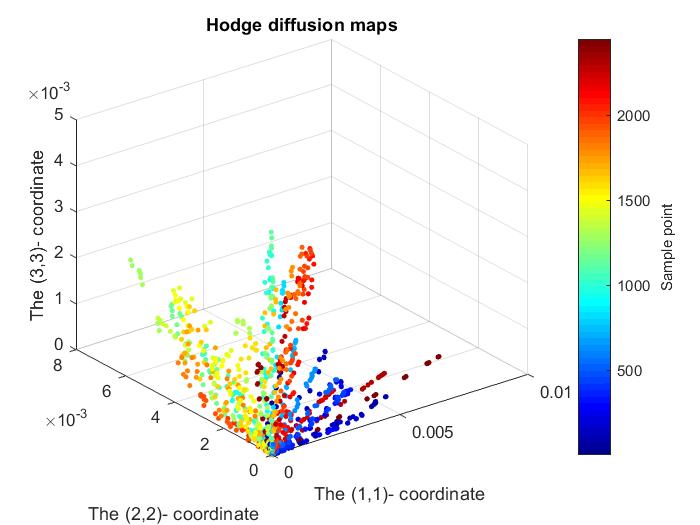}
\includegraphics[width=0.49\textwidth]{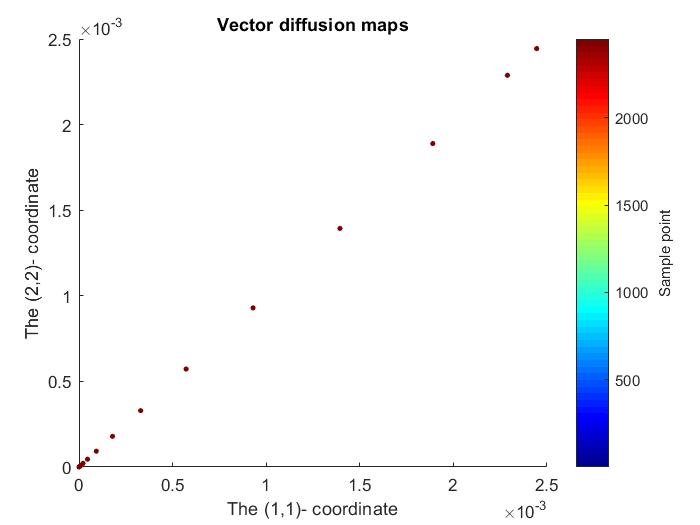}
\includegraphics[width=0.49\textwidth]{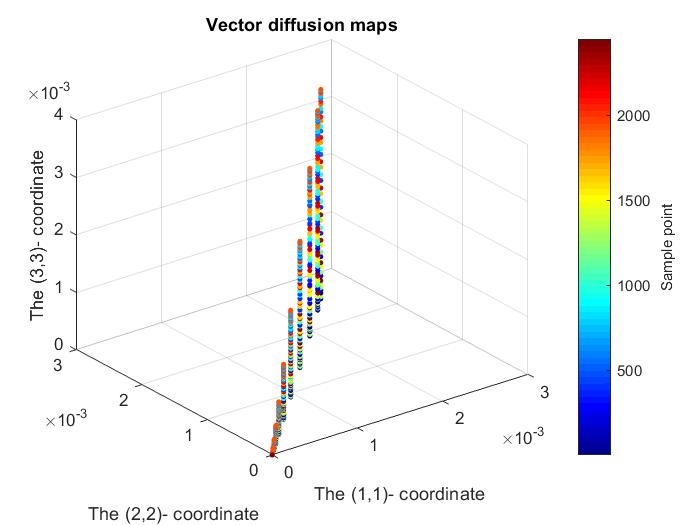}
\caption{Plot of the diagonal coordinates of the Hodge diffusion maps and vector diffusion maps. The first row shows the Hodge diffusion maps of the first order ($k=1$). The second row shows the second order ($k=2$) Hodge diffusion maps, and the third row shows the vector diffusion map. In the left column, we plot the first two diagonal coordinates, $(1,1)$ and $(2,2)$, and in the right column, we plot the first three diagonal coordinates, $(1,1)$, $(2,2)$, and $(3,3)$. The colorbar indicates the order of the points, matching the colorbar in Figure \ref{figuratoro}.}
\label{embedingdiagtoro}
\end{figure}

\begin{figure}[htp]
\centering
\includegraphics[width=0.49\textwidth]{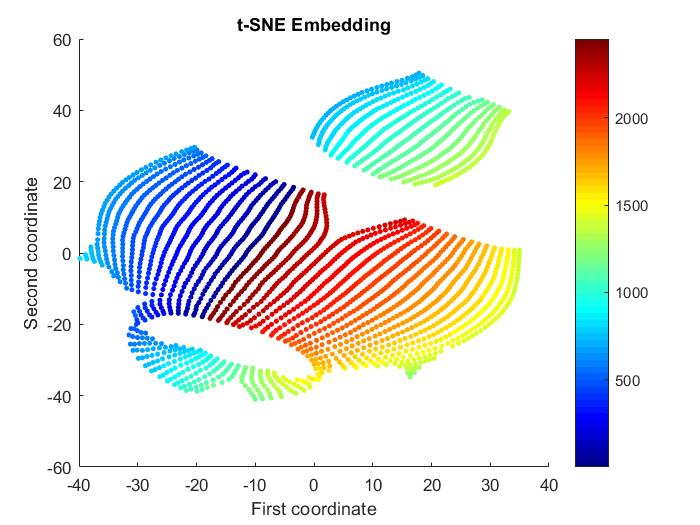}
\includegraphics[width=0.49\textwidth]{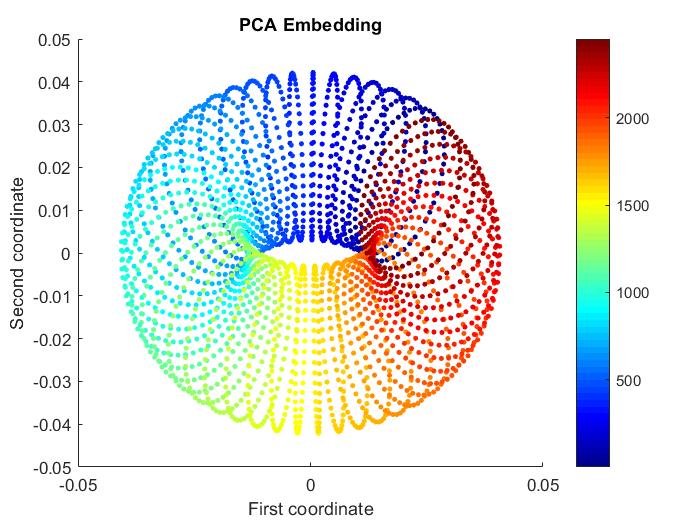}
\includegraphics[width=0.49\textwidth]{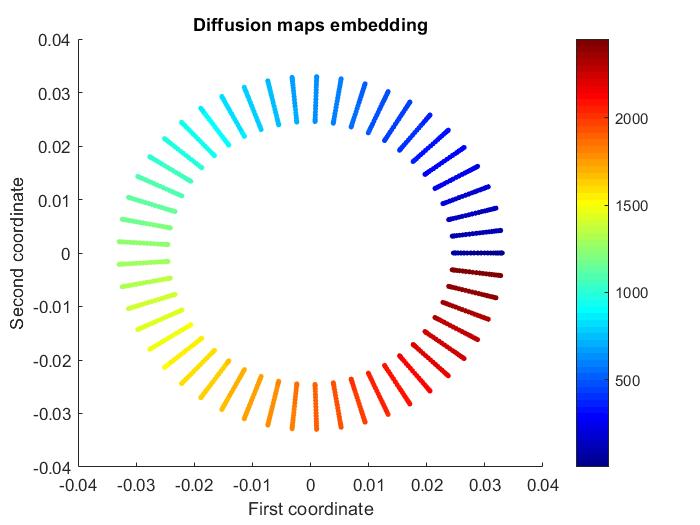}
\caption{Plot of the t-SNE, PCA, and Diffusion Maps algorithms applied to the dataset sampled on the torus.}
\label{otrostoro}
\end{figure}

\newpage

\subsection{Results over the two-dimensional sphere}

The first-order normalized Hodge diffusion maps embedding (\( k=1 \)) is shown in \cref{figuraesfera1}, while the second-order embedding (\( k=2 \)) is presented in \cref{figuraesfera2}. Additionally, the vector diffusion maps embedding is displayed in \cref{vdmesfera}. The computation times were 87.98 seconds for the first-order Hodge diffusion maps and 13.86 seconds for the second-order.

The results show that both the first- and second-order Hodge embeddings reveal two regions with distinct characteristics: one where the second coordinate \(u_2\) is close to 0.5, and another outside this range. Within each region, the values of the \( (i,j) \) components exhibit similar patterns, as reflected in the distinct color patterns unique to each region. In contrast, the vector diffusion maps also identify two regions—one where \(u_2\) is near 0 and another outside this region. Similar to the Torus case, both algorithms detect two separate regions, which are linked by the Weitzenböck identity, connecting the Hodge Laplacian and the Connection Laplacian operator.

\begin{figure}[htp]
\centering
\includegraphics[width=0.49\textwidth]{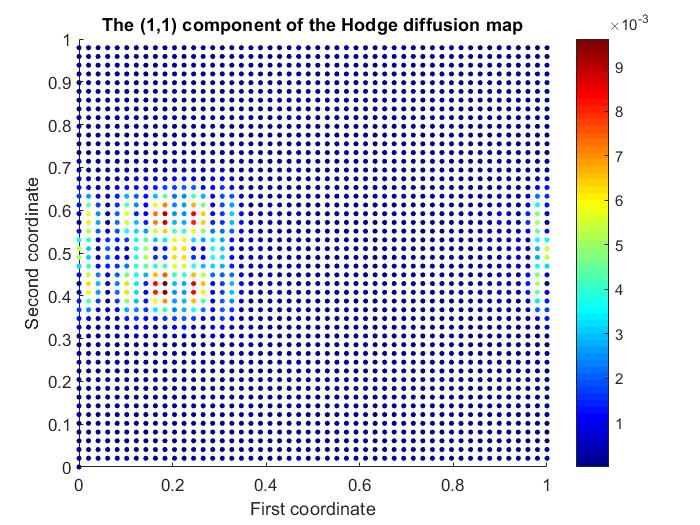}
\includegraphics[width=0.49\textwidth]{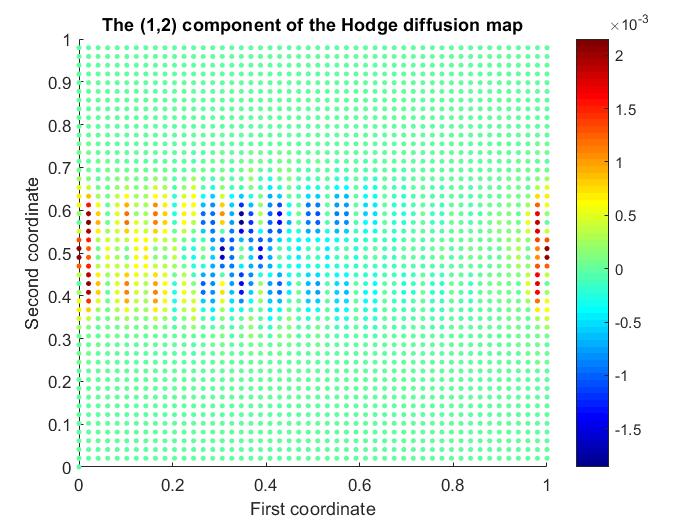}
\includegraphics[width=0.49\textwidth]{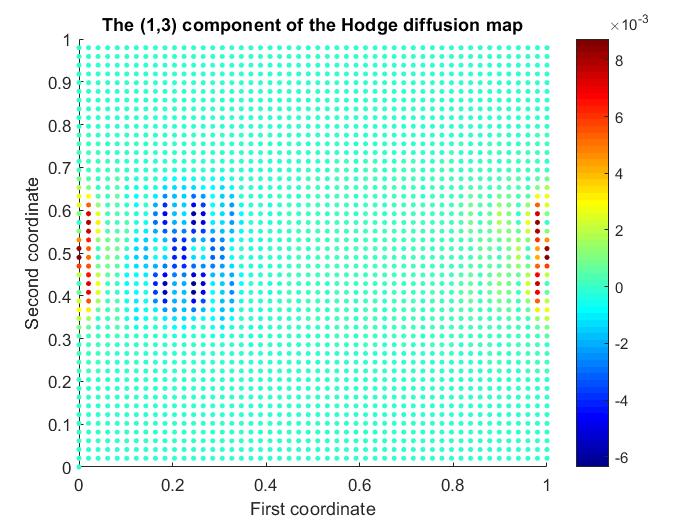}
\includegraphics[width=0.49\textwidth]{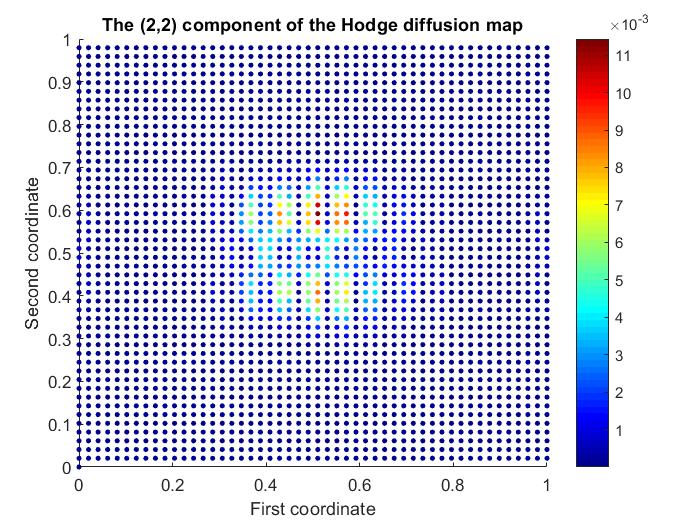}
\includegraphics[width=0.49\textwidth]{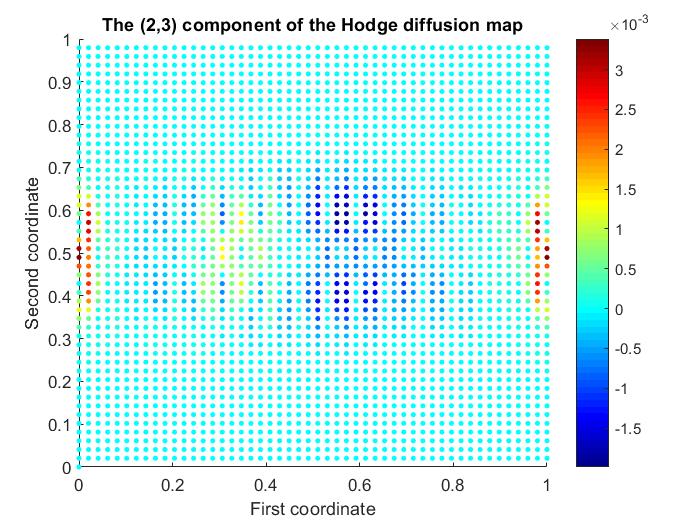}
\includegraphics[width=0.49\textwidth]{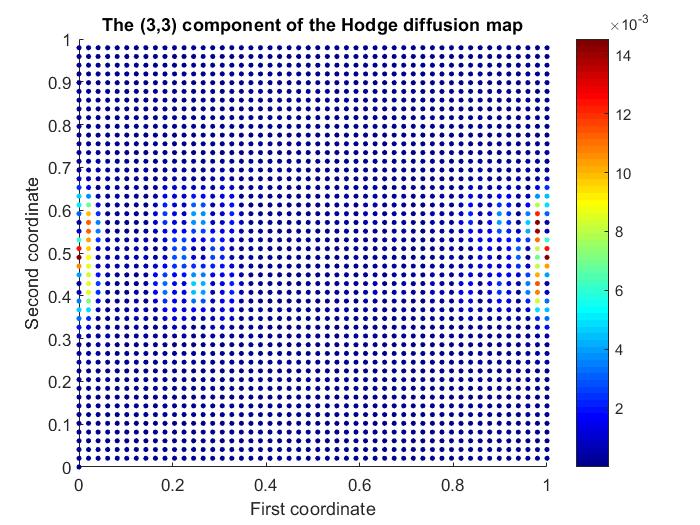}
\caption{Plot of the $(c_1, c_2)$ components, where \(1 \leq c_1 \leq c_2 \leq 3\), of the first order normalized Hodge Diffusion Maps $\eta^{\textbf{1}}_{1,3}$ as given in Eq. \eqref{embnormalizada}, for sampled points on the sphere.}
\label{figuraesfera1}
\end{figure}

\begin{figure}[htp]
\centering
\includegraphics[width=0.49\textwidth]{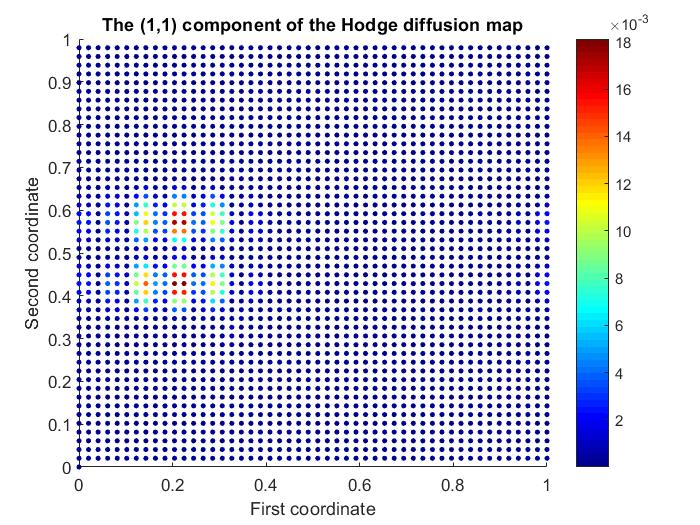}
\includegraphics[width=0.49\textwidth]{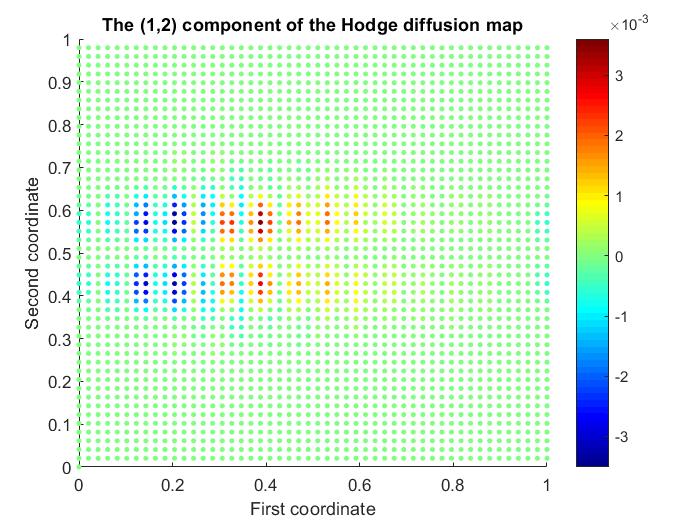}
\includegraphics[width=0.49\textwidth]{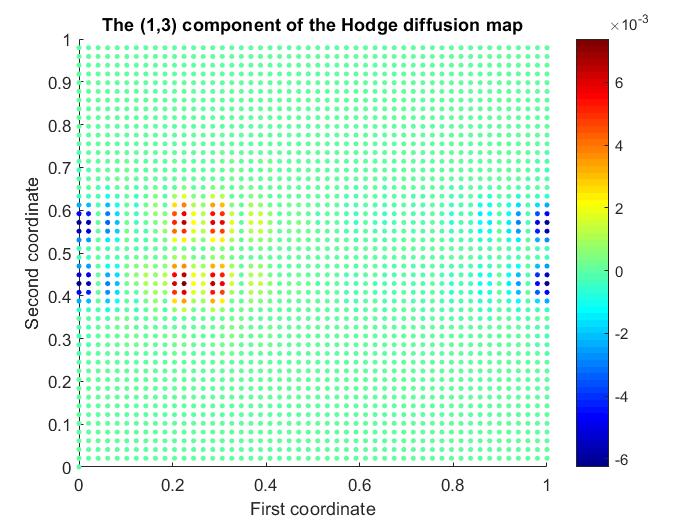}
\includegraphics[width=0.49\textwidth]{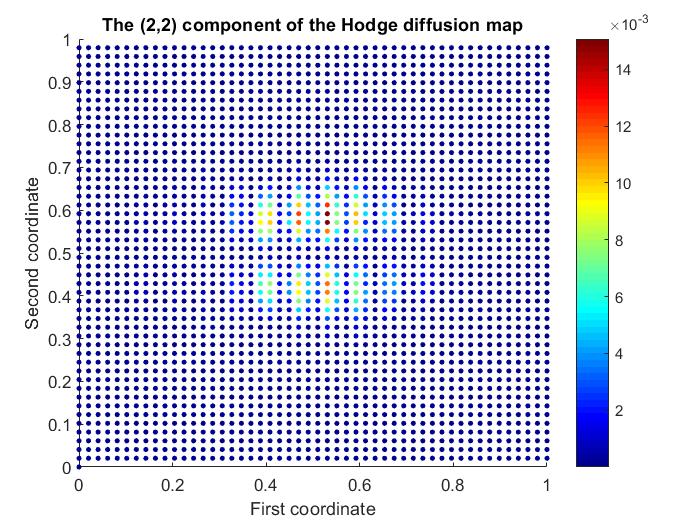}
\includegraphics[width=0.49\textwidth]{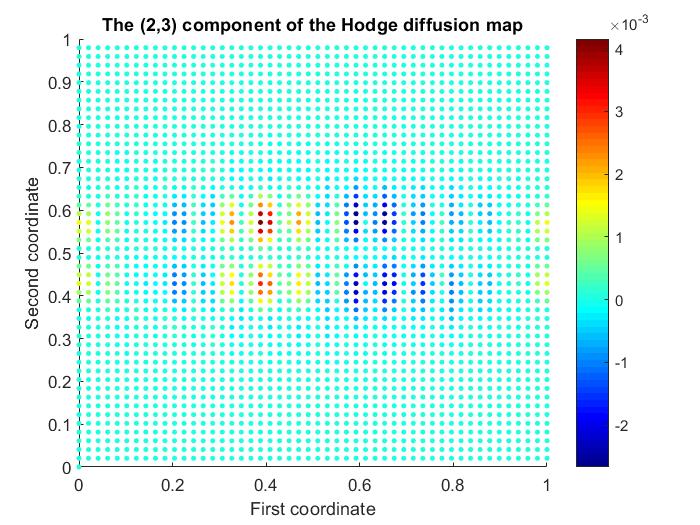}
\includegraphics[width=0.49\textwidth]{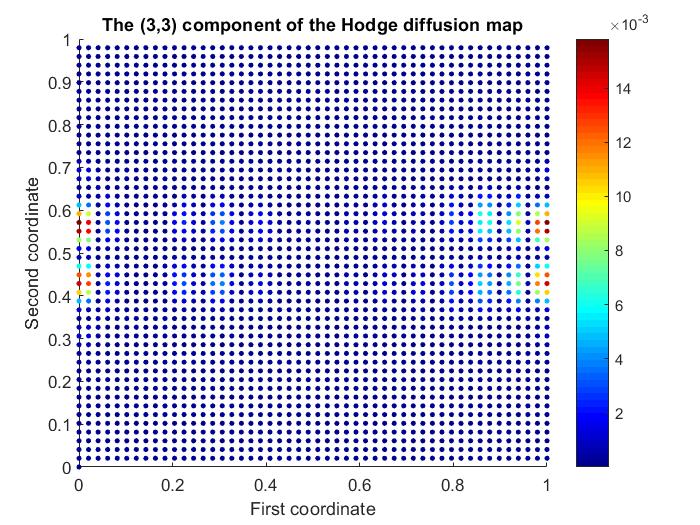}
\caption{Plot of the $(c_1, c_2)$ components, where \(1 \leq c_1 \leq c_2 \leq 3\), of the second order normalized Hodge Diffusion Maps $\eta^{\textbf{1}}_{2,3}$ as given in Eq. \eqref{embnormalizada}, for sampled points on the sphere.}
\label{figuraesfera2}
\end{figure}

\begin{figure}[htp]
\centering
\includegraphics[width=0.49\textwidth]{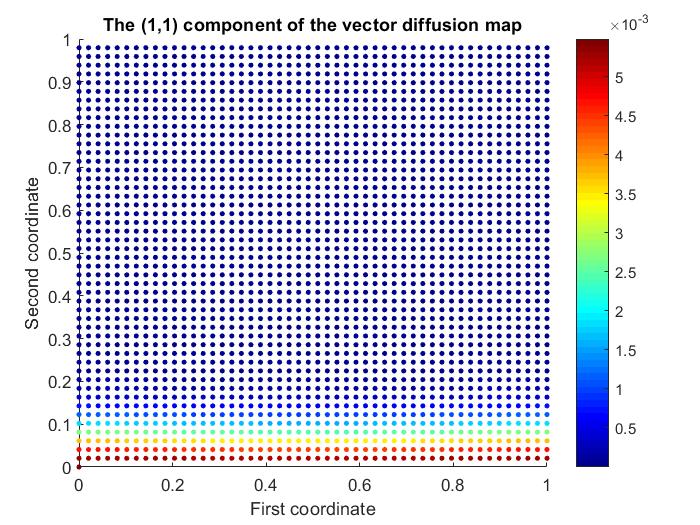}
\includegraphics[width=0.49\textwidth]{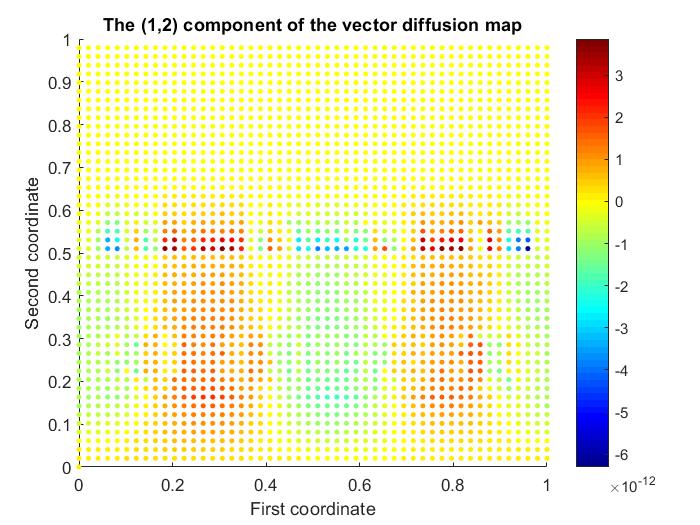}
\includegraphics[width=0.49\textwidth]{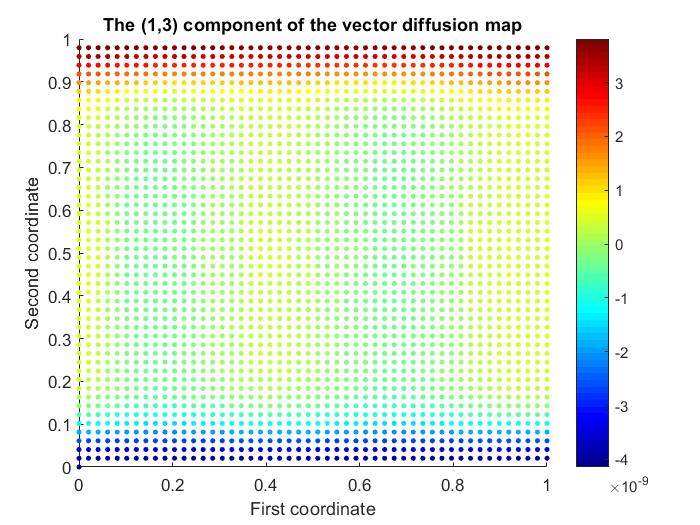}
\includegraphics[width=0.49\textwidth]{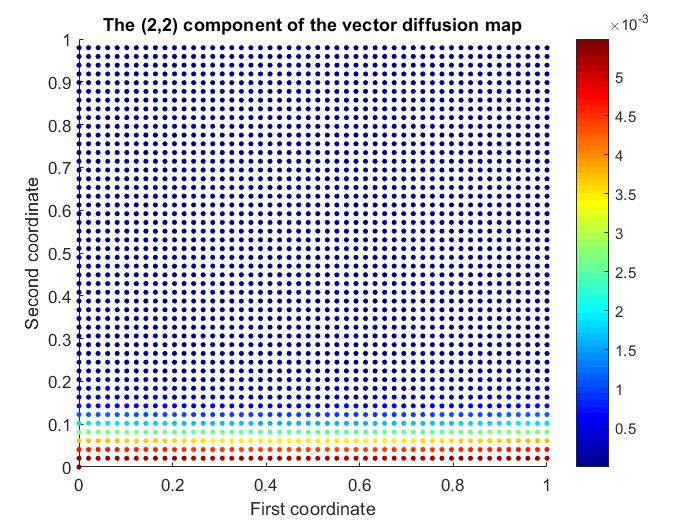}
\includegraphics[width=0.49\textwidth]{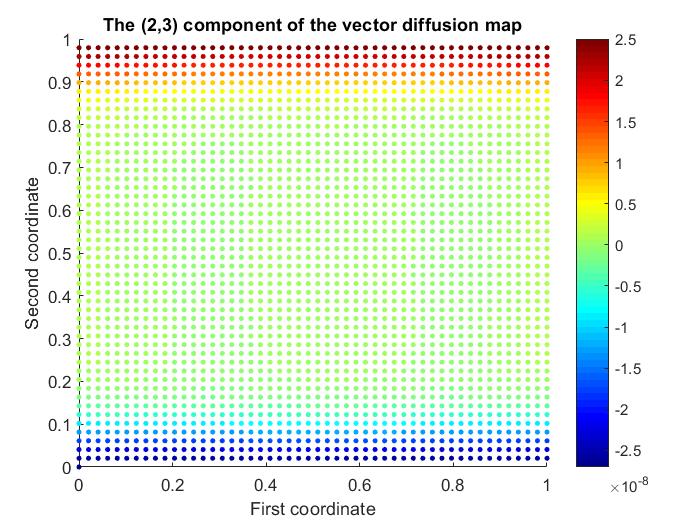}
\includegraphics[width=0.49\textwidth]{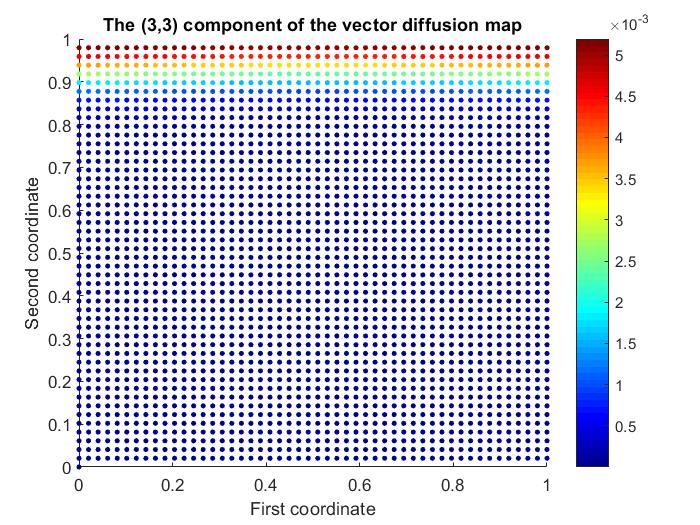}
\caption{Plot of the $(c_1, c_2)$ components, where \(1 \leq c_1 \leq c_2 \leq 3\), of vector diffusion maps for sampled points on the sphere.}
\label{vdmesfera}
\end{figure}

Additionally, in \cref{embedingdiagesfera}, we display the diagonals of the normalized Hodge and vector diffusion maps embeddings. The first and second rows correspond to the first-order ($k=1$) and second-order ($k=2$) normalized Hodge diffusion maps, respectively, while the third row shows the vector diffusion maps. The left column contains the first two components, and the right column contains the first three components of the respective diagonals.

To evaluate the proposed methodology, we compare it with t-SNE, PCA, and diffusion maps algorithms applied to the dataset $X$  as shown in \cref{otrosesfera}. The colorbar and dataset organization follow the same convention as in \cref{figuraesfera}. As illustrated in \cref{embedingdiagesfera}, the two- and three-dimensional embeddings produced by the first- and second-order Hodge diffusion maps (\(k=1\) and \(k=2\)), respectively, map the vertical sections of the dataset \(X\), corresponding to points where the first coordinate \(u_i\) is constant and assigned the same color, into distinct regions. These regions can then be separated by linear classifiers, providing a method to divide the dataset based on its vertical sections. Thus, the proposed methodology provides a useful toolbox for classifying points in the dataset based on topological patterns.

In contrast, vector diffusion maps embed the dataset into two straight lines, failing to differentiate between distinct vertical sections. Similarly, both the t-SNE and PCA algorithms transform the vertical sections into nonlinear curves, making it difficult to apply linear classifiers to separate the data based on these sections. Additionally, the diffusion map algorithm struggles to differentiate between these vertical sections. 

Among all the algorithms tested, Hodge diffusion maps is the only method that enables the use of linear classifiers in the embeddings to categorize the dataset based on points with similar topological structures defined by the vertical sections.

\begin{figure}[htp]
\centering
\includegraphics[width=0.49\textwidth]{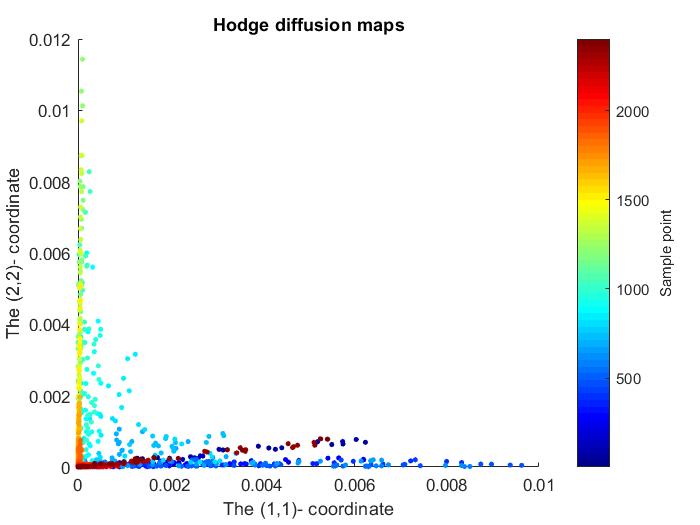}
\includegraphics[width=0.49\textwidth]{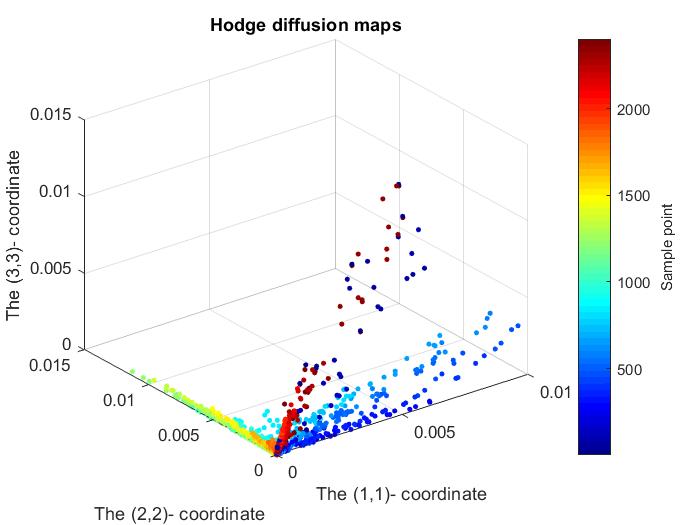}
\includegraphics[width=0.49\textwidth]{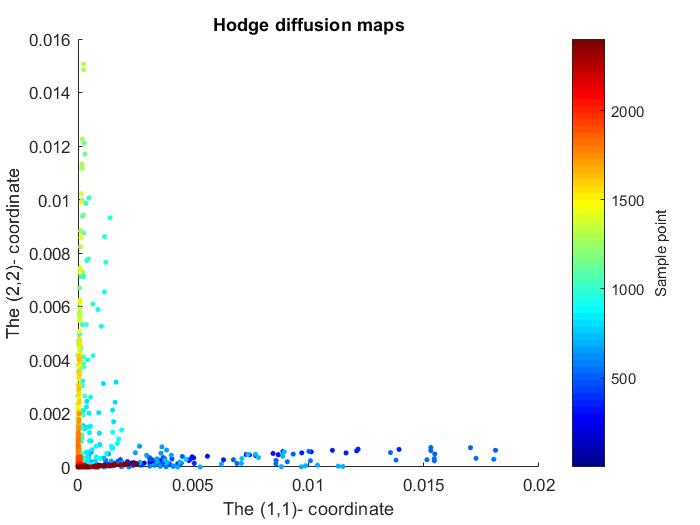}
\includegraphics[width=0.49\textwidth]{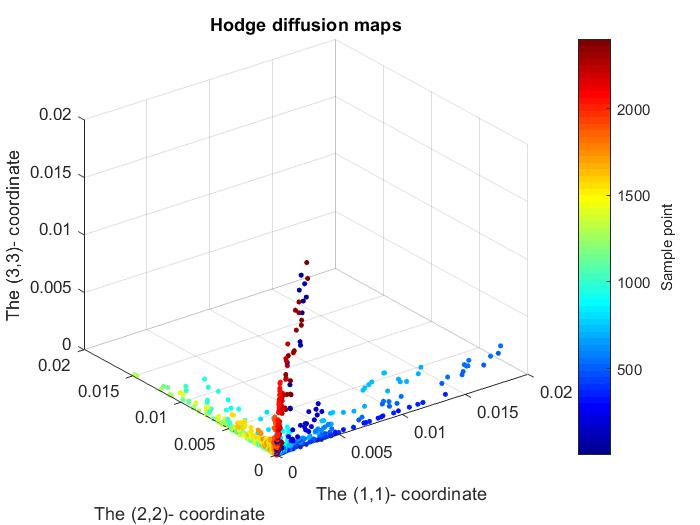}
\includegraphics[width=0.49\textwidth]{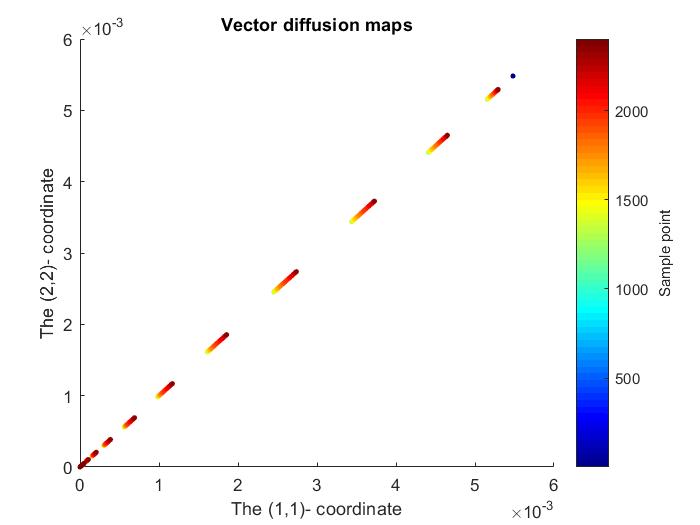}
\includegraphics[width=0.49\textwidth]{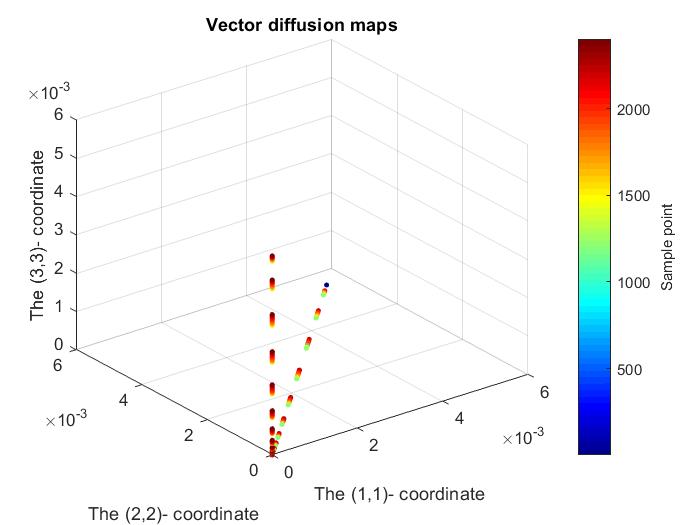}
\caption{Plot of the diagonal coordinates of the Hodge diffusion maps and vector diffusion maps. The first row shows the Hodge diffusion maps of the first order ($k=1$). The second row shows the second order ($k=2$) Hodge diffusion maps, and the third row shows the vector diffusion map. In the left column, we plot the first two diagonal coordinates, $(1,1)$ and $(2,2)$, and in the right column, we plot the first three diagonal coordinates, $(1,1)$, $(2,2)$, and $(3,3)$. The colorbar indicates the order of the points, matching the colorbar in Figure \ref{figuratoro}. The dataset consists of points sampled over the sphere.}
\label{embedingdiagesfera}
\end{figure}

\begin{figure}[htp]
\centering
\includegraphics[width=0.49\textwidth]{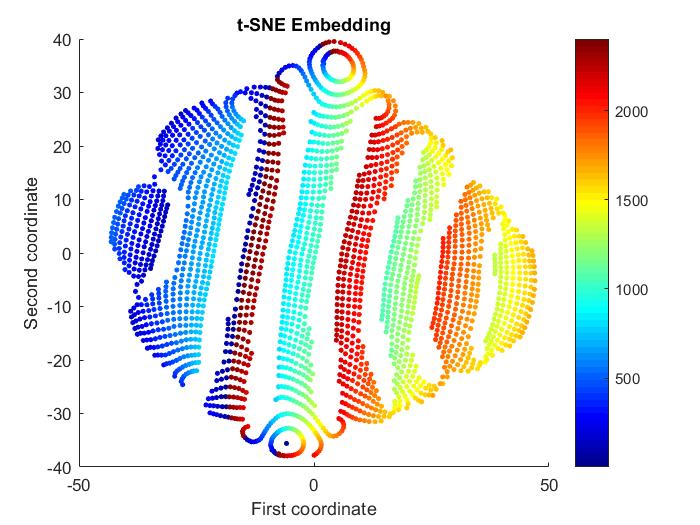}
\includegraphics[width=0.49\textwidth]{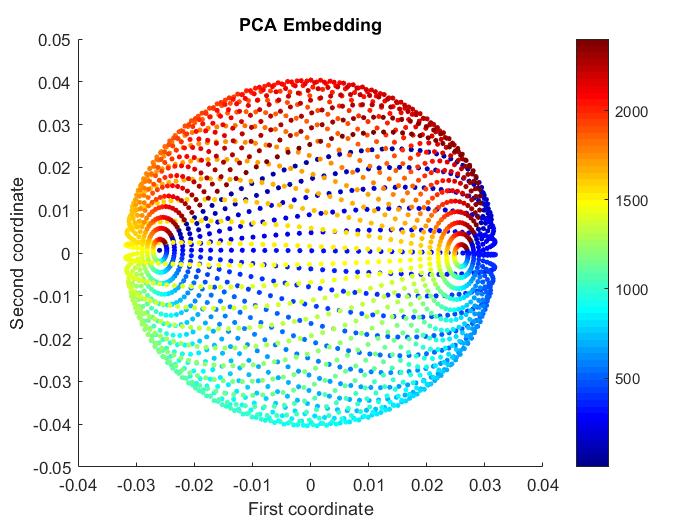}
\includegraphics[width=0.49\textwidth]{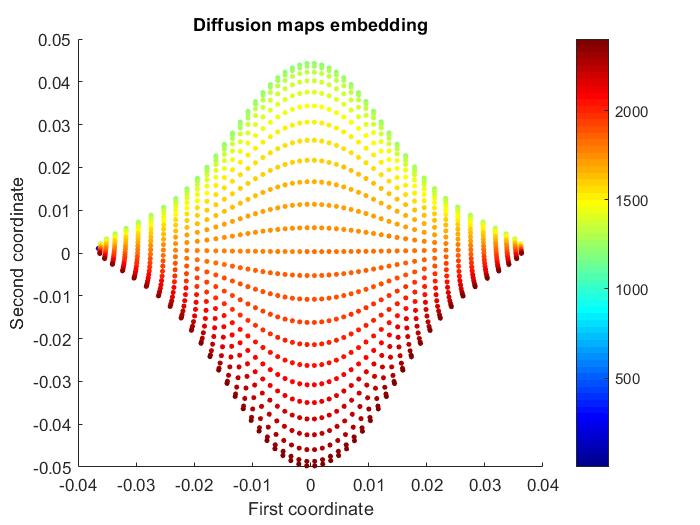}
\caption{Plot of the t-SNE, PCA, and Diffusion Maps algorithms applied to the dataset sampled on the sphere $S^2$.}
\label{otrosesfera}
\end{figure}

\section{Conclusions and future directions}
\label{conclusiones}
In this paper, we introduce Hodge diffusion maps, a generalization of both vector diffusion maps and classical diffusion maps. Assuming the dataset lies on a manifold, our algorithm leverages the $k$-th Hodge Laplacian—closely connected to the $k$-th cohomology group via Hodge theory—to extract topological features. While classical diffusion maps correspond to the computation of the zero-order Hodge Laplacian, and vector diffusion maps to the connection Laplacian (equivalent to the first-order Hodge Laplacian), our approach extends naturally to compute the Hodge Laplacian of any order $k\geq1$. This enables the extraction of richer topological information from the dataset. 

We validate our approach through two numerical experiments on datasets sampled from a torus and a sphere. In the first experiment (torus), both the proposed Hodge diffusion maps and classical diffusion maps successfully embed the vertical sections of the dataset as straight lines in both 2D and 3D Euclidean space, whereas the other methods fail to capture this structure. In the second experiment (sphere), only Hodge diffusion maps successfully separate each vertical section into distinct regions in both 2D and 3D embeddings, while classical diffusion maps do not achieve this distinction. Additionally, the embeddings produced by t-SNE and PCA map points with the similar topological structure -defined by the vertical sections- onto nonlinear curves. This complicates the use of linear classifiers to separate the data based on these vertical sections, further highlighting the superior performance of the proposed algorithm over these methods.

These experiments demonstrate that Hodge diffusion maps, as a dimensionality reduction technique, more effectively capture the topological structure of a dataset by mapping points with similar topological features to nearby regions in Euclidean space. This facilitates the use of linear classifiers to categorize the embedded points, underscoring Hodge diffusion maps as a valuable complementary tool for extracting additional topological information from the dataset.  

Based on the connection between diffusion map theory and vector diffusion maps with cryo-electron microscopy (Cryo-EM), as in Refs. \cite{singer2011three, singer2012vector, coifman2008graph}, we plan to explore the use of Hodge diffusion maps as a pre-processing tool for heterogeneous particles in future work. Another potential direction is to incorporate the Hodge-Laplacian matrix, as outlined in \cref{generalizationdifmaps}, as a penalty term for regularizing neural networks. This approach could enhance the network's ability to extract topological features from the dataset, thereby improving its robustness during training.

\bigskip

\noindent\textbf{Acknowledgements.} The first author was supported by Centro de Modelamiento Matemático (CMM) BASAL fund FB210005 for center of excellence from ANID-Chile. The second author was supported by Fondecyt ANID postdoctoral grant 3220631.

\bibliographystyle{alpha}

\bibliography{ref}

\appendix
\section{Alternanting forms and alternating arrays}
\label{appeA}
\label{aprendiceformas}
A vector subspace $V\subseteq\mathbb{R}^{n}$ is (non-canonically) isomorphic to its dual $V^*$. Fixing an inner product on $V$ induces a natural isomorphism $V\simeq V^*$, which in turn establishes corresponding isomorphisms between various spaces constructed from $V$ and $V^*.$ For instance, this yield isomorphism between
$$\underbrace{V \otimes \cdots \otimes V}_{k \text{-times}} \text{ and } \underbrace{V^* \otimes \cdots \otimes V^*}_{k \text{-times}},$$ 
as well as between the exterior powers
$$\bigwedge^k(V) \text{ and } \bigwedge^k(V^*).$$  

Since we are dealing with discrete data, a more concrete representation of the vector subspace $V$ and its associated constructions is needed for numerical implementation, rather than relying solely on its abstract definition. To address this, we introduce the notion of a $k$-dimensional real array of size $(n_1,n_2,\cdots,n_k).$ This approach provides a framework for defining alternating forms and, subsequently, differential forms in a manner that is better suited for numerical computations.

For every natural number $n$, we denote $I_n=\llave{1,2,\dots,n}$  and $S_n$ be the set of all the permutations of $I_n$. A $k$-dimensional real array of size $(n_1,n_2,\cdots,n_k)$ is defined as a function 
$$f: I_{n_1}\times \cdots \times I_{n_k} \to \mathbb{R}.$$ 
In particular, a $1$-dimensional array of size $(n)$ corresponds to a vector of $\R^n,$ while $2$-dimensional array of size $(n_1,n_2)$ corresponds to an $n_1\times n_2$ matrix. In this sense, $k$-dimensional array naturally generalize the notions of vectors and matrices. 

We denote the set of $k$-dimensional arrays of size  $(n_1,n_2,\dots,n_k)$ by $M(n_1,n_2,\dots,n_k)$.
Given two arrays, one of dimension \( k \) (denoted \( f \in M(n_1, n_2, \dots, n_k) \)) and the other of dimension \( l \) (denoted \( g \in M(m_1, m_2, \dots, m_l) \)), we define their tensor product \( f \otimes g \) as a \( k+l \)-dimensional array in \( M(n_1, n_2, \dots, n_k, m_1, m_2, \dots, m_l) \), given by
\begin{equation*}
    f \otimes g (i_1,\cdots,i_k,j_1,\cdots,j_l)=f(i_1,\cdots,i_k)g(j_1,\cdots,j_l)
\end{equation*}
with $i_s \in I_{n_s}$ and $j_{\hat{s}} \in I_{m_{\hat{s}}}$ with $1 \le s \le k$ and $1 \le \hat{s} \le l$.
We endow the space $M(n_1,n_2,\dots,n_k)$ with the Frobenius inner product, defined as
 \begin{equation}\label{productointernoarreglos}
     <A,B>_{F}= \sum_{i_1=1}^{n_1} \sum_{i_2=1}^{n_2} \cdots \sum_{i_k=1}^{n_k} A(i_1,i_2, \cdots, i_k) B(i_1,i_2, \cdots, i_k)
 \end{equation}
 for all $A,B \in M(n_1,n_2,\dots,n_k)$. Now observe that $M(n)$ corresponds to $\mathbb{R}^n,$ so any linear subspace $V$ of $\mathbb{R}^n$ naturally defines a subspace of $M(n).$ Given an $d$-dimensional linear subspace $V\subseteq \mathbb{R}^n$, the tensor product space  
 $$\underbrace{V \otimes \cdots \otimes V }_{k-times}$$ is identified with the linear subspace of $M(\underbrace{n,\dots,n}_{k-times})$ spanned by the elements
\begin{equation*}
    \{ v_1 \otimes v_2 \otimes \cdots \otimes v_k | v_i \in V. \}
\end{equation*}
Here, $V$ is identified with its corresponding subspace in $M(n).$ 

\begin{definition}
\label{def:form2811}        
    A $k-$alternating form $\omega:\underbrace{V \times V \times \cdots \times V}_{k-times} \to \mathbb{R}$ defined over a vector space $V$ is a multilinear map such that is alternating, that is, if for all vectors $v_1,v_2,\cdots,v_k$ and and any permutation $\sigma \in S_k$
    \begin{equation*}
        \omega^{\sigma}(v_1,\dots,v_k):=\omega(v_{\sigma(1)},\dots,v_{\sigma(k)})=(\text{sign} \, \sigma) \omega(v_1,\dots,v_k).
        \end{equation*}
    We denote the set of $k$-alternating forms as $\bigwedge^k(V^*)$
\end{definition}
Since we have the Frobenius product, it induces an isomorphism between 
$$\underbrace{V \otimes V \otimes \cdots \otimes V}_{k-times} \text{ and } \underbrace{(V \otimes V \otimes \cdots \otimes V)^*}_{k-times}.$$ 
As a result, each alternating form $\omega$ corresponds uniquely to a $k$-dimensional array $W$, allowing us to switch between $\omega$ and $W$  using this isomorphism. We introduce the following definition:

\begin{definition}\label{def:alternatingmulti}
  A $k$-dimensional array $ f \in \underbrace{V \otimes V \otimes \cdots \otimes V}_{k-times}\subseteq M(\underbrace{n,\dots,n}_{k-times})$  is called \( k \)-alternating array in \( V \) if it satisfies the following condition:
\begin{itemize}
    \item \textbf{C1.} For all indices $ i_1,i_2,\cdots,i_k \in I_n$ and all permutation $\sigma \in S_k$, we have:
$f(i_{\sigma(1)},i_{\sigma(2)},\cdots,i_{\sigma(k)})=(\text{sign} \, \sigma)f(i_1,i_2,\cdots,i_k)$.
\end{itemize}
We denote the set of $k$-alternating arrays as $\Theta^k(V)$.
\end{definition}

 An important property is that any $k$-dimensional array $f \in \underbrace{V \otimes \cdots \otimes V}_{k \text{-times}}$, even if it is not alternating, satisfies the following property, whose proof is straightforward

 \begin{pro} \label{propiedad:alternante}
For every permutation $\sigma \in S_{k}$, we let $f^{\sigma}(i_1, \cdots, i_{k}):=f(i_{\sigma(1)}, \cdots, i_{\sigma(k)})$, then for any vectors  $v_1,\cdots, v_k \in V$ we have:
\begin{equation*}
     <f^{\sigma}, v_1 \otimes v_2 \cdots \otimes v_{k}>_{F}=<f,v_{\sigma(1)} \otimes v_{\sigma(2) } \cdots \otimes v_{\sigma(k)} >_{F}
\end{equation*} 
\end{pro}

\begin{prop}
\label{proposicion:universal}
    Let $V$ a $d$-dimensional linear subspace of $\mathbb{R}^n$ and $\omega \in \bigwedge^k(V^{*}
    ) $ an alternating $k$-form. Then, there exists an unique $k$-alternating array $W \in \Theta^k(V)$ such that:
    \begin{equation}\label{eq:tensorproductoexistencia}
        \omega(v_1,v_2,\cdots,v_k)= < W, v_1 \otimes v_2 \cdots \otimes v_k>_{F}
    \end{equation} 
    where $< \cdot , \cdot>$ is the Frobenius inner product for arrays defined as in \cref{productointernoarreglos}.
\end{prop}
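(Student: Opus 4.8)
The plan is to separate the statement into three tasks: construct a candidate array $W$, check that it is alternating (so that it genuinely lies in $\Theta^k(V)$), and establish uniqueness. The whole argument rests on two structural facts already available: the universal property of the tensor product, and the fact that the Frobenius form \cref{productointernoarreglos} is positive definite, hence non-degenerate, on the finite-dimensional subspace $\underbrace{V\otimes\cdots\otimes V}_{k\text{-times}}\subseteq M(\underbrace{n,\dots,n}_{k\text{-times}})$.

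First I would produce the array. Since $\omega$ is multilinear, the universal property of the tensor product yields a unique linear functional $\Phi$ on $V^{\otimes k}$ satisfying $\Phi(v_1\otimes\cdots\otimes v_k)=\omega(v_1,\dots,v_k)$ on decomposable tensors. Because the Frobenius inner product restricts to a non-degenerate form on the finite-dimensional space $V^{\otimes k}$, the Riesz representation theorem provides a unique $W\in V^{\otimes k}$ with $\Phi(T)=\langle W,T\rangle_F$ for every $T\in V^{\otimes k}$. Evaluating on $T=v_1\otimes\cdots\otimes v_k$ gives exactly \cref{eq:tensorproductoexistencia}. This simultaneously delivers existence and the uniqueness of $W$ \emph{within} $V^{\otimes k}$, since the decomposable tensors span $V^{\otimes k}$ and a non-degenerate pairing cannot vanish identically against a spanning set unless the array is zero.

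The main step, and the one requiring the alternating hypothesis, is to show $W\in\Theta^k(V)$, i.e.\ that $W$ satisfies condition \textbf{C1} of \cref{def:alternatingmulti}. Here I would invoke \cref{propiedad:alternante}: for any $\sigma\in S_k$ and any $v_1,\dots,v_k\in V$,
\[
\langle W^{\sigma},v_1\otimes\cdots\otimes v_k\rangle_F=\langle W,v_{\sigma(1)}\otimes\cdots\otimes v_{\sigma(k)}\rangle_F=\omega(v_{\sigma(1)},\dots,v_{\sigma(k)})=(\operatorname{sign}\sigma)\,\langle W,v_1\otimes\cdots\otimes v_k\rangle_F,
\]
where the last equality is the alternating property of $\omega$ from \cref{def:form2811}. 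Thus $W^{\sigma}$ and $(\operatorname{sign}\sigma)\,W$ pair identically against every decomposable tensor. Since permuting indices maps $V^{\otimes k}$ into itself, $W^{\sigma}-(\operatorname{sign}\sigma)\,W$ again lies in $V^{\otimes k}$ and is Frobenius-orthogonal to a spanning set of that space; non-degeneracy forces it to be zero, which is precisely \textbf{C1}. Hence $W\in\Theta^k(V)$, and uniqueness in the smaller space $\Theta^k(V)\subseteq V^{\otimes k}$ follows a fortiori from the uniqueness already obtained.

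I expect the only genuinely delicate point to be the final orthogonality argument: one must be careful that the equality of pairings holds against all decomposable tensors $v_1\otimes\cdots\otimes v_k$ with factors in $V$ (not merely in $\mathbb{R}^n$), and that $W^{\sigma}$ stays inside $V^{\otimes k}$ so that non-degeneracy of the Frobenius form on this subspace can be applied. Everything else is a routine bookkeeping of the universal property and Riesz representation, so I would keep those steps brief and concentrate the write-up on the verification of the alternating condition.
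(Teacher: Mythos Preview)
Your proposal is correct and follows essentially the same route as the paper's own proof: both use the universal property of the tensor product to linearize $\omega$, then Riesz representation on $V^{\otimes k}$ to produce $W$, and finally \cref{propiedad:alternante} together with uniqueness to verify the alternating condition. Your write-up is in fact a bit more explicit than the paper's about why $W^{\sigma}$ remains in $V^{\otimes k}$ and why non-degeneracy on that subspace is the relevant fact, which is a welcome clarification.
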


\begin{proof}
This follows from the fact that the isomorphism between $V$ and $V^*$ induced by an inner product $(\cdot,\cdot)$ is given explicitly by the mapping 
$$v\in V\to v^*\in V^*, \text{ where } v^*(t):=(v,t).$$ More precisely, using the formalism introduced so far: By the universal property of the tensor product (\cite[Proposition 12.7]{lee2012introduction}), there exists a unique linear map 
$$L: V \otimes \cdots \otimes V \to \mathbb{R}$$ 
such that the following commutative diagram holds:
\begin{figure}[H]
\centering
\includegraphics[width=.41\textwidth]{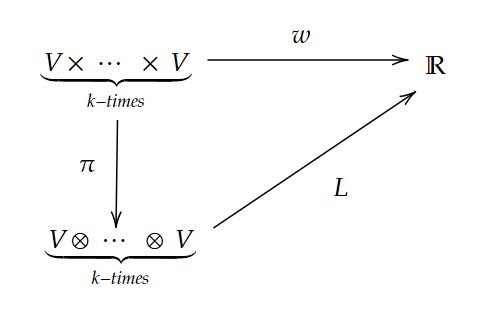}
\caption{Commutative diagram for the functions $w$, $L$ and $\pi$.}
\label{fig:teoremauniversal}
\end{figure}
Here, the map 
$$\pi:\underbrace{V \times \cdots \times V}_{k-times} \to \underbrace{V \otimes \cdots \otimes V}_{k-times}$$ is defined by 
$$\pi(v_1,\cdots,v_k)=v_1\otimes \cdots \otimes v_k.$$ By the Riesz representation theorem, there exists a unique $k$-dimensional array $W \in \underbrace{V \otimes \cdots \otimes V}_{k-times}$ such that 
$$L(v_1 \otimes v_2 \cdots \otimes v_k)=<W,v_1 \otimes v_2 \cdots \otimes v_k>_{F}$$  
for all $v_1,v_2, \dots, v_k \in V.$ This together with the commutative diagram proves \cref{eq:tensorproductoexistencia}. To show that $W$ is alternating, let $\sigma \in S_k$ be a permutation and consider the $k$-dimensional array $W^\sigma$ as in \cref{propiedad:alternante}. Then by the same property, we obtain
\begin{equation*}
\begin{array}{rcl}
     <W^{\sigma}, v_1 \otimes v_2 \cdots \otimes v_{k}>_{F}&=&<W,v_{\sigma(1)} \otimes v_{\sigma(2) } \cdots \otimes v_{\sigma(k)} >_{F}\\
     \, & = & w(v_{\sigma(1)}, v_{\sigma(2) }, \cdots ,v_{\sigma(k)}) \\
    \, & =  &(\text{sgn} \, \sigma) w(v_1,\cdots,v_k)
     \end{array}.
\end{equation*}
By the uniqueness of $W,$ it follows that $W^{\sigma}=(\text{sgn} \, \sigma)W,$ which completes the proof.
\end{proof}

\begin{examp}
    As an example illustrating \cref{proposicion:universal}, consider the determinant  as alternating form on $\R^n.$ For any vector $v_1,v_2, \cdots, v_n$, we have
    \begin{equation*}
    \begin{array}{rcl}
        \det(v_1,v_2, \cdots, v_n) & = & \sum_{\sigma \in S_n} (\text{sgn} \, \sigma) v_{1}(\sigma(1))v_{2}(\sigma(2)) \cdots v_{n}(\sigma(n)) \\
        \, & = & < W, v_1 \otimes v_2 \cdots \otimes v_n>_{F} \\
         \end{array}
    \end{equation*}
    where the array $W(i_1, \cdots,i_n)$ is defined as $\text{sgn} \, \sigma $ if there exist a permutation $\sigma $ with $\sigma(s)=i_s$ for $1\le s \le n$ and $0$ otherwise. In the two dimensional case $\R^2$ the $2$-alternating array $W$ is given by
 $$\begin{pmatrix}
  0 & 1\\ 
  -1 & 0
\end{pmatrix}.$$
    \end{examp}

\begin{rem}\label{remarkisomorfismo}
\cref{proposicion:universal} establishes an isomorphism between $k$-alternating forms $\bigwedge^k(V^*)$ and $k$-alternating arrays $\Theta^k(V),$ which $I: \bigwedge^k(V^*) \to \Theta^k(V).$
\end{rem}

\begin{rem}\label{observacionprojectionalternada}
    Given an alternating form $\omega \in \bigwedge^k(\R^n)^*$ and a linear subspace $V$ of $\R^n$, the restriction $\omega |_{V \times \cdots\times V}$ induces an alternating form in $\bigwedge^k(V)$. In this case, it is straightforward to show that $I(\omega |_{V \times \cdots\times V})= \mathcal{P}_{V \otimes \cdots \otimes V}(I(\omega))$, where $\mathcal{P}_{V \otimes \cdots \otimes V}$ is the orthogonal projection onto the subspace $V \otimes \cdots \otimes V$. Therefore for any $k$-alternating array $W \in \Theta^k(\R^n)$ the projection onto the tensor space ${V \otimes \cdots \otimes V}$ induces an $k$-alternating array on the linear subspace $V$, which is given by $\mathcal{P}_{V \otimes \cdots \otimes V}(W) \in \Theta^k(V)$.
 \end{rem}

Via the isomorphism $I,$ we can compute the wedge product of $k$-alternating forms using $k$-alternanting arrays. Consequently, all possible (discrete) computations of differential $k$-forms will inherently rely on $k$-alternating arrays, if this isomorphism is not explicitly mentioned. For instance, let $\omega_1$ be a $k_1$-alternanting form and $\omega_2$ a $k_2$-alternating form. Recall that their wedge product $w_1 \wedge w_2 $ is a $(k_1+k_2)$-alternanting form given by
\begin{equation*}
    \omega_1 \wedge \omega_2=\frac{1}{k_1! k_2!} \sum_{\sigma \in S_{k_1+k_2}} (\text{sgn} \, \sigma) (\omega_1 \otimes \omega_2)^{\sigma}, 
\end{equation*}
where $\omega_1 \otimes \omega_2(u,v)=\omega_1(u) \omega_2(v)$, and $\omega^{\sigma}$ is defined as     in \cref{def:form2811}. In this framework, we obtain the compatibility relation  
\begin{equation*}
    I(\omega_1 \wedge \omega_2)=I(\omega_1) \wedge I(\omega_2),
\end{equation*}
where the wedge product on the right-hand side is defined in the same manner as in differential forms. Naturally, this identity can be verified directly using the isomorphism $I$ and the previously established properties. We illustrate this in the following proposition: 

\begin{prop}
\label{isobundles}
Let $\omega_1 \in \Lambda^{k_1}(V^*) $ and $\omega_2 \in \Lambda^{k_2}(V^*)$ then:

\begin{equation*}
    I(\omega_1 \wedge \omega_2)=I(w_1) \wedge I(w_2)
\end{equation*}
\end{prop}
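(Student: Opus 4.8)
The plan is to exploit the uniqueness clause of \cref{proposicion:universal}: since $I$ is characterized by the property that $I(\omega)$ is the unique $k$-alternating array representing $\omega$ through the Frobenius pairing, it suffices to verify that the array $I(\omega_1)\wedge I(\omega_2)$ represents the form $\omega_1\wedge\omega_2$. Writing $W_1=I(\omega_1)$ and $W_2=I(\omega_2)$, I would show that for all $v_1,\dots,v_{k_1+k_2}\in V$,
\[
(\omega_1\wedge\omega_2)(v_1,\dots,v_{k_1+k_2}) = \langle W_1\wedge W_2,\, v_1\otimes\cdots\otimes v_{k_1+k_2}\rangle_F ,
\]
after which the conclusion $I(\omega_1\wedge\omega_2)=W_1\wedge W_2$ follows immediately from the uniqueness in \cref{proposicion:universal}.

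The computation of the right-hand side rests on two elementary observations. First, a factorization identity for the Frobenius pairing: for any decomposable tensor,
\[
\langle W_1\otimes W_2,\, u_1\otimes\cdots\otimes u_{k_1+k_2}\rangle_F = \langle W_1,\, u_1\otimes\cdots\otimes u_{k_1}\rangle_F\;\langle W_2,\, u_{k_1+1}\otimes\cdots\otimes u_{k_1+k_2}\rangle_F ,
\]
which follows directly from the definition of the tensor product of arrays and the fact that the defining sum in \cref{productointernoarreglos} separates into a product of two independent index sums. Second, I would invoke \cref{propiedad:alternante} to transfer a permutation $\sigma\in S_{k_1+k_2}$ off the array and onto the arguments, that is, $\langle (W_1\otimes W_2)^{\sigma},\, v_1\otimes\cdots\otimes v_{k_1+k_2}\rangle_F = \langle W_1\otimes W_2,\, v_{\sigma(1)}\otimes\cdots\otimes v_{\sigma(k_1+k_2)}\rangle_F$.

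Expanding $W_1\wedge W_2$ through the antisymmetrization formula recalled just before the proposition, and applying these two observations together with the defining relation $\omega_i(\cdot)=\langle W_i,\cdot\rangle_F$, the pairing becomes
\[
\langle W_1\wedge W_2,\, v_1\otimes\cdots\otimes v_{k_1+k_2}\rangle_F = \frac{1}{k_1!\,k_2!}\sum_{\sigma\in S_{k_1+k_2}}(\operatorname{sgn}\sigma)\,\omega_1(v_{\sigma(1)},\dots,v_{\sigma(k_1)})\,\omega_2(v_{\sigma(k_1+1)},\dots,v_{\sigma(k_1+k_2)}) ,
\]
which is precisely the standard expansion of $(\omega_1\wedge\omega_2)(v_1,\dots,v_{k_1+k_2})$. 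This matches the left-hand side, and the proof is complete.

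The only delicate point is the bookkeeping: one must keep the permutation conventions consistent between the wedge product of forms and the wedge product of arrays, so that the same normalization $\tfrac{1}{k_1!\,k_2!}$ and the same sign $\operatorname{sgn}\sigma$ appear on both sides, and confirm that $W_1\wedge W_2$ is defined by exactly the antisymmetrization formula recalled above. Once the factorization identity and \cref{propiedad:alternante} are in place, no genuine obstacle remains; the statement is essentially that antisymmetrization commutes with the Riesz-type identification $I$.
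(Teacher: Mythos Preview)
Your proposal is correct and follows essentially the same route as the paper's proof: expand $\langle I(\omega_1)\wedge I(\omega_2),\, v_1\otimes\cdots\otimes v_{k_1+k_2}\rangle_F$ via the antisymmetrization formula, apply \cref{propiedad:alternante} to shift each permutation onto the tensor arguments, factor the Frobenius pairing over the tensor product, recognize the result as $(\omega_1\wedge\omega_2)(v_1,\dots,v_{k_1+k_2})$, and conclude by the uniqueness clause of \cref{proposicion:universal}. The paper's argument is line-for-line the same, including the two key ingredients you identify.
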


\begin{proof}
  Observe that for all $v_1,\cdots, v_{k_1}, v_{k_1+1}, \cdots, v_{k_1+k_2} \in V$, we have, by \cref{propiedad:alternante}

    \begin{equation*}
    \begin{split}
        &<I(\omega_1)\wedge I(\omega_2), v_1\otimes \cdots \otimes v_{k_1} \otimes v_{k_1+1} \otimes \cdots \otimes v_{k_1+k_2}>_{F}\\
       &= \frac{1}{k_1! k_2!} \sum_{\sigma \in S_{k_1+k_2}} (\text{sgn} \, \sigma) <I(\omega_1) \otimes I(\omega_2), v_{\sigma (1)}\otimes \cdots \otimes v_{\sigma (k_1)} \otimes v_{\sigma (k_1+1)} \otimes \cdots \otimes v_{\sigma (k_1+k_2)}>_{F}\\
       &=\frac{1}{k_1! k_2!} \sum_{\sigma \in S_{k_1+k_2}} (\text{sgn} \, \sigma) <I(\omega_1) , v_{\sigma (1)}\otimes \cdots \otimes v_{\sigma (k_1)} >   < I(\omega_2),  v_{\sigma (k_1+1)} \otimes \cdots \otimes v_{\sigma (k_1+k_2)}>_{F}\\
       &=\frac{1}{k_1! k_2!} \sum_{\sigma \in S_{k_1+k_2}} (\text{sgn} \, \sigma) (\omega_1 \otimes \omega_2)^{\sigma} ( v_{1}, \cdots , v_{k_1} , v_{k_1+1} , \cdots , v_{k_1+k_2})\\
       &= (\omega_1 \wedge \omega_2) (v_1,\cdots, v_{k_1}, v_{k_1+1}, \cdots, v_{k_1+k_2}).
        \end{split}
    \end{equation*}
    Since $I(\omega_1) \wedge I(\omega_2)$ belongs to $\Theta^{k_1+k_2}(V)$, the uniqueness of \cref{eq:tensorproductoexistencia} guarantees that 
    $$I(\omega_1) \wedge I(\omega_2)= I(w_1 \wedge w_2).$$ 
\end{proof}
We conclude this section by recalling the following result, which will be used in various calculations: If $v_1, v_2, \dots, v_d$ form an orthonormal basis for  $V$, then the set of wedge products
\begin{equation}
\label{wpbase}
    \llave{ \frac{1}{\sqrt{k!}}v_{i_1}^{*} \wedge v_{i_2}^{*}  \wedge \cdots \wedge v_{i_k}^{*} \mid i_1<i_2< \cdots < i_k }
\end{equation}
constitutes an orthonormal basis for $\Lambda^{k}(V^*)$, where the inner product on $\Lambda^{k}(V^*)$ is given by
\begin{equation*}
    <\omega_1,\omega_2>_{\Lambda^{k}(V^*)}=<I(\omega_1), I(\omega_2)>_{F}
\end{equation*}

\section{Proof of \cref{teoremaprincipal}}
\label{apendicepruebaprincipal}
In this section, we present the technical details supporting \cref{teoremaprincipal}. The proof builds upon the framework developed in \cite{10227282}, with several components adapted to fit our setting. For additional background and a more comprehensive treatment of the underlying concepts, we refer the reader to \cite{10227282}, as well as to \cite{do1992riemannian, do2016differential} for a thorough introduction to differential geometry.

Recall that $\mathcal{M}$ is a closed (i.e., compact without boundary) Riemannian manifold and let $x \in \mathcal{M}.$ For a small positive real number $\varepsilon$, consider the map $\psi=\exp_{x} \circ \,T : B(0,\varepsilon)\subset \R^d \to \mathcal{M}$ which defines a normal coordinate system around the point $x$. Here, $\exp_{x}$ denotes the exponential map at $x$, and $T:\R^d \to T_{x} \mathcal{M}$ is a rotation from $\R^d$ onto the tangent space $T_{x} \mathcal{M}$, which is considered as subset of $\R^n$. Note that $\psi(0)=x$. We now recall some estimates in normal coordinates system that are useful for approximating differential operators. The Taylor expansion of $\psi$ around the point $0$ is given by
\begin{equation}
    \psi(v)= x+ T (v)+\frac{1}{2} D^2\psi_0(v,v)+ O(\|  v\|^3),
    \label{taylorexpo}
\end{equation}
where $D^2\psi_0$ denotes the second order differential (also known as the Hessian) of $\psi$ at $0$. Let $v \in B(0,\varepsilon)\subset \R^d$, and consider the geodesic $\gamma_{T(v)}$,  with initial tangent vector $T(v) \in T_{x} \mathcal{M}$. Then, the expansion in \cref{taylorexpo} can be rephrased in terms of the geodesic as  
\begin{equation}
   \gamma_{T(v)}(t)=x+T(v)\,t+\frac{1}{2} D^2\psi_0\,(v,v) t^2+O(\|v\|^3)t^3,
   \label{taylorgeodesica}
\end{equation}
for $t \in \R$. Since the covariant derivative of a geodesic vanishes, we have that $\gamma''_{{T(v)}}$ is orthogonal to $T_{x} \mathcal{M}$. Therefore, from \cref{taylorexpo,taylorgeodesica}, we obtain the following estimates
\begin{equation}
\|\psi(v)-x\|^2= \|T(v)\|^2+O(\|v\|^4),
\label{estimativaorden2}
\end{equation}
and
\begin{equation}
   \mathcal{P}_{T_{x} \mathcal{M}}(\psi(v)-x)= T(v)+O(\|v\|^3),
  \label{estimativaorden3}
\end{equation}
where $\mathcal{P}_{T_{x} \mathcal{M}}$ denotes the orthogonal projection onto the tangent space  $T_{x} \mathcal{M}$. Here we have taken advantage of the fact that the manifold $\mathcal{M}$ is embedded in $\R^n$. Moreover, letting $e_1, \cdots, e_d$  be the standard basis in $\R^d$, then by differentiating \cref{estimativaorden3} with respect to the variable $v_i$ we obtain:
\begin{equation}
   \label{taylorexpansiondifferential}
     \mathcal{P}_{T_{x} \mathcal{M}}\paren{\frac{\partial \psi}{\partial v_i}(v)}= T(e_i)+O(\|v\|^2),
\end{equation}
Using Estimates~(\ref{estimativaorden2}) and~(\ref{estimativaorden3}), we conclude that there exist positive constants $M_1$ and $M_2$ such that, for $\|v\|$ small
$$\|v\|-M_2 \|v\|^3 \le \|\psi(v)-x\|\le M_1 \|v\|.$$
In particular, if $\|v\|^2 \le \frac{1}{2M_2}$, then
$$\frac{1}{2}\|v\|  \le \|\psi(v)-x\|\le M_1 \|v\|.$$
This implies that, for small $t$, we have the following inclusion:
\begin{equation}
    \label{condilocal}
    B(0,t/{M_1}) \subseteq \psi^{-1} (U(x,t^\delta)) \subseteq B(0,2 t).
\end{equation}
 where $U(x,t^\delta)$ denotes the ball in $\mathcal{M}$ centered at $x$ with radios $t^\delta$, that is 
 $$U(x,t^\delta):=\{ y \in \mathcal{M} | \|y - x \| \le t^\delta \}.$$

\subsection{Expansion of the Operator in \cref{operadorP}}
\label{ape2}
In this section, we continue with the technical development of the proof of \cref{teoremaprincipal}. The central idea is to apply the Taylor expansion of the differential form $w$ around the point $p$. To this end, we present a sequence of lemmas that progressively build toward the main result, which will be established at the end of the section.

\begin{lem}
\label{lemaprinci}
Assume that $\frac{1}{2} < \delta<1$, and let $K:\mathcal{M} \times \mathcal{M} \to \R^m$  be a vector value kernel. Define
$$P_{t,\delta}(x)=\int_{U(x,t^\delta)} K(x,y) \, e^{-\frac{\| y-x \|^2}{2 t^2}} dVol(y),$$
 where the integration is performed componentwise for the vector-valued function. Suppose that for small $t$, the function $\psi:B(0,2t^\delta) \to \mathcal{M} $ defines a normal coordinate system in a neighborhood of $x$. Let $S: \R^d \to \R^m$ be a vector value function such that
$$K(x,\psi(v))-S(v)=O(\|v\|^r),$$
and
$$K(x,y)=O(\|x-y\|^s).$$
Then, the following estimate holds:
$$ P_{t,\delta}(x)=O(   (e^{C_2 t^{4\delta-2}} -1) t^{s+d}+ t^{r+d})+\int_{\psi^{-1} (U(x,t^\delta))} S(v) \, e^{\frac{-\| T(v)\|^2}{2 t^2}} dv.$$
where $T$ is a rotation from $\R^d$ onto the tangent space $T_{x} \mathcal{M}$.
\end{lem}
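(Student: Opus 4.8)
The plan is to pull the integral back to the ball $B(0,2t^\delta)\subseteq\R^d$ through the normal coordinate chart $\psi$, so that the curved Gaussian $e^{-\|y-x\|^2/2t^2}$ gets replaced by the flat Gaussian $e^{-\|T(v)\|^2/2t^2}$ up to controllable errors. Concretely, writing $y=\psi(v)$ and $dVol(y)=\sqrt{\det g(v)}\,dv$, where $g$ is the metric in normal coordinates, I would first record the standard fact $\sqrt{\det g(v)}=1+O(\|v\|^2)$ and note that, by \cref{estimativaorden2}, $\|\psi(v)-x\|^2=\|T(v)\|^2+R(v)$ with $R(v)=O(\|v\|^4)$. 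By \cref{condilocal} the domain of integration satisfies $\psi^{-1}(U(x,t^\delta))\subseteq B(0,2t^\delta)$, so on it $\|v\|\le 2t^\delta$ and hence $|R(v)|/2t^2\le C_2\,t^{4\delta-2}$ for a suitable constant $C_2$. This is exactly where the hypothesis $\delta>\frac{1}{2}$ enters: it forces $4\delta-2>0$, so this exponent tends to $0$ as $t\to 0^+$.

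Second, I would isolate the error by writing
$$P_{t,\delta}(x)-\int_{\psi^{-1}(U(x,t^\delta))}S(v)\,e^{-\frac{\|T(v)\|^2}{2t^2}}\,dv=\int_{\psi^{-1}(U(x,t^\delta))}\bigl[K(x,\psi(v))\,E(v)-S(v)\bigr]\,e^{-\frac{\|T(v)\|^2}{2t^2}}\,dv,$$
where $E(v)=e^{-R(v)/2t^2}\sqrt{\det g(v)}$. Adding and subtracting $K(x,\psi(v))$ splits the bracket as $K(x,\psi(v))\,(E(v)-1)+\bigl(K(x,\psi(v))-S(v)\bigr)$. The second piece is $O(\|v\|^r)$ by hypothesis. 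For the first piece I would use $K(x,y)=O(\|x-y\|^s)$ together with $\|\psi(v)-x\|\le M_1\|v\|$ from \cref{condilocal} to obtain $K(x,\psi(v))=O(\|v\|^s)$, and bound $E(v)-1$ by expanding the product $(1+O(\|v\|^2))(1+b(v))$ with $|b(v)|\le e^{C_2 t^{4\delta-2}}-1$; this gives $|E(v)-1|=O(\|v\|^2)+O(e^{C_2 t^{4\delta-2}}-1)$. Hence the integrand error is $O(\|v\|^r)+O(\|v\|^{s+2})+O\bigl(\|v\|^s(e^{C_2 t^{4\delta-2}}-1)\bigr)$.

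Third, I would integrate these pointwise bounds against the Gaussian using the moment estimate $\int_{\R^d}\|v\|^p e^{-\|v\|^2/2t^2}\,dv=O(t^{p+d})$, obtained by the scaling $v=tu$ (and the fact that restricting a nonnegative integrand to a ball only decreases it), recalling $\|T(v)\|=\|v\|$ since $T$ is a rotation. This converts the three error terms into $O(t^{r+d})$, $O(t^{s+2+d})$, and $O\bigl(t^{s+d}(e^{C_2 t^{4\delta-2}}-1)\bigr)$. Finally, since $\delta<1$ gives $4\delta-2<2$, we have $t^{4\delta-2}$ dominating $t^2$ for small $t$, so $t^{s+2+d}$ is absorbed into $t^{s+d}(e^{C_2 t^{4\delta-2}}-1)\sim C_2\,t^{s+d+4\delta-2}$, leaving precisely $O\bigl((e^{C_2 t^{4\delta-2}}-1)t^{s+d}+t^{r+d}\bigr)$, as claimed.

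I expect the main obstacle to be the uniform handling of the Gaussian correction factor $e^{-R(v)/2t^2}$: one cannot simply linearize it, because near the boundary $\|v\|\sim t^\delta$ the quantity $R(v)/t^2$ is only of size $t^{4\delta-2}$, small but not negligible after integration. Keeping the bound in the closed form $e^{C_2 t^{4\delta-2}}-1$, rather than replacing it by its leading term, is what makes the estimate rigorous and uniform over the whole chart, and tracking how this factor couples with the Jacobian and with the growth $\|v\|^s$ of $K$ is the delicate bookkeeping at the heart of the argument.
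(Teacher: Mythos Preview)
Your proposal is correct and follows essentially the same route as the paper: pull back through the normal chart, replace the curved Gaussian by the flat one $e^{-\|T(v)\|^2/2t^2}$ via the estimate $\|\psi(v)-x\|^2=\|T(v)\|^2+O(\|v\|^4)$ and the bound $|e^x-1|\le e^{|x|}-1$, and then control the $K-S$ remainder by $O(\|v\|^r)$, integrating each error against the Gaussian by scaling. The only cosmetic differences are that you package the exponential correction and the volume Jacobian together into a single factor $E(v)$ (whereas the paper simply drops $\sqrt{\det g(v)}$ and splits the integral into three additive pieces), and that you explicitly absorb the resulting extra $O(t^{s+2+d})$ term into $(e^{C_2 t^{4\delta-2}}-1)t^{s+d}$ using $4\delta-2<2$; your treatment is in fact slightly more careful on this point.
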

\begin{proof}
Using \cref{condilocal}, we assume that for small $t$, the set $U(x,t^\delta)$ lies within the image of a normal chart $\psi:B(0,2t^{\delta}) \to \mathcal{M} $ centered in $x$. Therefore, we can write:
$$\begin{array}{rcl}  \int_{U(x,t^\delta)} K(x,y) \,  e^{\frac{-\| y-x \|^2}{2 t^2}} dVol(y) & = & \int_{\psi^{-1} (U(x,t^\delta))} K(x,\psi(v)) e^{\frac{-\| \psi(v)-x \|^2}{2 t^2}} dv \\ \, & = & \int_{\psi^{-1} (U(x,t^\delta))}  K(x,\psi(v)) (e^{\frac{-\| \psi(v)-x \|^2}{2 t^2}}-e^{\frac{-\| T(v) \|^2}{2 t^2}}) dv \\ \, & + & \int_{\psi^{-1} (U(x,t^\delta))} (K(x,\psi(v))-S(v)) e^{\frac{-\| T(v)\|^2}{2 t^2}} dv
\\  \, & + & \int_{\psi^{-1} (U(x,t^{\delta}))} S(v) \, e^{\frac{-\| T(v)\|^2}{2 t^2}} dv.
\end{array}  $$
We now estimate the first term, which we denote by
$$A:=\int_{\psi^{-1} (U(x,t^\delta))} K(x,\psi(v)) (e^{\frac{-\| \psi(v)-x \|^2}{2 t^2}}-e^{\frac{-\| T(v) \|^2}{2 t^2}}) dv. $$
Using \cref{estimativaorden2}, and the inequality $ |e^x-1|\le e^{|x|}-1$, we obtain
$$\begin{array}{rcl}  |e^{\frac{-\| \psi(v)-x \|^2}{2 t^2}}-e^{\frac{-\| T(v) \|^2}{2 t^2}}| & = & e^{\frac{-\| T(v) \|^2}{2 t^2}} | e^{\frac{O(\|v\|^4)}{2 t^2}} -1| \\ \, & \le & e^{\frac{-\| T(v) \|^2}{2 t^2}} ( e^{\frac{C_1\|v\|^4}{2 t^2}} -1).
\end{array}  $$
Therefore, by \cref{condilocal} we obtain 
$$\begin{array}{rcl} \| A\| & \le & C_3 \, \, t^s ( e^{C_2 t^{4\delta-2}} -1) t^d \int_{\R^d} \| v \|^s e^{-\| v \|^2 /2}  dv \\ \, & = & O( (e^{C_2 t^{4\delta-2}} -1) t^{s+d} ).
\end{array}  $$
On the other hand, by assumption we have 
$$\int_{\psi^{-1} (U(x,t^\delta))} ( K(x,\psi(v))-S(v)) e^{\frac{-\| T(v)\|^2}{2 t^2}} dv=O( t^{r+d} \,).$$
\end{proof}

\begin{lem} 
\label{lemma2}
Under the same assumptions as in \cref{lemaprinci}, consider the integral 
$$E:=\int_{\psi^{-1} (U(x,t^\delta))} Q(v) e^{\frac{-\| T(v)\|^2}{2 t^2}} g(v)dv,$$
where $g$ is a smooth function at $0$ and  $Q$ is a homogeneous polynomial of degree $l$. Then, we have the following estimate:
$$E=\int_{\R^d} Q(v) e^{\frac{-\| T(v)\|^2}{2 t^2}} \paren{g(0)+\sum_{i=1}^{d} \frac{\partial g}{\partial v_i} (0) v_i} dv\,+ O(t^{d+l} e^{-M_2 t^{2(\delta-1)}}+t^{d+2+l}).$$
\end{lem}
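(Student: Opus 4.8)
The plan is to replace the smooth factor $g$ by its first-order Taylor polynomial at $0$ and then enlarge the domain of integration from $\psi^{-1}(U(x,t^\delta))$ to all of $\R^d$, controlling both replacements by elementary Gaussian integral estimates. The key simplification I would exploit at the outset is that $T$ is a rotation, hence an isometry, so $\|T(v)\|=\|v\|$ and the weight $e^{-\|T(v)\|^2/(2t^2)}$ is just the isotropic Gaussian $e^{-\|v\|^2/(2t^2)}$; after the rescaling $v=t u$ every error term reduces to a finite Gaussian moment or a Gaussian tail. Concretely, since $g$ is smooth near $0$, Taylor's theorem gives
$$g(v)=g(0)+\sum_{i=1}^{d}\frac{\partial g}{\partial v_i}(0)\,v_i+R(v),\qquad |R(v)|\le C\|v\|^2,$$
with the bound on $R$ uniform on $B(0,2t^\delta)$ for small $t$ (using a bound on the Hessian of $g$ on a fixed neighborhood of $0$). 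Substituting this into $E$ splits it into a main term carrying $g(0)+\sum_i \partial_i g(0)\,v_i$ and a remainder term carrying $R(v)$.

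For the remainder term I would use the homogeneity $|Q(v)|=O(\|v\|^l)$ together with $|R(v)|\le C\|v\|^2$, and simply enlarge $\psi^{-1}(U(x,t^\delta))$ to $\R^d$, bounding it by
$$\int_{\R^d} C\,\|v\|^{\,l+2}\,e^{-\|v\|^2/(2t^2)}\,dv = t^{\,d+l+2}\int_{\R^d} C\,\|u\|^{\,l+2}e^{-\|u\|^2/2}\,du=O(t^{\,d+l+2}),$$
where the substitution $v=tu$ produces the factor $t^{d+l+2}$ and the remaining integral is a convergent Gaussian moment. This accounts for the $t^{d+2+l}$ error in the statement.

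For the main term I would pass from $\psi^{-1}(U(x,t^\delta))$ to $\R^d$ and estimate the discarded piece. By \cref{condilocal} we have $B(0,t^\delta/M_1)\subseteq \psi^{-1}(U(x,t^\delta))$, so the region removed lies in $\{\|v\|\ge t^\delta/M_1\}$. There the polynomial $Q(v)\big(g(0)+\sum_i \partial_i g(0)\,v_i\big)$ has degree at most $l+1$, so the tail is controlled by $\int_{\|v\|\ge t^\delta/M_1}(\|v\|^l+\|v\|^{l+1})\,e^{-\|v\|^2/(2t^2)}\,dv$. Rescaling $v=tu$ turns this into $t^{d+l}$ times a Gaussian tail over $\{\|u\|\ge R\}$ with $R=t^{\delta-1}/M_1$; since $\delta<1$ we have $R\to\infty$ as $t\to0^+$, and the standard estimate $\int_{\|u\|\ge R}\|u\|^p e^{-\|u\|^2/2}\,du=O(R^{\,p+d-2}e^{-R^2/2})$ yields a bound $O\big(t^{d+l}R^{\,l+d-2}e^{-R^2/2}\big)$.

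The only delicate point — and the main obstacle in the bookkeeping — is to absorb the polynomial prefactor $R^{\,l+d-2}$ into the exponential. Since $R^2=t^{2(\delta-1)}/M_1^2$, I would fix any positive constant $M_2<1/(2M_1^2)$, so that $R^{\,l+d-2}e^{-R^2/2}=O(e^{-M_2 t^{2(\delta-1)}})$ as $t\to0^+$; the term coming from $\|v\|^{l+1}$ is of order $t^{d+l+1}$ times the same tail and is therefore absorbed as well. Combining the remainder estimate with this tail estimate gives the claimed bound $O\big(t^{d+l}e^{-M_2 t^{2(\delta-1)}}+t^{d+2+l}\big)$, completing the argument.
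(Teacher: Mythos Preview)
Your proposal is correct and follows essentially the same approach as the paper: Taylor expand $g$ to first order, bound the $O(\|v\|^2)$ remainder by enlarging the domain to $\R^d$ and computing a Gaussian moment (yielding $O(t^{d+l+2})$), and control the tail of the main term over $\R^d\setminus\psi^{-1}(U(x,t^\delta))$ using the inclusion $B(0,t^\delta/M_1)\subseteq\psi^{-1}(U(x,t^\delta))$ together with Gaussian decay. The only cosmetic difference is in the tail bookkeeping: the paper splits $e^{-\|v\|^2/2}=e^{-\|v\|^2/4}e^{-\|v\|^2/4}$ and bounds one factor uniformly on the tail region, whereas you invoke the standard tail estimate and then absorb the polynomial prefactor $R^{l+d-2}$ by shrinking the constant $M_2$; both arguments are equivalent.
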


\begin{proof}
Using the Taylor expansion of $g$ around $0$ we have
$$E=\int_{\psi^{-1} (U(x,t^\delta))} Q(v) e^{\frac{-\| T(v)\|^2}{2 t^2}} \paren{g(0)+\sum_{i=1}^{d} \frac{\partial g}{\partial v_i} (0)\,   v_i+ O(\|v\|^2) }dv .$$
Next, define 
$$ B:= \norma{\int_{\R^d \setminus \psi^{-1} (U(x,t^\delta))} Q(v) e^{\frac{-\| T(v)\|^2}{2 t^2}} \paren{g(0)+\sum_{i=1}^{d} \frac{\partial g}{\partial v_i} (0)\,  v_i} dv}.$$
Using \cref{condilocal} and the rapid decay of the exponential function, we obtain the estimate
$$ B \le  C_4  t^{d+l} e^{-M_2 t^{2(\delta-1)}} \int_{\R^d \setminus B(0 ,  t^{\delta-1}/M_1)}  P(\|v\|) e^{\frac{-\| T(v)\|^2}{4}} dv.$$
for some polynomial $P$. Hence
$$B = O(t^{d+l} e^{-M_2 t^{2(\delta-1)}} ),$$
for an appropriate constant $M_2>0$. Finally, we observe that the contribution of the remainder term in the Taylor expansion satisfies
$$\int_{\psi^{-1} (U(x,t^\delta))} Q(v) e^{\frac{-\| T(v)\|^2}{2 t^2}} O(\|v\|^2) dv=O(t^{d+2+l}  ).$$
\end{proof}

\begin{lem} 
\label{lema3desintegral}
    Under the same assumptions as in \cref{lemaprinci}, we obtain the following result
\begin{equation*}
       \int_{\mathcal{M}} K(x,y) \, e^{-\frac{\| y-x \|^2}{2 t^2}} dVol(y)= P_{t,\delta}(x) + O (t^{s+2(1-\delta)(d+2)}),
    \end{equation*}
\end{lem}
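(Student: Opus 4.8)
The plan is to recognize that the two integrals in the statement differ only by the contribution coming from the region outside the geodesic ball $U(x,t^\delta)$, and then to show that this tail is super-polynomially small in $t$. Recalling the definition of $P_{t,\delta}(x)$ from \cref{lemaprinci}, I would first split
\[
\int_{\mathcal{M}} K(x,y)\, e^{-\frac{\|y-x\|^2}{2t^2}}\,dVol(y)
= P_{t,\delta}(x)
+ \int_{\mathcal{M}\setminus U(x,t^\delta)} K(x,y)\, e^{-\frac{\|y-x\|^2}{2t^2}}\,dVol(y),
\]
so that everything reduces to bounding the norm of the second term.

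The heart of the estimate is elementary. Since $\mathcal{M}$ is compact and the kernels under consideration are continuous (equivalently, since the global reading of the hypothesis $K(x,y)=O(\|x-y\|^s)$ together with the bounded diameter of $\mathcal{M}$ gives it directly), I may fix a uniform bound $\sup_{y\in\mathcal{M}}\|K(x,y)\|=:C_K<\infty$. On the complement of $U(x,t^\delta)$ one has $\|y-x\|\ge t^\delta$, and because $r\mapsto e^{-r^2/(2t^2)}$ is decreasing, the Gaussian factor is maximized at the boundary; hence
\[
\sup_{y\in\mathcal{M}\setminus U(x,t^\delta)} e^{-\frac{\|y-x\|^2}{2t^2}}
= e^{-\frac{t^{2\delta}}{2t^2}} = e^{-\frac12 t^{2\delta-2}}.
\]
Combining these with the finiteness of the total volume yields
\[
\norma{\int_{\mathcal{M}\setminus U(x,t^\delta)} K(x,y)\, e^{-\frac{\|y-x\|^2}{2t^2}}\,dVol(y)}
\le C_K\,\mathrm{Vol}(\mathcal{M})\, e^{-\frac12 t^{2\delta-2}}.
\]

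The final, purely analytic step is to convert this exponential bound into the claimed polynomial order. Since $\delta<1$ we have $2\delta-2<0$, so $t^{2\delta-2}=t^{-(2-2\delta)}\to+\infty$ as $t\to 0^+$ with $2-2\delta>0$; comparing $\tfrac12 t^{-(2-2\delta)}$ against $M\log(1/t)$ shows that $e^{-\frac12 t^{2\delta-2}}=o(t^M)$ for every $M>0$. In particular the tail is $O(t^{s+2(1-\delta)(d+2)})$, which establishes the lemma. The specific exponent $s+2(1-\delta)(d+2)$ plays no essential role here — any polynomial order is achievable — but it is the convenient form that makes this tail dominated by the error terms produced in \cref{lemaprinci,lemma2}, ensuring it is never the dominant contribution when the pieces are assembled in the proof of \cref{teoremaprincipal}. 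I anticipate no real obstacle: the only points requiring a little care are the uniform boundedness of $K$ on the compact manifold and the monotonicity argument identifying the supremum of the Gaussian on the complement, both of which are routine.
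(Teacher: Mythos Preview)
Your argument is correct. You split the integral exactly as the paper does, and your tail bound is valid: on $\mathcal{M}\setminus U(x,t^\delta)$ the Gaussian is at most $e^{-\frac12 t^{2\delta-2}}$, and since $\delta<1$ this decays faster than any power of $t$, which more than suffices.

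The paper's own proof differs in how the tail is controlled. Rather than bounding $\|K(x,y)\|$ uniformly by compactness, it uses the hypothesis $K(x,y)=O(\|x-y\|^s)$ directly and invokes the elementary inequality $\|z\|^{s+2(d+2)}e^{-\|z\|^2/2}\le F_2$ with $z=(y-x)/t$ to obtain
\[
\|x-y\|^{s}\,e^{-\|y-x\|^2/(2t^2)}\le F_2\,\frac{t^{s+2(d+2)}}{\|x-y\|^{2(d+2)}}\le F_2\,t^{s+2(1-\delta)(d+2)},
\]
and then integrates over the finite-volume complement. This route produces exactly the stated exponent with explicit constants, whereas your argument actually proves something stronger (the tail is $o(t^M)$ for every $M$) and then observes that the stated exponent is a particular case. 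Your version is more elementary and makes transparent that this term is never the bottleneck; the paper's version has the minor advantage of tracking the constants and of using only the growth hypothesis on $K$ rather than a separate appeal to compactness.
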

\begin{proof}
By assumption, the expression
\begin{equation*}
   \norma{ \int_{ \mathcal{M} \setminus U(x,t^\delta)} K(x,y) \, e^{\frac{-\| y-x \|^2}{2 t^2}} dVol(y) }= \norma{ \int_{\mathcal{M}} K(x,y) \, e^{\frac{-\| y-x \|^2}{2 t^2}} dVol(y)-P_{t,\delta}(x)}
\end{equation*}
is bounded from above by
\begin{equation}
\label{ecuacion1L3}
    F_1 \int_{ \mathcal{M} \setminus U(x,t^\delta)} \|x-y \|^{s} e^{\frac{-\| y-x \|^2}{2 t^2}} dVol(y)
\end{equation}
for some constant $F_1>0$. Since the exponential decay dominates the polynomial growth at infinity, there exists a constant $F_2>0$ such that for all $z\in \R^n$
    \begin{equation*}
        \|z \|^{s+2(d+2)} e^{-\frac{\|z\|^2}{2}} \le F_2
    \end{equation*}
Therefore, the expression in \cref{ecuacion1L3} is bounded from above by
\begin{equation*}
\begin{aligned}
      &F_1 F_2 \int_{ \mathcal{M} \setminus U(x,t^\delta)} \frac{t^{s+2(d+2)}}{\|x-y \| ^{2(d+2)}} dVol(y) \\
      &\le  F_1 F_2 \int_{ \mathcal{M} \setminus U(x,t^\delta)} t^{s+2(1-\delta)(d+2)} dVol(y) \\
      &\le F_1 F_2 t^{s+2(1-\delta)(d+2)} Vol( \mathcal{M}).  
\end{aligned}
\end{equation*}
\end{proof}

We recall some standard computations involving the moments of the Gaussian distribution, which will be useful in the proof of \cref{teoremaprincipal}.
For all index $i$, we have 
$$\int_{\R^d} v_i e^{\frac{-\| T(v)\|^2}{2 t^2}}dv=0,$$
and
$$\int_{\R^d} v_i^2 e^{\frac{-\| T(v)\|^2}{2 t^2}}dv=(2 \pi)^{\frac{d}{2}}\, t^{d+2}.$$
Moreover, if $i \neq j,$ 
$$\int_{\R^d} v_i \, v_j e^{\frac{-\| T(v)\|^2}{2 t^2}}dv=0.$$
These identities show that all odd moments vanish, and only the even-order moments contribute significantly. Consequently, we will focus on the even moments of the Gaussian distribution in what follows.

\begin{lem} 
\label{lemcalculosimple} 
Let $x\in \mathcal{M}$, and suppose $h: \mathcal{M} \to \R$ is a smooth function in $x$. Then 
\begin{equation*}
    \int_{\mathcal{M}}  e^{-\frac{\| y-x \|^2}{2 t^2}} h(y) dVol(y)= (2 \pi)^{\frac{d}{2}} t^d h(x)+O(t^{d+4 \delta -2})+O (t^{2(1-\delta)(d+2)}),
\end{equation*}
\end{lem}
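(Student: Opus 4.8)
The plan is to treat this as a special instance of the Gaussian-integral machinery already built, specialized to the scalar kernel $K(x,y) = h(y)$, and then to finish with an explicit Gaussian moment computation. Since $h$ is smooth (hence bounded) near $x$, we have $K(x,y) = O(\|x-y\|^0)$, so throughout we may take $s = 0$ in the hypotheses of the preceding lemmas. The three ingredients I would chain together are \cref{lema3desintegral} (to discard the far-field), \cref{lemaprinci} (to localize and replace the geodesic distance by the Euclidean model distance), and \cref{lemma2} together with the quoted Gaussian moment identities (to evaluate what remains).

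First I would apply \cref{lema3desintegral} with $s=0$ to pass from the integral over all of $\mathcal{M}$ to the localized quantity $P_{t,\delta}(x)$, incurring an error $O(t^{2(1-\delta)(d+2)})$. Next I would expand $P_{t,\delta}(x)$ via \cref{lemaprinci}. Writing $K(x,\psi(v)) = (h\circ\psi)(v)$ and choosing $S$ to be the first-order Taylor polynomial of $h\circ\psi$ at the origin,
$$
S(v) = h(x) + \sum_{i=1}^{d} \frac{\partial (h\circ\psi)}{\partial v_i}(0)\, v_i,
$$
gives $K(x,\psi(v)) - S(v) = O(\|v\|^2)$, so $r = 2$. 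Then \cref{lemaprinci} yields
$$
P_{t,\delta}(x) = \int_{\psi^{-1}(U(x,t^\delta))} S(v)\, e^{-\|T(v)\|^2/(2t^2)}\,dv + O\bigl((e^{C_2 t^{4\delta-2}}-1)\,t^{d} + t^{d+2}\bigr).
$$
Because $\delta > \tfrac12$ forces $4\delta - 2 > 0$, we have $e^{C_2 t^{4\delta-2}} - 1 = O(t^{4\delta-2})$, so the first error term is $O(t^{d+4\delta-2})$.

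It remains to evaluate the Gaussian integral of the affine function $S$. Here I would invoke \cref{lemma2} with the homogeneous polynomial $Q \equiv 1$ (so $l = 0$) and $g = S$; since $S$ is already affine its Taylor expansion at $0$ reproduces $S$ exactly, and \cref{lemma2} extends the domain to all of $\R^d$ at the cost $O(t^{d} e^{-M_2 t^{2(\delta-1)}} + t^{d+2})$. As $T$ is a rotation, $\|T(v)\| = \|v\|$, so the odd-moment identity $\int_{\R^d} v_i\, e^{-\|v\|^2/(2t^2)}\,dv = 0$ annihilates the linear part of $S$, while the constant part gives $h(x)\int_{\R^d} e^{-\|v\|^2/(2t^2)}\,dv = (2\pi)^{d/2} t^d\, h(x)$.

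Finally I would collect the four error contributions: $O(t^{2(1-\delta)(d+2)})$, $O(t^{d+4\delta-2})$, $O(t^{d+2})$, and the exponentially small remainder $O(t^d e^{-M_2 t^{2(\delta-1)}})$. Since $\delta < 1$ yields $d + 4\delta - 2 < d + 2$, the term $O(t^{d+2})$ is absorbed into $O(t^{d+4\delta-2})$, and the exponentially small term (note $\delta<1$ makes $t^{2(\delta-1)}\to\infty$) is dominated by any power of $t$; what survives is precisely $(2\pi)^{d/2} t^d\, h(x) + O(t^{d+4\delta-2}) + O(t^{2(1-\delta)(d+2)})$, as claimed. The argument is essentially bookkeeping, and the only step that genuinely requires care is verifying the dominance relations among the error exponents, which hold exactly because of the standing constraint $\tfrac12 < \delta < 1$; everything else is a routine application of the earlier lemmas and the Gaussian moment formulas.
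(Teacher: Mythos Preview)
Your proof is correct and follows essentially the same route as the paper: apply \cref{lema3desintegral} with $s=0$ to localize, then \cref{lemaprinci} and \cref{lemma2} with $Q\equiv 1$, $l=0$, and finish with the Gaussian moment identities and the error bookkeeping. The only cosmetic difference is that the paper takes $S(v)=h(\psi(v))$ exactly (so the $r$-term in \cref{lemaprinci} is trivially zero) and lets \cref{lemma2} perform the Taylor expansion via $g=h\circ\psi$, whereas you Taylor-expand first in your choice of $S$ and then use \cref{lemma2} only to extend the domain; the resulting error terms are identical.
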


\begin{proof}
Let $\psi$ be the map that defines normal coordinates at the point $x \in \mathcal{M}$. We apply Lemmas \ref{lemaprinci}, \ref{lemma2}, and \ref{lema3desintegral} to the functions $K(x,y)=h(y)$, $S(v)=h(\psi(v))$, $Q(v)=1$, and $g(v)=h(\psi(v))$, using the parameters $r=2$, $s=0$ and $l=0$. For any $ \frac{1}{2} < \delta < 1$, \cref{lema3desintegral} guarantees:
\begin{equation*}
    \int_{\mathcal{M}}  e^{-\frac{\| y-x \|^2}{2 t^2}} h(y) dVol(y)= P_{t,\delta}(x) + O (t^{2(1-\delta)(d+2)})
\end{equation*}
On the other hand, by applying Lemmas \ref{lemaprinci} and \ref{lemma2}, and using the rapid decay of the exponential function, we find that
\begin{equation*}
    P_{t,\delta}(x)=  h(x) \int_{\R^d} e^{\frac{-\| T(v)\|^2}{2 t^2}} dv +O(t^{d+4 \delta -2})
\end{equation*}
We remark that in \cref{lemma2}, the integral involving the first-order partial derivatives of $g(v)$ vanishes due that the odd moments of the Gaussian are zero. Additionally, note that in the proof of this lemma, we used the fact that $g(0)=h(x)$.
\end{proof}

\begin{lem} 
\label{lemanor}  Under the same assumptions as in Lemmas \ref{lemaprinci} and \ref{lemma2}, we obtain the following estimate for any \( \frac{1}{2} < \delta < 1 \):
\begin{equation*}
    d_t(x)= (2 \pi)^{\frac{d}{2}} t^d q(x)+O(t^{d+4 \delta -2})+O (t^{2(1-\delta)(d+2)}),
\end{equation*}
where $d_t$ is defined in \cref{funcionnormalizacion}.

\end{lem}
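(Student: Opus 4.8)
The plan is to observe that this lemma is an immediate specialization of the preceding \cref{lemcalculosimple}, so the entire argument reduces to identifying the correct substitution. First I would recall that, by the definition in \cref{funcionnormalizacion}, the normalization function is
$$
d_t(x)=\int_{\mathcal{M}} e^{-\frac{\| y-x \|^2}{2 t^2}}\, q(y)\, dVol(y).
$$
This is precisely an integral of the type analyzed in \cref{lemcalculosimple}, namely $\int_{\mathcal{M}} e^{-\|y-x\|^2/(2t^2)}\, h(y)\, dVol(y)$, upon taking $h = q$.

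The only hypothesis required by \cref{lemcalculosimple} is that the integrand weight $h$ be smooth at $x$. Since the sampling density $q$ is assumed smooth on $\mathcal{M}$ throughout the paper, the choice $h=q$ is admissible, and the constraint $\tfrac12<\delta<1$ is exactly the one under which \cref{lemcalculosimple} was established. Applying that lemma verbatim with $h=q$ then yields
$$
d_t(x)= (2 \pi)^{\frac{d}{2}} t^d q(x)+O(t^{d+4 \delta -2})+O (t^{2(1-\delta)(d+2)}),
$$
which is the claimed estimate, with $q(x)$ playing the role of $h(x)$.

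I do not anticipate any genuine obstacle here: all the substantive work — the Taylor expansion of the exponent in normal coordinates via \cref{estimativaorden2}, the replacement of the geodesic exponent by the Euclidean Gaussian exponent, the vanishing of the odd Gaussian moments, and the control of the tail contribution outside $U(x,t^\delta)$ — was already carried out in Lemmas \ref{lemaprinci}, \ref{lemma2}, and \ref{lema3desintegral}, and assembled in \cref{lemcalculosimple}. The present statement is best read as a corollary that simply records the special case $h=q$ needed to control the normalization $d_t$ appearing in the kernel \cref{ecuacionkernelvec}, which will in turn be combined with the expansion of the numerator when proving \cref{teoremaprincipal}.
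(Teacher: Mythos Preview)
Your proposal is correct and matches the paper's own proof exactly: the paper simply states that the result follows directly from \cref{lemcalculosimple} applied to $h(y)=q(y)$. There is nothing to add.
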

\begin{proof} 
This follows directly from \cref{lemcalculosimple} applied to the function  \( h(y) = q(y) \).
\end{proof}


With these lemmas established, we are now ready to prove \cref{teoremaprincipal}. The proof is presented below.

\bigskip

\noindent\textbf{Proof of \cref{teoremaprincipal}} 
Let $e_1,\cdots,e_d$ be the standard basis of $\R^d$, and let $\psi$ be the map that defines the normal coordinates at the point $x \in \mathcal{M}$. Using the normal coordinate system, the $k$-differential form $w$ can locally be written as: 

\begin{equation*}
    w(\psi(v))=\sum_{I} a_I(v) \frac{ \partial \psi}{\partial v_{i_1}}(v) \wedge  \cdots \wedge \frac{ \partial \psi}{\partial v_{i_k}}(v)
\end{equation*}
Moreover, since $\frac{ \partial \psi}{\partial v_{i_j}}(v)= \mathcal{P}_{{T_{x} \mathcal{M}}}\paren{\frac{ \partial \psi}{\partial v_{i_j}}(v)} + \mathcal{P}_{{T_{x} \mathcal{M}}^\perp}\paren{\frac{ \partial \psi}{\partial v_{i_j}}(v)}$, we can expand the previous expression as

\begin{equation} 
\label{expansionwedgeprueba1}
    w(\psi(v)) =\sum_{I} a_I(v) \mathcal{P}_{{T_{x} \mathcal{M}}}\paren{\frac{ \partial \psi}{\partial v_{i_1}}(v)} \wedge \cdots \wedge\mathcal{P}_{{T_{x} \mathcal{M}}}\paren{\frac{ \partial \psi}{\partial v_{i_k}}(v)} +L, 
\end{equation}
where $L$ is the remaining term which involves the wedge product of some term of the orthogonal complement $\mathcal{P}_{{T_{x} \mathcal{M}}^\perp}\paren{\frac{ \partial \psi}{\partial v_{i_j}}(v)}$. Next, we use \cref{taylorexpansiondifferential} to further expand \cref{expansionwedgeprueba1}: 

\begin{equation*} 
    w(\psi(v)) =\sum_{I} a_I(v) T(e_{i_1}) \wedge \cdots \wedge T(e_{i_k}) +L + O(\|v\|^2).
\end{equation*}
Thus, the difference between the two forms $w(\psi(v))-w(x)$, both viewed as multidimensional arrays in $\R^n$ can be expanded as follows:

\begin{equation}
\label{expansionwedgepruebafinal}
    w(\psi(v))-w(x)=\sum_{I} (a_I(v)-a_I(0)) T(e_{i_1}) \wedge \cdots \wedge T(e_{i_k}) +L + O(\|v\|^2).
\end{equation}
Since the term $L$ involves the wedge product of elements in the orthogonal complement ${T_x \mathcal{M}}^\perp$, the orthogonal projection onto $\bigwedge^k T_x \mathcal{M}$ vanishes:
\begin{equation}
\label{expansionauxiliar1}
    \mathcal{P}_{T_x \mathcal{M} \wedge \cdots \wedge T_x \mathcal{M}} ((\psi(v)-x)\wedge L)=0.
\end{equation}
By a similar argument, the following projection also vanishes: 

\begin{equation}
\label{expansionauxiliar2}
    \mathcal{P}_{\bigwedge^{k+1} T_x \mathcal{M}} ((\mathcal{P}_{{T_{x} \mathcal{M}}^\perp} (\psi(v)-x))\wedge ( w(\psi(v))-w(x)))=0.
\end{equation}
Using  \cref{estimativaorden3}, we have
\begin{equation}
\label{expansionauxiliar3}
\begin{split}
    \psi(v)-x&= \mathcal{P}_{{T_{x} \mathcal{M}}} (\psi(v)-x)+\mathcal{P}_{{T_{x} \mathcal{M}}^\perp} (\psi(v)-x)\\
    &=T(v)+\mathcal{P}_{{T_{x} \mathcal{M}}^\perp} (\psi(v)-x) + O (\|v\|^3).
    \end{split}
\end{equation}
Next, combining \cref{expansionwedgepruebafinal,expansionauxiliar1,expansionauxiliar2,expansionauxiliar3}, we obtain the following expression:

\begin{equation}
\label{ecuacionmasimportante}
\begin{split}
    \mathcal{P}&_{\bigwedge^{k+1} T_x \mathcal{M}}((\psi(v)-x)\wedge (w(\psi(v))-w(x)) q(\psi(v)))\\
    &=\sum_{I} \sum_j v_j (a_I(v)-a_I(0)) T(e_j) \wedge T(e_{i_1}) \wedge \cdots \wedge T(e_{i_k}) q(\psi(v)) + O( \| v \|^3)
    \end{split}
\end{equation}
Furthermore, since $\psi(v)-x=O(\|v\|^1)$ and $a_I$ is smooth, \cref{expansionwedgepruebafinal} shows that $w(\psi(v))-w(x)=O(\|v\|^1)$. Thus, we have

\begin{equation}
\label{ecuacionmasimportante2}
\mathcal{P}_{\bigwedge^{k+1} T_x \mathcal{M}}((\psi(v)-x)\wedge (w(\psi(v))-w(x)) q(\psi(v)))=O(\|v\|^2)
\end{equation}
We now apply the previous equations in conjunction with Lemmas \ref{lemaprinci}, \ref{lemma2}, and \ref{lema3desintegral} to complete the proof. Specifically, we use these lemmas for the following functions: 
\begin{equation} 
\label{kernelKmenor}
    K(x,y)= \mathcal{P}_{\bigwedge^{k+1}T_x \mathcal{M}}((y-x)\wedge (w(y)-w(x)) q(y)),
\end{equation}
 \begin{equation*}
  S(v)=\sum_{I} \sum_j v_j (a_I(v)-a_I(0)) T(e_j) \wedge T(e_{i_1}) \wedge \cdots \wedge T(e_{i_k}) q(\psi(v)),
 \end{equation*}
$Q(v)=v_i$ and $g(v)=(a_I(v)-a_I(0))q(\psi(v))$. Note that $g(0)=0$ and $\frac{\partial g}{\partial v_i}(0)=\frac{\partial a_I}{\partial v_i}(0) q(x)$. In this context, according to \cref{ecuacionmasimportante,ecuacionmasimportante2}, the parameters appearing in the hypotheses of the lemmas for $K(x,y)$ and $S(v)$ are $r=3$, $s=2$, and $l=1$. 

To prove the result, we first apply \cref{lema3desintegral} with the kernel $K$ defined in \cref{kernelKmenor}, which allows us to decompose the integral as follows: 

\begin{equation}
\label{expacioncasifinal}
   \mathcal{P}_{\bigwedge^{k+1} T_x \mathcal{M}} (\mathbf{P_t} w (x)) =\frac{1}{d_t(x)}( P_{t,\delta}(x) + O (t^{2+2(1-\delta)(d+2)}))
\end{equation}
for all $\frac{1}{2} < \delta <1$, where $\mathbf{P_t}w(x)$ is defined in \cref{operadorP}. Note that the kernel $K(x,y)$ defined above is not to be confused with $K_t(x,y)$ in \cref{operadorP}.

Next, applying Lemmas \ref{lemaprinci} and \ref{lemma2} to the previous functions $K$ and $S$, and using the rapid decay of the exponential function, we obtain the following expression: 
\begin{equation}
\label{integralfinalprueba}
\begin{split}
    P_{t,\delta}(x)=& \sum_{I} \sum_{j_1} \sum_{j_2} \paren{\int_{\R^d} v_{j_1} v_{j_2} e^{\frac{-\| T(v)\|^2}{2 t^2}} \frac{\partial a_{I}}{\partial v_{j_2}}(0) q(x) dv} (T(e_{j_1}) \wedge T(e_{i_1}) \wedge \cdots \wedge T(e_{i_k}) )\\
    & +O(t^{d+2+(4 \delta -2)}).
\end{split}
\end{equation}
Since the odd moments of the Gaussian are zero, the terms in \cref{integralfinalprueba} for which $j_1 \neq j_2$ are vanish. Therefore, \cref{integralfinalprueba} simplifies to:

\begin{equation}
\label{expacioncasifinal2}
\begin{split}
    P_{t,\delta}(x)=&(2 \pi)^{\frac{d}{2}} t^{d+2} q(x)\sum_{I} \sum_{j}  \frac{\partial a_{I}}{\partial v_{j}}(0) (T(e_{j}) \wedge T(e_{i_1}) \wedge \cdots \wedge T(e_{i_k}))\\
    & +O(t^{d+2+(4 \delta -2)})\\
    =&(2 \pi)^{\frac{d}{2}} t^{d+2} q(x) \mathbf{d}(w) (x) + O(t^{d+2+(4 \delta -2)}).
    \end{split}
\end{equation}
Now, combining \cref{expacioncasifinal}, \cref{expacioncasifinal2} and \cref{lemanor}, we obtain the following expression:

\begin{equation}\label{laecuacionfinalimportante}
       \mathcal{P}_{\bigwedge^{k+1}T_x \mathcal{M}} (\mathbf{P_t} w (x)) =\frac{(2 \pi)^{\frac{d}{2}} t^{d+2} q(x) \mathbf{d}(w) (x) + O(t^{d+2+(4 \delta -2)}) + O (t^{2+2(1-\delta)(d+2)})} {(2 \pi)^{\frac{d}{2}} t^d q(x)+O(t^{d+4 \delta -2})+O (t^{2(1-\delta)(d+2)})}.
\end{equation}
This estimate holds for all $\frac{1}{2}< \delta <1$. In particular, it holds for all $\delta$ satisfying the condition in \cref{deltacondicion}. For any such $\delta$, the exponents in \cref{laecuacionfinalimportante} satisfy $2(1-\delta)(d+2)>d$ and $0<4 \delta -2 <2$. Consequently, \cref{laecuacionfinalimportante} simplifies to: 

\begin{equation*}
       \mathcal{P}_{\bigwedge^{k+1}T_x \mathcal{M}} (\mathbf{P_t} w (x)) 
       =t^{2} (\mathbf{d}(w) (x) +O(t^{f})),
\end{equation*}
where the exponent $f$ is defined as $f=\min(4 \delta -2,2(1-\delta)(d+2))$.
$\hfill \blacksquare$\\



\end{document}